\title{Kernelized Diffusion Maps}
\author{Loucas Pillaud-Vivien$^{1, 2}$}
\address{$^1$Courant Institute of Mathematical Sciences, New York University, New York.}
\address{$^2$Center for Computational Mathematics, Flatiron Institute, Simons Foundation, New York.}
\author{Francis Bach$^3$}
\address{$^3$Inria, Ecole Normale Supérieure,
PSL Research University.}
\definecolor{Bea_1}{RGB}{71,127,124}
\definecolor{Bea_2}{RGB}{128, 181, 184}
\definecolor{Bea_3}{RGB}{15, 158, 179}
\definecolor{Bea_4}{RGB}{210, 77, 4}
\begin{document}

\begin{abstract}%
   Spectral clustering~\cite{ng2001spectral} and diffusion maps~\cite{COIFMAN20065} are celebrated dimensionality reduction algorithms built on eigen-elements related to the diffusive structure of the data. The core of these procedures is the approximation of a Laplacian through a graph kernel approach~\cite{hein2007graph}, however this local average construction is known to be cursed by the high-dimension $d$. In this article, we build a different estimator of the Laplacian, via a reproducing kernel Hilbert space method, which adapts naturally to the regularity of the problem. We provide non-asymptotic statistical rates proving that the kernel estimator we build can circumvent the curse of dimensionality. Finally we discuss techniques (Nyström subsampling, Fourier features) that enable to reduce the computational cost of the estimator while not degrading its overall performance.
\end{abstract}

\maketitle

\section{Introduction}

One of the reasons of the success of learning with reproducing kernel Hilbert spaces (RKHS) is that they naturally select problem-adapted bases of test functions. Even more interestingly, leveraging the underlying regularity of the target function, RKHSs have the ability to circumvent the curse of dimensionality. This is exactly where all techniques resting on local averages fail: approximating a problem will always be cursed by the high-dimension $d$, because one will need $n^{-1/d}$ points to perform well. This main difference echoes in the nature of the kernels: \emph{pointwise positive kernels} in the non-parametric estimation literature~\cite{nadaraya1964estimating} and \emph{positive semi-definite (PSD) kernels} in modern kernel learning~\cite{steinwart2008support,smola-book}.

Solving a problem with PSD kernels that used to be tackled with local techniques is at the heart of this work. Indeed, we estimate the diffusion operator (or Laplacian) related to a measure $\mu$ through its principal eigen-elements. When cast into an \emph{unsupervised} learning problem, this can be seen as a dimensionality reduction technique resting on the diffusive nature of the data. This is the core of the celebrated spectral clustering algorithm~\cite{von2007tutorial} and of diffusion maps~\cite{COIFMAN20065} in the context of molecular dynamics. However, as introduced before, these algorithms are based on graph Laplacians; an intrinsically local construction that scales poorly with the dimension~\cite{hein2007graph} and does not benefit from all the recent works on PSD kernels that tackle potential high-dimensional settings~\cite{martinsson2020randomized, meanti2020kernel}.

Let us explain the fundamental difference between the approach of this work and that of graph Laplacians. When we want to estimate the diffusion operator (or its eigenvectors)
\begin{align}
\mathcal{L}:= -\Delta + \langle \nabla V, \nabla \cdot \rangle,
\end{align}
one of the difficult aspects is to approximate differential operators. While currently, people use local kernel smoothing techniques, our approach is different. It leverages the reproducing property of derivatives in RKHS and the self-adjointness of $\mathcal{L}$ to circumvent this difficulty: this strategy has shown fruitful results in numerical analysis for partial differential equations, where it is called \emph{meshless methods}~\cite{schaback2006kernel}.

In another direction, it is interesting to note that~\cite{salinelli1998} tried to show that considering the first eigenvectors of $\mathcal{L}$ was \emph{the good way} of generalizing the principal components analysis procedure~\cite{pearson1901liii,hotelling1933analysis} \textit{in a non-linear fashion}. At this time, (i) neither the theory behind diffusions and weighted Sobolev spaces (ii) nor the theory of RKHS were mature. Hence, he clearly explained (i) that the theoretical framework of his analysis was limited but could be extended, and (ii) that at this point solving numerically the problem was impossible in high-dimension as it necessitates to discretize the Laplacian. Quite surprisingly, the literature on graph Laplacians seems to have overlooked Salinelli's seminal contribution. Our work can be considered as a natural continuation of his: pushing further the theoretical comprehension of this \emph{non-linear principal component analysis} with modern tools and giving a way to solve it efficiently.

Note also that this work has been motivated by applications in \textit{molecular dynamics} where diffusion maps is an important dimensional reduction technique to find reaction coordinates, i.e., the slow diffusion modes of the high-dimensional dynamics~\cite{Boaz2006}. This article can also be seen as a natural extension of~\cite{pillaud2020statistical} whose aim was to estimate the first non-zero eigenvalue of $\mathcal{L}$ (this is saying, its spectral gap). Besides being more mature, the focus of this work is quite different: we focus here on the estimation of the whole spectrum of $\mathcal{L}$ and try to be more precise regarding its convergence properties. Finally remark that the procedure we are going to describe can be seen as a data estimation of the \textit{Koopman generator} of the dynamics generated by $\mathcal{L}$~\cite{klus2020data}, that leverages crucially it self-adjointness property.  

\section{Diffusion operator}
\label{sec:DO}

Consider a probability measure $d\mu$ on $\R^d$ which has a density with respect to the Lebesgue measure and can be written under the following form: $d\mu(x) = e^{-V(x)}dx$, where $V$ is called the \textit{potential function}. Consider $H^1(\mu)$ the subspace of functions of $L^2(\mu)$ (i.e., which are square integrable) that also have all their first order derivatives in $L^2$, that is, $H^1(\mu) = \{f \in L^2(\mu),\ \int_{\R^d}f^2 d\mu+\int_{\R^d}\|\nabla f\|^2 d\mu < \infty \}$, where $\nabla f$ is the gradient of $f$ and $\|\cdot\|$ the standard Euclidean norm. 

The aim of this work is to estimate the diffusion operator $\mathcal{L}$, associated with measure $\mu$, given access to $x_1,\hdots,x_n$, i.i.d. samples distributed according to $\mu$. It is defined by 
\begin{align}
\label{eq:opeartor_L}
\mathcal{L} \phi := -\Delta \phi + \nabla V \cdot \nabla \phi,
\end{align}
where $\phi$ is a smooth enough test function.
\subsection{Langevin diffusion}

Let us consider the overdamped Langevin diffusion in $\R^d$, that is the solution of the following stochastic differential equation:
\begin{align}
\label{eq:langevin_dimensionality_reduction}
\mathrm{d}X_t = -\nabla V (X_t) \mathrm{d}t + \sqrt{2}\,\mathrm{d} B_t,
\end{align}
where $(B_t)_{t\geqslant0}$ is a $d$-dimensional Brownian motion. It is well-known \cite{bakry2014} that the law of $(X_t)_{t\geqslant0}$ converges to the Gibbs measure $d\mu$ and that the Poincaré constant (see Remark~\ref{prop:link_poinca} below) controls the rate of convergence to equilibrium in $L^2(\mu)$. Let us denote by $P_t (f) $ the Markovian semi-group associated with the Langevin diffusion $(X_t)_{t\geqslant0}$. It is defined in the following way: $P_t (f) (x) = \mathbb{E}[f(X_t)| X_0 = x]$. This semi-group satisfies the dynamics 
$$ \frac{d } {dt} P_t (f)= -\mathcal{L} P_t (f), $$
where $\mathcal{L} \phi = -\Delta \phi + \nabla V \cdot \nabla \phi$ is a differential operator called the infinitesimal generator of the Langevin diffusion \eqref{eq:langevin_dimensionality_reduction} ($\Delta$ denotes the standard Laplacian on $\R^d$). Note that by integration by parts, the semi-group $(P_t)_{t \geqslant 0}$ is reversible with respect to $d\mu$, that is: $\int f(\mathcal{L}g)\,d\mu = \int \nabla f \cdot \nabla  g\, d\mu = \int (\mathcal{L}f)g\,d\mu$. This also shows that $\mathcal{L}$ is a symmetric positive definite operator on $H^1(\mu)$.
\begin{remark}[Link with Poincaré constant]
\label{prop:link_poinca}
Let us call $\pi$ the orthogonal projector of $L^2(\mu)$ on constant functions: $ \pi f : x \in \R^d \mapsto \int f d\mu$ and define $L_0^2(\mu) := \ker\, \pi$.  Under Assumption~\ref{ass:spectral_gap} (see below) the first non-zero eigenvalue of $\mathcal{L}$ is:
\begin{align}
\label{eq:poinca_operator_dimensionality_reduction}
\Poinca^{-1} = \inf_{f \in (H^1(\mu) \cap L_0^2(\mu)) \setminus \{0\}}\frac{\langle f,\mathcal{L} f \rangle_{L^2(\mu)}}{\|f\|_{L^{2}(\mu)}^2},
\end{align}
where $\mathcal{P}$ is also known as the Poincaré constant of the distribution $d\mu$~\cite{Ane2000}.
\end{remark}
\subsection{Some useful properties of the diffusion operator}
\label{subsec:properties}
\subsubsection*{Positive semi-definiteness.} The first property that we saw is symmetry and positiveness of~$\mathcal{L}$ in~$H^1(\mu)$. It comes from the following integration by part identity:
\begin{align}
\label{eq:symmetry}
\int f(\mathcal{L}g)\,d\mu = \int \nabla f \cdot \nabla  g\, d\mu = \int (\mathcal{L}f)g\,d\mu,
\end{align}
showing that the quadratic form induced by $\mathcal{L}$ is also the Dirichlet energy $$\left\langle \mathcal{L} f, f \right\rangle_{L^2(\mu)}= \int \|\nabla f\|^2 d\mu=:\mathcal{E} (f). $$
\subsubsection*{Link with Schrödinger operator.} In the field of partial differential equations (PDEs) we say that an operator is of Schrödinger type if it is the sum of the Laplacian and a multiplicative operator, this comes from the fact that this is the type of operator that governs the dynamics of quantum systems~\cite{Helffer2005witten}. Here, let us define the Schrödinger operator $\widetilde{\mathcal{L}} := -\Delta + \mathcal{V}$, where $ \mathcal{V}:= \frac{1}{2} \Delta V - \frac{1}{4} \|\nabla V\|^2$. We can show that $\widetilde{\mathcal{L}}$ and $\mathcal{L}$ are conjugate to each other: indeed, a rapid calculation shows that 
\begin{align*}
\widetilde{\mathcal{L}} = e^{-V/2}\mathcal{L}\left[e^{V/2}\ \cdot\right].
\end{align*} 
As Schrödinger operators are well-studied, we can infer from this fact interesting properties on the spectrum of $\mathcal
{L}$. Indeed, 
\begin{align*}
(\lambda, u) \textrm{\ eigen-elements\ of\ } \widetilde{\mathcal{L}}\ \  \Leftrightarrow\ \   (\lambda, e^{V/2}u) \textrm{\ eigen-elements\ of\  }  \mathcal{L},
\end{align*}
and we also have the following equality for $f$ smooth enough:
\begin{align*}
 \frac{\langle \mathcal{L} f, f \rangle_{L^2(\mu)}}{\|f\|^2_{L^2(\mu)}} = \frac{\langle \widetilde{\mathcal{L}} f, f \rangle_{L^2(\R^d)}}{\|f\|^2_{L^2(\R^d)}}. 
\end{align*}
\subsubsection*{Spectrum of $\mathcal{L}$.} The most important property that we can infer from this is the nature of the spectrum of $\mathcal{L}$. Indeed, it is well known~\cite{reed2012methods} that if $\mathcal{V}$ is locally integrable, bounded from below and coercive ($\mathcal{V}(x) \longrightarrow +\infty$, when $\|x\|\to +\infty$), then the Schrödinger operator has a compact resolvent. In particular, we will assume throughout the article the following
\begin{assumption}[Spectrum of $\mathcal{L}$]
\label{ass:spectral_gap}
Assume that $\frac{1}{2} \Delta V(x) - \frac{1}{4} \|\nabla V\|^2 \longrightarrow +\infty$, when $\|x\|\to +\infty$.
\end{assumption}
\noindent Assumption~\ref{ass:spectral_gap} implies that $\mathcal{L}$ has a compact resolvent. This also implies that $\mathcal{L}$ has a purely discrete spectrum and a complete set of eigenfunctions. Note that this assumption implies also a spectral gap for the diffusion operator~$\mathcal{L}$ and hence that a Poincaré inequality holds. Throughout this work and even if not clearly stated, we will assume Assumption~\ref{ass:spectral_gap}. For further discussions on the spectrum of~$\mathcal{L}$, we refer to~\cite{bakry2014,Helffer2005witten}.

\section{Approximation of the diffusion operator in the RKHS}
\label{sec:Estimation_dimensionality_reduction}
Let $(\h, \langle \cdot, \cdot\rangle_{\h})$ be an RKHS with positive definite kernel $K$. Let us suppose the following:
\begin{assumption}[Universality] 
\label{ass:universality}
$\h$ is dense in $H^1(\mu)$.
\end{assumption}
Note that this is the case for most of the usual couples kernels/distribution: Gaussian, exponential kernels are universal \cite{micchelli2006universal} if $\mu$ has compact support or subgaussian tails. As the expression of the diffusion operator in Eq.~\eqref{eq:opeartor_L} involves derivatives of test functions, we will also need some regularity properties of the RKHS. Indeed, to represent $\nabla f$ in our RKHS we leverage crucially the partial derivative reproducing property of the kernel space. For this, we need:
\begin{assumption}[Smoothness]
\label{ass:smoothness}
$K$ is a positive definite kernel such that $K \in  \mathcal{C}^2(\R^d \times \R^d)$.
\end{assumption}
For $i \in \llbracket1, d\rrbracket$, denote by $\partial_i = \partial_{x^i}$ the partial derivative operator with respect to the $i$-th component of $x$. It has been shown \cite{Zhou2008} that under Assumption~\ref{ass:smoothness}, we can define $\h \ni \partial_i K_x: y \to\partial_{x^i} K (x,y) $ and that a partial derivative reproducing property holds true: $\forall f \in \h$ and $\forall x \in \R^d$, $\partial_i f(x) = \langle \partial_i K_x  , f \rangle_\h $. Hence, thanks to Assumption~\ref{ass:smoothness}, $\nabla f$ is easily represented in the RKHS. We also need some boundedness properties of the kernel.
\begin{assumption}[boundedness]
\label{ass:boundedness}
$K$ is a kernel such that $\forall x \in~\R^d, \,K (x,x) \leqslant \mathcal{K}$ and\footnote{The subscript $d$ in $\mathcal{K}_d$ accounts for the fact that this quantity is expected to scale linearly with $d$ (Gaussian kernel).} $ \left\|\nabla K_x\right\|^2 \leqslant \mathcal{K}_d$, where $\left\|\nabla K_x \right\|^2: = \sum_{i=1}^d\langle \partial_i K_x, \partial_i K_x \rangle = \sum_{i=1}^d \frac{\partial^2 K}{\partial x^i \partial y^i} (x,x)$ (see calculations below), $x$ and $y$ standing respectively for the first and the second variables of $(x,y) \mapsto K(x,y)$.
\end{assumption}
The equality in the expression of $\|\nabla K_x\|^2$ arises from the following computation: for all $x, y \in \R^d$, $\langle \partial_i K_x, \partial_i K_y \rangle = \partial_{x^i} \left( \partial_i K_y (x) \right) = \partial_{x^i} \partial_{y^i}  K(x,y)  $. Note that, for example, the Gaussian and exponential kernels satisfy Assumptions \ref{ass:universality}, \ref{ass:smoothness}, \ref{ass:boundedness}. Boundedness is stated here for the sake of clarity, however, up to logarithmic terms, the results of this paper would hold if we let $\|K_X\|^2, \|\nabla K_X\|^2$ be subgaussian random variables. 
\begin{example}[Gaussian kernel]
  \label{ex:rbf} 
  A prototypical example is the Gaussian kernel (or radial basis
  function), with bandwidth $\sigma > 0$, for which we can compute, for $i\neq j$,
  \begin{align*}
    K(x, y) &= \exp\left(-\frac{\|x-y\|^2}{2\sigma^2}\right),\hspace{0.7cm}
    \partial_{x^i} \partial_{y^j}  K(x,y) = - \frac{(x_i - y_i)(x_j -
      y_j)}{\sigma^4} K(x, y),
    \\
    \partial_{x^i} K(x, y) &= -\frac{(x_i - y_i)}{\sigma^2} K(x, y),
    \qquad
    \partial_{x^i} \partial_{y^i} K(x, y) = \left(\frac{1}{\sigma^2} -
      \frac{(x_i - y_i)^2}{\sigma^4}\right) K(x, y).
  \end{align*}
\end{example}
\subsection{Embedding the diffusion operator in the RKHS}

Let us define the following operators from $\h$ to $\h$: 
\begin{align}
\Cov &= \E_\mu \left[K_X \otimes K_X \right],\hspace*{1.5cm} \D = \E_\mu \left[\nabla K_X \otimes_d \nabla K_X \right],
\end{align}
where $\otimes$ is the standard tensor product: $\forall f,g, h \in \h$, $(f \otimes g) (h) = \langle g, h \rangle_{_\h} f$ and $\otimes_d$ is defined as follows:  $\forall f,g \in \h^d$ and $h \in \h$, $(f \otimes_d g) (h) = \sum_{i=1}^d \langle g_i, h\rangle_{_\h} f_i $. By the reproducing property of $K$, $(\h, \langle\cdot,\cdot\rangle_\h)$ injects canonically in $(L^2(\mu), \langle\cdot,\cdot\rangle_{L^2(\mu)})$ through an operator $\mathsf{S}$, together with its adjoint $\mathsf{S}^*$ defined from $L^2(\mu)$ to $\h$ such that for all $x \in \R^d$:
\begin{align*}
\forall f \in \h, \ \  \mathsf{S}f (x) &= \langle f, K_x\rangle_\h = f(x) ,\hspace*{1cm}  \forall f \in L^2(\mu),\ \  \mathsf{S}^*f (x) = \E_\mu \left[K(x,X)f(X) \right].
\end{align*}
Note that $\mathsf{S}^*\mathsf{S} = \Cov$. 
With these definitions, and thanks to the symmetry property of $\mathcal{L}$ derived in  Eq.~\eqref{eq:symmetry}, we can represent the diffusion operator $\mathcal{L}$ in the RKHS.
\begin{proposition}[Embedding of $\mathcal{L}$ in the RKHS]
\label{prop:embedding_L}
Suppose Assumptions \ref{ass:universality},\ref{ass:smoothness} hold, then
\begin{align}
\label{eq:diff_operator_in_rkhs}
\D = S^* \mathcal{L} S,
\end{align}
where the equality stands for the equality between operators of $\h$.
\end{proposition}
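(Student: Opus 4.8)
The plan is to establish the operator identity by showing that $\D$ and $S^*\mathcal{L}S$ induce the \emph{same bilinear form} on $\h$, and then to invoke the elementary fact that an operator on a Hilbert space is determined by its bilinear form: if $\langle g,Af\rangle_\h=\langle g,Bf\rangle_\h$ for all $f,g\in\h$, then $Af-Bf$ is orthogonal to all of $\h$, hence $A=B$.

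First I would unfold the left-hand side. Fix $f,g\in\h$. By the definition of $\otimes_d$ and linearity of the expectation, $\langle g,\D f\rangle_\h=\E_\mu\big[\sum_{i=1}^d\langle\partial_iK_X,f\rangle_\h\,\langle\partial_iK_X,g\rangle_\h\big]$. Applying the partial derivative reproducing property $\langle\partial_iK_x,h\rangle_\h=\partial_ih(x)$, which holds under Assumption~\ref{ass:smoothness} by \cite{Zhou2008}, each summand becomes $\partial_if(X)\,\partial_ig(X)$, so that $\langle g,\D f\rangle_\h=\E_\mu[\nabla f(X)\cdot\nabla g(X)]=\int_{\R^d}\nabla f\cdot\nabla g\,d\mu$. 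For the right-hand side, I would use the adjointness relation together with the fact that $S$ is the canonical injection, i.e. $Sh$ is nothing but $h$ read as an element of $L^2(\mu)$: hence $\langle g,S^*\mathcal{L}Sf\rangle_\h=\langle Sg,\mathcal{L}Sf\rangle_{L^2(\mu)}=\langle g,\mathcal{L}f\rangle_{L^2(\mu)}$, which by the integration-by-parts identity~\eqref{eq:symmetry} equals $\int_{\R^d}\nabla f\cdot\nabla g\,d\mu$ as well. Comparing the two computations gives $\langle g,\D f\rangle_\h=\langle g,S^*\mathcal{L}Sf\rangle_\h$ for all $f,g\in\h$, and the proposition follows.

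The step that requires genuine care — and which I expect to be the main obstacle to state cleanly — is the meaning of $\mathcal{L}Sf$ and the legitimacy of invoking~\eqref{eq:symmetry} for $f$. The identity~\eqref{eq:symmetry} is properly understood as the statement that $\mathcal{L}$ is the self-adjoint operator attached to the closable Dirichlet form $\mathcal{E}$ with form domain $H^1(\mu)$; accordingly $S^*\mathcal{L}S$ is safest to read as the operator on $\h$ with bilinear form $(f,g)\mapsto\int_{\R^d}\nabla(Sf)\cdot\nabla(Sg)\,d\mu$. So one must first check that $S$ maps $\h$ into $H^1(\mu)$. This follows from Assumption~\ref{ass:smoothness}: differentiation under the inner product yields $\nabla f(x)=(\langle\partial_iK_x,f\rangle_\h)_{i=1}^d$, hence $\|\nabla f(x)\|^2\leqslant\|f\|_\h^2\,\|\nabla K_x\|^2$ and $f(x)^2\leqslant\|f\|_\h^2\,K(x,x)$; combined with the integrability already implicit in the existence of the operators $\Cov$ and $\D$, this gives $Sf\in H^1(\mu)$ with $H^1(\mu)$-norm controlled by $\|f\|_\h$ up to a constant. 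Assumption~\ref{ass:universality} plays no role in the identity itself; it only guarantees that the injection $S$ is faithful (so that, later on, spectral information of $\mathcal{L}$ is not lost when passing to $\h$). Once the embedding $\h\hookrightarrow H^1(\mu)$ is recorded, everything reduces to the routine bilinear-form bookkeeping above.
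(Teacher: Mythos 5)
Your argument is essentially the paper's proof: the paper tests the identity against the kernel sections $K_z$ and uses the derivative reproducing property plus the integration-by-parts identity~\eqref{eq:symmetry} to turn $\langle \D f, K_z\rangle_\h$ into $\langle \mathsf{S}^*\mathcal{L}\mathsf{S}f, K_z\rangle_\h$, which is exactly your bilinear-form computation with $g$ ranging over $\h$ instead of the (dense span of) kernel sections. Your additional remarks — checking $S(\h)\subset H^1(\mu)$ via Assumptions~\ref{ass:smoothness}/\ref{ass:boundedness} and reading $S^*\mathcal{L}S$ through the Dirichlet form — are correct and slightly more careful about domains than the paper, but do not change the route.
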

\begin{proof}
For $z \in \R^d$, $f \in \h$,
\begin{align*}
\langle \D f, K_z \rangle_\h &= \int \nabla f(x) \cdot \nabla_x K(x,z) d\mu(x) \\
&= -\int \Delta f(x) K(x,z) d\mu(x) + \int \nabla f(x) \cdot \nabla V(x)  K(x,z) d\mu(x)  \\
&= \langle \mathcal{L} \mathsf{S} f, \mathsf{S} K_z \rangle_{L^2(\mu)} \\
&= \langle \mathsf{S}^*\mathcal{L} \mathsf{S} f, K_z \rangle_\h,
\end{align*}
hence the equality between operators. Note that to go from the first line to the second ones, we used the symmetry of $\mathcal{L}$.
\end{proof}
We want to construct an approximation of the eigen-elements of $\mathcal{L}$ with domain $H^1_0(\mu):=H^1(\mu) \cap L_0^2(\mu)$, where we recall that $L_0^2(\mu) = \ker \, \pi$ stands for the space of square integrable functions without the constants. Similarly, let us denote $\h_0 = (\ker\, \D)^\perp$, the subspace of $\h$ without the constant functions. Note that this operator is invertible as a consequence of the spectral gap Assumption~\ref{ass:spectral_gap}. In the following we will approximate the eigen-elements of $\mathcal{L}^{-1}$. First we give a representation of $\mathcal{L}^{-1}$ in the RKHS $\h$, then we construct an operator on $\h$ that has the same eigen-elements of $\mathcal{L}^{-1}$. Indeed, if we denote $\D^{-1}$ the inverse of $\D$  restricted on $\left(\ker \D\right)^\perp$, we have:
\begin{proposition}[Representation of $\mathcal{L}^{-1}$]
\label{prop:representation_diffusion}
Suppose Assumptions \ref{ass:spectral_gap},\ref{ass:universality},\ref{ass:smoothness} hold, then
\begin{align}
\mathcal{L}^{-1} = \mathsf{S} \D^{-1} \mathsf{S}^*,
\end{align}
where the equality stands for the equality between operators whose domains are $H^1_0(\mu)$.
\end{proposition}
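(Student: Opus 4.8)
The plan is to check directly, starting from the embedding identity $\D=\mathsf{S}^\ast\mathcal{L}\mathsf{S}$ of Proposition~\ref{prop:embedding_L}, that $\mathsf{S}\D^{-1}\mathsf{S}^\ast$ is the inverse of $\mathcal{L}$ on $H^1_0(\mu)$. Morally one simply wants to ``cancel'' $\mathsf{S}^\ast$ on the left and $\mathsf{S}$ on the right of that identity; since $\mathsf{S}$ is not boundedly invertible, the content of the proof is to make this legitimate, and the three ingredients that make it work are: $\mathsf{S}^\ast$ is injective, $\mathcal{L}$ is injective on $H^1_0(\mu)$, and $\D^{-1}$ is understood on $\h_0=(\ker\D)^\perp$, on which $\D$ is a bijection onto its (dense) range.

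I would first record the two injectivity facts. Since $\ker\mathsf{S}^\ast=(\mathrm{ran}\,\mathsf{S})^{\perp}$ inside $L^2(\mu)$ and $\mathrm{ran}\,\mathsf{S}=\h$ is dense in $L^2(\mu)$ (being dense even in $H^1(\mu)$ by Assumption~\ref{ass:universality}), the operator $\mathsf{S}^\ast$ is injective. And Assumption~\ref{ass:spectral_gap} supplies a spectral gap / Poincaré inequality (Remark~\ref{prop:link_poinca}), so $\mathcal{L}$ has trivial kernel on $H^1_0(\mu)$ and $\mathcal{L}^{-1}$ is well defined there. I would also note the continuous inclusion $\h\hookrightarrow H^1(\mu)$ — which follows from Assumptions~\ref{ass:smoothness},~\ref{ass:boundedness} via $\|\mathsf{S}f\|_{H^1(\mu)}^2\le(\mathcal{K}+\mathcal{K}_d)\|f\|_\h^2$ — so that the integration-by-parts identity~\eqref{eq:symmetry} used in Proposition~\ref{prop:embedding_L} applies to all of $\mathsf{S}(\h)$. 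The core of the argument is then the computation, valid whenever $\D^{-1}\mathsf{S}^\ast f\in\h_0$ is defined,
\[
\mathsf{S}^\ast\,\mathcal{L}\bigl(\mathsf{S}\D^{-1}\mathsf{S}^\ast f\bigr)=\bigl(\mathsf{S}^\ast\mathcal{L}\mathsf{S}\bigr)\D^{-1}\mathsf{S}^\ast f=\D\,\D^{-1}\mathsf{S}^\ast f=\mathsf{S}^\ast f ,
\]
by Proposition~\ref{prop:embedding_L}. Hence $\mathsf{S}^\ast(\mathcal{L}\,\mathsf{S}\D^{-1}\mathsf{S}^\ast f-f)=0$, and injectivity of $\mathsf{S}^\ast$ gives $\mathcal{L}\,\mathsf{S}\D^{-1}\mathsf{S}^\ast f=f$; thus $\mathsf{S}\D^{-1}\mathsf{S}^\ast$ is a right inverse of $\mathcal{L}$, and injectivity of $\mathcal{L}$ on $H^1_0(\mu)$ upgrades it to the genuine inverse, $\mathsf{S}\D^{-1}\mathsf{S}^\ast f=\mathcal{L}^{-1}f$.

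The step I expect to be the main obstacle is the domain bookkeeping that the last paragraph slides over. Since $\D$ is compact and positive (in fact trace class, $\mathrm{tr}\,\D=\E_\mu[\|\nabla K_X\|^2]\le\mathcal{K}_d$), $\D^{-1}$ is unbounded and $\D^{-1}\mathsf{S}^\ast f$ is only defined on the dense set $\{f:\mathsf{S}^\ast f\in\mathrm{ran}\,\D\}$; giving precise meaning to ``equality of operators whose domains are $H^1_0(\mu)$'' therefore requires identifying a dense common core inside $H^1_0(\mu)$ — naturally built from $\mathcal{L}$ applied to the smooth functions of $\mathsf{S}(\h_0)$ — on which the three displayed equalities are literally valid, and then closing/continuing the identity to all of $H^1_0(\mu)$ using the density from Assumption~\ref{ass:universality} and the continuity of $\mathcal{L}^{-1}$ on $L^2_0(\mu)$. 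The algebraic content is immediate from Proposition~\ref{prop:embedding_L}; all the care is functional-analytic.
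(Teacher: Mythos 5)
Your algebraic core is the transpose of the paper's: you verify that $\mathsf{S} \D^{-1} \mathsf{S}^*$ is a \emph{right} inverse of $\mathcal{L}$ by cancelling $\mathsf{S}^*$ on the left via its injectivity, whereas the paper verifies it is a \emph{left} inverse by evaluating $\mathsf{S} \D^{-1} \mathsf{S}^* \mathcal{L} \mathsf{S} f = \mathsf{S} \D^{-1} \D f = \mathsf{S} f$ for $f \in \h$. The paper's order is not cosmetic: there the argument fed to $\D^{-1}$ is $\D f$, which lies in $\ran\ \D$ by construction, so no domain issue arises on the dense set $\ran\ \mathsf{S}$. Your order forces the hypothesis $\mathsf{S}^* f \in \ran\ \D$, which is exactly the bookkeeping you flag; your proposed core $\mathcal{L}(\mathsf{S}(\h_0))$ does fix this (on it your computation reduces to the paper's), so up to that point the two arguments are essentially equivalent.

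The genuine gap is in how you close the argument. To pass from the identity on a dense core to all of $H^1_0(\mu)$ you invoke only ``the continuity of $\mathcal{L}^{-1}$ on $L^2_0(\mu)$.'' That is not sufficient: if two operators agree on a dense set and one of them is bounded, you cannot conclude they agree everywhere unless the other is also bounded (or closed with full domain). Here the other operator is $\mathsf{S} \D^{-1} \mathsf{S}^*$, which involves the unbounded $\D^{-1}$, and its boundedness on $L^2(\mu)$ is precisely the nontrivial estimate the paper supplies and you omit, namely
\begin{align*}
\left\| \mathsf{S} \D^{-1} \mathsf{S}^* \right\|_{L^2(\mu)} = \left\| \D^{-1/2} \mathsf{S}^* \mathsf{S} \D^{-1/2} \right\|_{\h} \leqslant \Poinca ,
\end{align*}
i.e., the Poincaré inequality granted by Assumption~\ref{ass:spectral_gap} is what legitimizes the density extension. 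Without this step your proof does not close. The remaining ingredients you list --- injectivity of $\mathsf{S}^*$ from density of $\ran\ \mathsf{S}$, injectivity of $\mathcal{L}$ on $H^1_0(\mu)$ from the spectral gap, and the use of Proposition~\ref{prop:embedding_L} --- are correct and consistent with the paper's argument.
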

Thanks to Proposition~\ref{prop:representation_diffusion}, we have a representation of $\mathcal{L}^{-1}$ in the RKHS \textit{through} the embedding~$S$. But what we really would like is an operator on $\h$ that as the same eigen-elements as $\mathcal{L}^{-1}$. Such a representation allows for numerical computations: this is the purpose of the following proposition.
\begin{theorem}[Eigen-elements of $\mathcal{L}^{-1}$ as functions in the RKHS]
\label{prop:eigen-elements_diffusion}
Decompose the inverse of the diffusion operator such that $\mathcal{L}^{-1} = \mathsf{S} \D^{-1} \mathsf{S}^* = \mathsf{S} \D^{-1/2} \D^{-1/2} \mathsf{S}^*$, then, 
\begin{enumerate}[label=(\roman*), itemsep = 0pt, topsep=5pt]
\item $\mathsf{S} \D^{-1/2}$ is a bounded operator from $\h_0$ to $H^1_0(\mu)$.
\item $\ \D^{-1/2} \CCov \D^{-1/2}$ is a self-adjoint compact operator on $\h_0$  with the same spectrum as $\mathcal{L}^{-1}$.
\item If $\lambda \neq 0$ is an eigenvalue of $\ \D^{-1/2} \CCov \D^{-1/2}$  with eigenvector $u \in \h_0$, then $\lambda$ is an eigenvalue of $\mathcal{L}^{-1}$ with eigenvector $\ \mathsf{S} \D^{-1/2}  u \in H^1_0(\mu)$.
\end{enumerate}
\end{theorem}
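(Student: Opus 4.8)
The plan is to package everything into the single operator $U:=\mathsf{S}\D^{-1/2}:\h_0\to L^2(\mu)$ and to read off the three claims from the two ``swapped'' products $U^*U$ and $UU^*$. The ingredients are the identity $\Cov=\mathsf{S}^*\mathsf{S}$ (already noted) and, from Proposition~\ref{prop:embedding_L}, $\langle\D f,g\rangle_\h=\langle\mathcal{L}\mathsf{S}f,\mathsf{S}g\rangle_{L^2(\mu)}=\langle\nabla\mathsf{S}f,\nabla\mathsf{S}g\rangle_{L^2(\mu)}$ for $f,g\in\h$, so in particular $\langle\D f,f\rangle_\h=\mathcal{E}(\mathsf{S}f)$. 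Since $\D$ is trace-class, hence compact, and injective on $\h_0=(\ker\D)^\perp$, the operator $\D^{-1/2}$ is densely defined and unbounded on $\h_0$ with domain $\mathrm{Ran}(\D^{1/2})$; all computations below are first carried out on this dense domain, writing $u=\D^{1/2}v$ with $v\in\h_0$, and then extended by continuity.

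For (i), with $u=\D^{1/2}v$ one has $\mathsf{S}\D^{-1/2}u=\mathsf{S}v$, hence
\[
\|\mathsf{S}\D^{-1/2}u\|_{H^1(\mu)}^2=\|\mathsf{S}v\|_{L^2(\mu)}^2+\mathcal{E}(\mathsf{S}v)=\langle\Cov v,v\rangle_\h+\langle\D v,v\rangle_\h=\langle\D^{-1/2}\Cov\D^{-1/2}u,u\rangle_\h+\|u\|_\h^2 .
\]
So (i) follows from the boundedness assertion contained in (ii), namely $\|\D^{-1/2}\Cov\D^{-1/2}\|_{\h_0\to\h_0}\le\Poinca$, which amounts to $\|\mathsf{S}v\|_{L^2(\mu)}^2\le\Poinca\,\mathcal{E}(\mathsf{S}v)$ for $v\in\h_0$. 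This is exactly the Poincaré inequality of Remark~\ref{prop:link_poinca}, and it is the crux of the theorem. The one delicate point is that the Poincaré inequality controls only $\|\mathsf{S}v-\pi\mathsf{S}v\|_{L^2(\mu)}^2$, so one has to use the restriction to $\h_0$ (equivalently, that $\mathsf{S}^*$ maps $L_0^2(\mu)$ into $\h_0$, and that the constant part $\pi\mathsf{S}v$ is absorbed into $\|u\|_\h$) to discard the projection term — this bookkeeping around constants is, I expect, the main obstacle. Granting it, $U$ extends to a bounded map $\h_0\to H^1(\mu)$ with $\|U\|^2\le 1+\Poinca$, and the same computation gives $\pi\mathsf{S}v=0$, so the range of $U$ lies in $H^1_0(\mu)$, which is (i).

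For (ii), $U:\h_0\to H^1_0(\mu)\hookrightarrow L^2(\mu)$ is now bounded, its adjoint is $U^*=\D^{-1/2}\mathsf{S}^*$, and therefore
\[
U^*U=\D^{-1/2}\mathsf{S}^*\mathsf{S}\D^{-1/2}=\D^{-1/2}\Cov\D^{-1/2},\qquad UU^*=\mathsf{S}\D^{-1/2}\D^{-1/2}\mathsf{S}^*=\mathsf{S}\D^{-1}\mathsf{S}^*,
\]
and the last operator equals $\mathcal{L}^{-1}$ by Proposition~\ref{prop:representation_diffusion}. Both are self-adjoint and positive, and $U^*U$ and $UU^*$ share the same non-zero spectrum with identical multiplicities. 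Under Assumption~\ref{ass:spectral_gap}, $UU^*=\mathcal{L}^{-1}$ is compact, so its non-zero spectrum consists of eigenvalues of finite multiplicity accumulating only at $0$; hence the spectrum of the self-adjoint operator $U^*U$ is contained in $\{0\}\cup\{\text{those eigenvalues}\}$, which by the spectral theorem forces $U^*U$ to be compact too, proving (ii). Finally (iii) is the usual eigenvector transfer: if $U^*Uu=\lambda u$ with $u\in\h_0$ and $\lambda\neq0$, then $UU^*(Uu)=U(U^*Uu)=\lambda\,Uu$ and $\|Uu\|_{L^2(\mu)}^2=\langle U^*Uu,u\rangle_\h=\lambda\|u\|_\h^2\neq0$, so $\mathsf{S}\D^{-1/2}u=Uu$ is a non-zero eigenvector of $\mathcal{L}^{-1}$ for the eigenvalue $\lambda$, lying in $H^1_0(\mu)$ by (i).

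Beyond the Poincaré step, the remaining work is purely functional-analytic: staying on the dense domain $\mathrm{Ran}(\D^{1/2})$, checking that $\Cov$ maps it into itself so that $\D^{-1/2}\Cov\D^{-1/2}$ is genuinely defined there, justifying $U^*=\D^{-1/2}\mathsf{S}^*$ (equivalently that $\mathrm{Ran}\,\mathsf{S}^*\subseteq\mathrm{dom}\,\D^{-1/2}$, which is itself equivalent to the boundedness of $U$), and keeping track of additive constants so that the image truly lands in $L_0^2(\mu)$ rather than merely in $L^2(\mu)$. The algebraic identities $\Cov=\mathsf{S}^*\mathsf{S}$, $\D=\mathsf{S}^*\mathcal{L}\mathsf{S}$, $UU^*=\mathcal{L}^{-1}$ and the $T^*T$–$TT^*$ spectral correspondence are all routine.
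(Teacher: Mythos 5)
Your argument is correct and rests on the same core mechanism as the paper — the $T^*T$ versus $TT^*$ correspondence for $T=\mathsf{S}\D^{-1/2}$, combined with Proposition~\ref{prop:representation_diffusion} and the Poincaré bound $\|\D^{-1/2}\CCov\D^{-1/2}\|\leqslant \Poinca$ — but the execution differs from the paper's in two useful ways. The paper proves an abstract Lemma (Lemma~\ref{le:reversement_operator}): assuming $\mathsf{A}^*\mathsf{A}$ compact, it gets boundedness of $\mathsf{A}$ from $\|\mathsf{A}u\|^2=\langle \mathsf{A}^*\mathsf{A}u,u\rangle$ and transfers compactness through the explicit identity $(\mathsf{A}\mathsf{A}^*)^2=\mathsf{A}(\mathsf{A}^*\mathsf{A})\mathsf{A}^*$ applied to an eigenbasis, then invokes it with $A=\mathsf{S}\D^{-1/2}$. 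You instead (a) prove boundedness into $H^1_0(\mu)$ by splitting the $H^1$ norm into the $L^2$ part (controlled by the Poincaré inequality, i.e., by $\|\D^{-1/2}\CCov\D^{-1/2}\|\leqslant\Poinca$) and the Dirichlet part (which is exactly $\|u\|_\h^2$ via $\langle\D v,v\rangle_\h=\mathcal{E}(\mathsf{S}v)$), and (b) transfer compactness in the direction $\mathcal{L}^{-1}$ compact $\Rightarrow$ $U^*U$ compact using the equality of non-zero spectra with multiplicities and the spectral theorem. Point (b) is actually the logically needed direction here, since compactness of $\mathcal{L}^{-1}$ is what Assumption~\ref{ass:spectral_gap} supplies, while the paper's lemma is stated in the opposite direction and is applied with the roles of $A$ and $A^*$ implicitly swapped; your version also sidesteps a small mismatch in the paper, where the lemma's bound $\|\mathsf{A}\|^2\leqslant\|\mathsf{A}^*\mathsf{A}\|$ concerns the norm of the target space $H^1_0(\mu)$ although $\mathsf{A}^*$ is computed with the $L^2(\mu)$ pairing. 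What the paper's route buys is a self-contained, elementary lemma (no spectral calculus beyond the eigendecomposition of a compact operator) reusable elsewhere; what yours buys is the correct direction of the compactness transfer and an explicit $1+\Poinca$ bound on $\|\mathsf{S}\D^{-1/2}\|_{\h_0\to H^1}$.

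One caveat: your claim that "the same computation gives $\pi\mathsf{S}v=0$" for $v\in\h_0=(\ker\D)^\perp$ does not follow as stated — orthogonality to $\ker\D$ in $\h$ does not imply that $\mathsf{S}v$ has zero mean in $L^2(\mu)$ (e.g., for the Gaussian kernel, $\ker\D$ may be trivial while $\int \mathsf{S}v\,d\mu\neq 0$), so both the application of the Poincaré inequality without the projection term and the assertion that the range of $U$ lies in $H^1_0(\mu)$ (rather than $H^1(\mu)$) require the quotient-by-constants bookkeeping you flag. You should be aware, however, that the paper makes exactly the same identification (its proof of Proposition~\ref{prop:representation_diffusion} equates $\sup_{f\in(\ker\D)^\perp}\langle f,\mathsf{S}^*\mathsf{S}f\rangle/\langle f,\D f\rangle$ with $\Poinca$, and the appendix explicitly declines to distinguish $\h_0$ from $\h$ and $H^1_0(\mu)$ from $H^1(\mu)$), so this is a shared informality rather than a defect specific to your proof.
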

This theorem will allow us to approximate the eigen-elements of $\mathcal{L}^{-1}$ with the ones of the operator $\ \D^{-1/2} \CCov \D^{-1/2}$ (that is well-defined only on $\h_0$) with a finite set of samples. Its proof is the consequence of the representation of $\mathcal{L}^{-1}$ presented in the previous proposition and a technical lemma on Hilbert operators proven in Appendix (Lemma~\ref{le:reversement_operator}).

\subsection{Definition of the estimator}

\subsubsection*{Empirical operators.} We define the empirical counterpart of $\D$ and $\Cov$: they are defined by replacing expectation with respect to $\mu$ by expectations with respect to its empirical measure~$\widehat{\mu}_n = \frac{1}{n} \sum_{i = 1}^n \delta_{x_i}$ where $x_1, \hdots, x_n$ are i.i.d. samples distributed according to~$d\mu$. 
\begin{align}
 \Covn  = \frac{1}{n} \sum_{i=1}^n K_{x_i} \otimes K_{x_i}, \hspace*{0.15cm}\text{ and } \hspace*{0.35cm} \Dn  = \frac{1}{n} \sum_{i=1}^n \nabla K_{x_i} \otimes_d \nabla K_{x_i}.
\end{align}
Hence, one could be tempted to define our estimator as $\Dn^{-1/2}\CCovn\Dn^{-1/2}$. However, this definition carries two main problems:
\begin{enumerate}[label=(\roman*), itemsep = 0pt, topsep=5pt]
\item If $f \in \ker\ \Dn$, i.e., for all $i\leqslant n$, $\nabla f(X_i) = 0$, then $\|\Dn^{-1/2}\CCovn\Dn^{-1/2} f\| = + \infty$. This is an \emph{overfitting-type issue}.
\item Another problem is related to the fact that finding the eigen-elements of $ \D^{-1/2} \CCov \D^{-1/2}$ is equivalent to solving the generalized eigenvalue problem: $\ \CCov f = \sigma \D f $. Such systems are known to be numerically unstable as mentioned by \cite{crawford1976stable}. This would be especially the case when replacing the operators by their empirical counterpart. This is a \emph{stability issue}.
\end{enumerate}
\subsubsection*{Regularization.} These two concerns recall the pitfall of overfitting for regression tasks. Hence, as for kernel ridge regression, a natural idea is to regularize with some parameter $\lambda$. This leads to the following definition of our estimator and its empirical counterpart:
\begin{definition}[Definition of the estimator]
Under Assumptions~\ref{ass:spectral_gap},\ref{ass:universality},\ref{ass:smoothness},\ref{ass:boundedness}, we define the two estimators of the inverse diffusion operator $\mathcal{L}^{-1}$: 
\begin{align}
\hspace*{1cm}\hspace*{-5cm}\textrm{Biased\ estimator: } \hspace*{1cm} (\D + \lambda I)^{-1/2}\CCov(\D + \lambda I)^{-1/2} \\
\hspace*{1cm}\hspace*{-5cm}\textrm{Empirical\ estimator: } \hspace*{1cm} (\Dn + \lambda I)^{-1/2}\CCovn(\Dn + \lambda I)^{-1/2} \hspace*{-0.05cm}.
\end{align}
\end{definition}
In the following, to shorten notations, let us define $\D_\lambda = \D + \lambda I$ and $\Dn_\lambda = \Dn + \lambda I$. Obviously, the main drawback of this regularization is that it induces a bias in our estimation: more precisely the acute reader will recognize that the bigger the $\lambda$ the closer the problem is to kernel-PCA~\cite{mika1999kernel}. In other words, the scale of $\lambda$ controls the magnitude of the diffusive information we want to retrieve from the data (this point of view can be further studied but we leave this for future work at this point).

When analyzing the performances of our empirical estimator, we will draw a particular attention to the comparison with the standard algorithm that computes the eigen-elements of the operator: diffusion maps~\cite{COIFMAN20065,hein2007graph}. We emphasize that the RKHS method we present allows to benefit from the numerous positive aspects of RKHS methods~\cite{smola-book}: both \textit{on the statistical side} regarding the dependency on the dimension, the adaptivity to the regularity of the target \cite{caponnetto2007optimal}, and on \textit{on the computational side} benefiting from the techniques developed in the literature like column subsampling or the use of random features \cite{martinsson2020randomized}.
\subsection{What quantities are we interested in approximating?}
\subsubsection*{Requirements of the problem.} The natural and general goal of the present work is to give an approximation of the diffusion operator based on i.i.d. samples. However, there are in fact more precise practical objects that the reader may want to have an approximation of:
\begin{itemize}[itemsep=0pt, topsep=5pt]
\item {\bfseries The whole operator.} Either its representation in $\h$ either in $H^1(\mu)$. This can lead, as recalled in Subsection~\ref{subsec:properties}, to the estimation of Schrödinger operators. This can also be used to regularize a semi-supervised problem with the Dirichlet energy of the unlabeled data to leverage its structure~\cite{cabannes2021overcoming,cabannes2022minimal}.
\item {\bfseries The semigroup.} In fact, as $\mathcal{L}$ is the infinitesimal generator of the dynamics, we can be interested in the convergence to the associated semigroups $e^{t\mathcal{L}}$~\cite{klus2020data}. 
\item {\bfseries Eigenvectors.} As one of the main applications of this estimator could be the computation of a low-dimensional embedding of the data through the eigenvectors of $\mathcal{L}$, we are directly interested in the approximation of the eigenvectors. Either eigenvector per eigenvector, either finite dimensional subspaces spanned by few of them. Note that we are mostly interested in the small eigenvalues of $\mathcal{L}$, corresponding to the large eigenvalues of $\mathcal{L}^{-1}$, because they are those governing the behaviour of the dynamics~\cite{Lelievre2013}. 
\item {\bfseries Eigenvalues.} As it has already been done in previous work for the top eigenvalue~\cite{pillaud2020statistical}, one would like to approximate a set of eigenvalues. Another application is the construction of the diffusion distance used for clustering~\cite{COIFMAN20065}.
\end{itemize}
\subsubsection*{Previous results.} In previous works, e.g., \cite{hein2007graph} and~\cite{COIFMAN20065} proved the convergence of the estimated operator. However, note that the convergence theorems are given \emph{pointwise}, for \emph{bounded domains} and have a \emph{bad dependency in the dimension} as $n^{-1/d}$. We will try to overpass these three limiting results. Please note that the operator norm convergence to the diffusion operator \emph{implies the convergence of all the quantities mentioned earlier:} {\bfseries (i) semigroup} at finite time, thanks to the inequality: $\|e^{B}-e^{A}\| \leq \|B-A\|e^{\max\{\|A\|,\|B\|\}}$, {\bfseries (ii) eigenvectors and eigenvalues,} directly by perturbation theory arguments. Importantly, refined bounds are discussed if one want to approximate $k$-dimensional subspaces, similarly to \cite{zwald2005convergence}.
\section{Statistical analysis of the estimator}
As said earlier, to shorten the notations, let us define for an operator $A$, the operator $A_\lambda := A + \lambda I$. We will split the problem in two: a bias term and a variance term
\begin{align*}
\left\|\widehat{\D}_\lambda^{-1/2} \CCovn \widehat{\D}_\lambda^{-1/2} - \D^{-1/2} \CCov \D^{-1/2}\right\| \leqslant \underbrace{\left\|\widehat{\D}_\lambda^{-1/2} \CCovn \widehat{\D}_\lambda^{-1/2} - \D_\lambda^{-1/2} \CCov \D_\lambda^{-1/2}\right\|}_{\mathrm{variance}}  + \underbrace{\left\| \D_\lambda^{-1/2} \CCov \D_\lambda^{-1/2} - \D^{-1/2} \CCov \D^{-1/2}\right\|}_{\mathrm{bias}}
\end{align*}
The variance term corresponds to the statistical error coming from the fact that we have only access to a finite set of $n$ samples of the distribution $\mu$. The bias comes from the introduction of a regularization of the operator $\D$ scaled by $\lambda$. We first derive bounds for the variance term.
\subsection{Variance analysis}
\begin{proposition}[Analysis of the statistical error]
\label{prop:hat_P_to_P_lambda_dimensionality_reduction}
Suppose Assumptions~\ref{ass:spectral_gap},\ref{ass:universality},\ref{ass:smoothness},\ref{ass:boundedness}, hold true. For any $\delta \in (0,1/3)$, $0<\lambda \leqslant\|\D\| $ and any integer $n \geqslant 15 \frac{\mathcal{K}_d}{\lambda} \log \frac{4\,\mathrm{Tr } \D }{\lambda \delta }$, with probability at least $1-2\delta$,
\begin{align}
\label{eq:hat_P_to_P_lambda_dimensionality_reduction}
\left\|\widehat{\D}_\lambda^{-1/2} \CCovn \widehat{\D}_\lambda^{-1/2} - \D_\lambda^{-1/2} \CCov \D_\lambda^{-1/2}\right\| \leqslant \frac{8 \mathcal{K}}{\lambda \sqrt{n}}\log (2/\delta) + \mathrm{o}\left(\frac{1}{\lambda \sqrt{n}}\right).
\end{align}
\end{proposition}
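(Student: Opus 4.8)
The plan is to decompose the quantity into pieces that are each controlled by a concentration inequality for the self-adjoint operators $\widehat{\D} - \D$ and $\widehat{\Cov} - \Cov$, combined with elementary operator-algebra manipulations of the resolvents $\D_\lambda^{-1/2}$. First I would insert and remove the mixed term $\D_\lambda^{-1/2}\widehat{\Cov}\D_\lambda^{-1/2}$ and the double-mixed term $\widehat{\D}_\lambda^{-1/2}\widehat{\Cov}\D_\lambda^{-1/2}$, so that the variance splits (up to a triangle inequality) into a ``covariance'' contribution $\bigl\|\D_\lambda^{-1/2}(\widehat{\Cov}-\Cov)\D_\lambda^{-1/2}\bigr\|$ and two ``resolvent'' contributions of the form $\bigl\|(\widehat{\D}_\lambda^{-1/2}-\D_\lambda^{-1/2})\widehat{\Cov}\D_\lambda^{-1/2}\bigr\|$ and its transpose-like analogue. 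The standard trick here is to factor through the ``effective'' operator $\D_\lambda^{-1/2}\widehat{\D}_\lambda\D_\lambda^{-1/2} = I + \D_\lambda^{-1/2}(\widehat{\D}-\D)\D_\lambda^{-1/2}$, and write $\widehat{\D}_\lambda^{-1/2} = \D_\lambda^{-1/2}\bigl(\D_\lambda^{-1/2}\widehat{\D}_\lambda\D_\lambda^{-1/2}\bigr)^{-1/2}\cdot(\text{something bounded})$; more precisely one uses the identity $\widehat{\D}_\lambda^{-1} = \D_\lambda^{-1/2}\bigl(\D_\lambda^{-1/2}\widehat{\D}_\lambda\D_\lambda^{-1/2}\bigr)^{-1}\D_\lambda^{-1/2}$ and then passes to square roots via an integral (Cauchy/Löwner) representation of $x\mapsto x^{-1/2}$.

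The two key probabilistic inputs are: (a) a Bernstein-type bound for $\bigl\|\D_\lambda^{-1/2}(\widehat{\D}-\D)\D_\lambda^{-1/2}\bigr\|$, which uses Assumption~\ref{ass:boundedness} (namely $\|\nabla K_X\|^2\leqslant\mathcal{K}_d$) to bound the summands $\D_\lambda^{-1/2}(\nabla K_{x_i}\otimes_d\nabla K_{x_i})\D_\lambda^{-1/2}$ in operator norm by $\mathcal{K}_d/\lambda$ and their variance by $(\mathcal{K}_d/\lambda)\|\D_\lambda^{-1/2}\D\D_\lambda^{-1/2}\|\leqslant \mathcal{K}_d/\lambda$; this is exactly where the sample-size threshold $n\gtrsim(\mathcal{K}_d/\lambda)\log(\mathrm{Tr}\,\D/(\lambda\delta))$ comes from, via the intrinsic-dimension (effective-dimension) version of the matrix/operator Bernstein inequality of Minsker or Tropp, with effective dimension $\le \mathrm{Tr}(\D_\lambda^{-1}\D)\le \mathrm{Tr}\,\D/\lambda$; and (b) a plain Bernstein bound for $\bigl\|\widehat{\Cov}-\Cov\bigr\|$ using $\|K_X\|^2\le\mathcal{K}$, giving a term of order $\mathcal{K}/\sqrt n\cdot\log(1/\delta)$. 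On the event where (a) holds with the right-hand side $\le 1/2$, one gets $\bigl\|\D_\lambda^{-1/2}\widehat{\D}_\lambda\D_\lambda^{-1/2}\bigr\|$ and its inverse both bounded by a constant, hence $\|\widehat{\D}_\lambda^{1/2}\D_\lambda^{-1/2}\|$ and $\|\D_\lambda^{1/2}\widehat{\D}_\lambda^{-1/2}\|$ are $\le\sqrt2$; combined with $\|\widehat{\Cov}\|\le\mathcal{K}$ and $\|\D_\lambda^{-1/2}\|\le\lambda^{-1/2}$ this turns every ``resolvent'' contribution into $\mathcal{K}/\lambda$ times the operator-norm perturbation $\bigl\|\D_\lambda^{-1/2}(\widehat{\D}-\D)\D_\lambda^{-1/2}\bigr\|$, which by (a) is $O\bigl(\sqrt{\mathcal{K}_d/(\lambda n)}\bigr)=o(1/(\lambda\sqrt n))\cdot$const-ish — wait, one must be careful: $\sqrt{\mathcal{K}_d/(\lambda n)}$ is of smaller order than $1/(\lambda\sqrt n)$ precisely when $\lambda\to 0$, so this contribution is absorbed into the stated $\mathrm{o}(1/(\lambda\sqrt n))$ remainder, while the covariance term (b), which is genuinely $\Theta(1/(\lambda\sqrt n))$ after the $\D_\lambda^{-1/2}$ sandwiching, produces the leading $\tfrac{8\mathcal{K}}{\lambda\sqrt n}\log(2/\delta)$. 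Taking a union bound over the two events (each of probability $\ge 1-\delta$) gives the claimed probability $1-2\delta$.

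The main obstacle I anticipate is the operator-monotonicity step: controlling $\bigl\|(\widehat{\D}_\lambda^{-1/2}-\D_\lambda^{-1/2})\,A\bigr\|$ for a bounded operator $A$ in terms of $\bigl\|\D_\lambda^{-1/2}(\widehat{\D}-\D)\D_\lambda^{-1/2}\bigr\|$ requires more than naive submultiplicativity, because $x\mapsto x^{-1/2}$ is operator-monotone-decreasing but not Lipschitz near $0$; the clean way is to write $\widehat{\D}_\lambda^{-1/2}-\D_\lambda^{-1/2}=\D_\lambda^{-1/2}\bigl[(I+E)^{-1/2}-I\bigr]$ (after conjugation) with $E:=\D_\lambda^{-1/2}(\widehat{\D}-\D)\D_\lambda^{-1/2}$, and on the event $\|E\|\le 1/2$ bound $\|(I+E)^{-1/2}-I\|\le \|E\|$ by the scalar inequality $|(1+t)^{-1/2}-1|\le |t|$ for $t\ge-1/2$ applied to the spectrum of $E$. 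A second, more bookkeeping-type obstacle is making sure the constants line up to exactly $8\mathcal{K}$: this forces one to use the sharp form of operator Bernstein (with the $\log(2/\delta)$ factor and the $1/\sqrt n$ leading term, the $1/n$ term being pushed into the $\mathrm{o}(\cdot)$), and to verify that the $n\geqslant 15(\mathcal{K}_d/\lambda)\log(4\,\mathrm{Tr}\,\D/(\lambda\delta))$ hypothesis is precisely what makes the intrinsic-dimension Bernstein bound for $E$ yield $\|E\|\le 1/2$ (and more) with probability $\ge 1-\delta$. The remaining steps — the resolvent identity, the triangle-inequality splitting, and collecting the lower-order terms — are routine.
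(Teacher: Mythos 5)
Your overall architecture is the same as the paper's: split off a covariance term and two ``resolvent'' terms, apply an operator Bernstein bound to $E:=\D_\lambda^{-1/2}(\widehat{\D}-\D)\D_\lambda^{-1/2}$ (with the trace/effective-dimension factor producing the condition $n\gtrsim(\mathcal{K}_d/\lambda)\log\frac{4\operatorname{Tr}\D}{\lambda\delta}$ and the event $\|E\|\leqslant 1/2$, hence $\|\widehat{\D}_\lambda^{-1/2}\D_\lambda^{1/2}\|^2\leqslant 2$ as in Lemmas~\ref{lemma:concentration_delta} and~\ref{lemma:magic_2}), a Bernstein bound for the covariance part, and a union bound giving $1-2\delta$. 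However, there is a genuine gap in your accounting of the resolvent terms. You bound them by $\|\widehat{\D}_\lambda^{-1/2}-\D_\lambda^{-1/2}\|\cdot\|\widehat{\CCov}\|\cdot\|\D_\lambda^{-1/2}\|\lesssim \lambda^{-1/2}\|E\|\cdot\mathcal{K}\cdot\lambda^{-1/2}=\frac{\mathcal{K}}{\lambda}\|E\|$, and then argue this is absorbed into the $\mathrm{o}\bigl(\frac{1}{\lambda\sqrt n}\bigr)$ remainder because $\|E\|=O\bigl(\sqrt{\mathcal{K}_d/(\lambda n)}\bigr)=\mathrm{o}\bigl(\frac{1}{\lambda\sqrt n}\bigr)$. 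But the quantity you must compare to $\frac{1}{\lambda\sqrt n}$ is $\frac{\mathcal{K}}{\lambda}\|E\|\asymp \frac{\mathcal{K}\sqrt{\mathcal{K}_d}}{\lambda^{3/2}\sqrt n}$, not $\|E\|$ itself; in the relevant regime $\lambda\to 0$, $\lambda\sqrt n\to\infty$ this is \emph{larger} than the claimed leading term $\frac{8\mathcal{K}}{\lambda\sqrt n}\log(2/\delta)$ by a factor $\sqrt{\mathcal{K}_d/\lambda}$, so your proof does not recover the stated bound.

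The paper avoids exactly this loss by never paying $\lambda^{-1/2}$ on the covariance side: it factors $\CCov=\CCov^{1/2}\CCov^{1/2}$ and uses $\|\CCov^{1/2}\D_\lambda^{-1/2}\|^2=\|\D_\lambda^{-1/2}\CCov\D_\lambda^{-1/2}\|\leqslant\|\D^{-1/2}\CCov\D^{-1/2}\|=\Poinca$, a $\lambda$-independent constant (this is where the Poincaré constant enters), so each resolvent term is bounded by $(\Poinca\|\CCov\|)^{1/2}\,\|\widehat{\D}_\lambda^{-1/2}\D_\lambda^{1/2}\|^2\,\|E\|=O\bigl(\sqrt{\mathcal{K}_d/(\lambda n)}\bigr)$, which genuinely is $\mathrm{o}\bigl(\frac{1}{\lambda\sqrt n}\bigr)$ as $\lambda\to 0$; the leading term then comes from the covariance part alone (your cruder route $\lambda^{-1}\|\widehat{\CCov}-\CCov\|$ with plain Bernstein is consistent with the stated $\frac{\mathcal{K}}{\lambda\sqrt n}\log(2/\delta)$ order, while the paper's sandwiched version in Lemma~\ref{lemma:concentration_C} is even sharper). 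Two smaller remarks: your identity $\widehat{\D}_\lambda^{-1/2}-\D_\lambda^{-1/2}=\D_\lambda^{-1/2}\bigl[(I+E)^{-1/2}-I\bigr]$ is not valid because the operators do not commute (square roots do not factor through the conjugation); the paper instead bounds $\|\widehat{\D}_\lambda^{-1/2}\D_\lambda^{1/2}\|$ directly via Lemma~\ref{lemma:auxi_lemma} (Proposition~8 of \cite{rudi2017generalization}), and your Cauchy/Löwner integral representation could be made to work, so this is a repairable technicality rather than the core problem. The core problem is the $\lambda$-bookkeeping above: without the Poincaré-type bound on $\CCov^{1/2}\D_\lambda^{-1/2}$ (or an equivalent device), the resolvent contributions dominate and the proposition as stated does not follow.
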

Note that the analysis behind the proof of Proposition~\ref{prop:hat_P_to_P_lambda_dimensionality_reduction} is not completely new: in \cite{pillaud2020statistical}, the convergence of the largest eigenvalue was studied using similar tools, the main difference being that in Eq.~\eqref{eq:hat_P_to_P_lambda_dimensionality_reduction}, the bound is in operator norm. Note also that for the sake of clarity, we only emphasized the inequality in the regime where $\lambda \sqrt{n}$ is large but an explicit non-asymptotic bound is given in Lemmas \ref{lemma:explicit_bound_variance} of the Appendix. 
Finally we emphasize that: (i) the bound is dimension-free, (ii) the bound is {\bfseries in operator norm} which is a \textit{strong} bound for the operator convergence as it implies many others: eigenvalue and eigenvector convergence by perturbation theory results, bound on the associated semi-group, pointwise convergence or other forms of weak convergence for operators in infinite dimension. 
\subsection{Bias analysis}
The bias analysis is harder although all objects are now deterministic. We know that $\D_\lambda^{-1/2} \CCov \D_\lambda^{-1/2}$ is a compact operator ($\CCov$ is compact and $\D_\lambda^{-1/2}$ bounded) so that its spectrum is discrete and is formed by isolated points except from $0$. On the same manner \cite[Theorem XIII.67]{reed2012methods}
the inverse of the diffusion operator $\mathcal{L}^{-1}$ is compact so that we can talk of the approximation of the $k$-th eigen-element of $\mathcal{L}^{-1}$ by the one of $\D_\lambda^{-1/2} \CCov \D_\lambda^{-1/2}$ as $\lambda$ goes to $0$ (or eigenspaces if the eigenvalues are not isolated). 

\subsubsection*{Consistency of the estimator.} First, if we are only interested in consistency of the estimator and not on rates of convergence we have the following consistency result:

\begin{proposition}[Convergence of the bias]
\label{prop:consistency_proved}
Under Assumptions~\ref{ass:spectral_gap},\ref{ass:universality},\ref{ass:smoothness},\ref{ass:boundedness}, we have the following convergence in operator norm: 
\begin{align}
\left\|\D_\lambda^{-1/2} \CCov \D_\lambda^{-1/2} - \D^{-1/2} \CCov \D^{-1/2}\right\| \underset{\lambda \to 0}{\longrightarrow} 0.
\end{align}
\end{proposition}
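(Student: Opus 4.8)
## Proof Proposal

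The plan is to establish convergence in operator norm by exploiting the self-adjointness of all operators involved and reducing everything to the behavior of the spectral projections of $\D$ near $0$. Write $P_\lambda$ for the spectral projection of $\D$ onto the subspace spanned by eigenvectors with eigenvalue $> \lambda$ (living in $\h_0$), and $P = P_0$ for the projection onto $\h_0 = (\ker \D)^\perp$. The key algebraic observation is that $\D_\lambda^{-1/2} = (\D + \lambda I)^{-1/2}$ and $\D^{-1/2}$ (defined on $\h_0$) are simultaneously diagonalizable with $\D$, so that on $\h_0$ we have $\D_\lambda^{-1/2}\D^{1/2} = (\D+\lambda I)^{-1/2}\D^{1/2}$, an operator whose eigenvalues are $\sqrt{\mu_j/(\mu_j+\lambda)} \to 1$ as $\lambda \to 0$ for each fixed eigenvalue $\mu_j > 0$ of $\D$.

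First I would write the difference as a telescoping sum designed to isolate the regularization effect. Set $B_\lambda := \D_\lambda^{-1/2}\D^{1/2}P$ (extended by $0$ on $\ker\D$), which is a bounded self-adjoint operator with $\|B_\lambda\| \le 1$, and observe the factorization
\begin{align*}
\D_\lambda^{-1/2}\CCov\D_\lambda^{-1/2} = B_\lambda \left(\D^{-1/2}\CCov\D^{-1/2}\right) B_\lambda + R_\lambda,
\end{align*}
where $R_\lambda$ collects the terms involving the component of $\CCov$ outside the range of $P$; since $\CCov = S^*S$ and, by the embedding $\D = S^*\mathcal L S$, one controls $\|(I-P)\D_\lambda^{-1/2}\CCov\| $ by noting $\D_\lambda^{-1/2}$ maps into a neighborhood of $\mathrm{ran}(\D)$. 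Then
\begin{align*}
\D_\lambda^{-1/2}\CCov\D_\lambda^{-1/2} - \D^{-1/2}\CCov\D^{-1/2} = (B_\lambda - P)T + T(B_\lambda - P) + (B_\lambda-P)T(B_\lambda-P) + R_\lambda,
\end{align*}
with $T := \D^{-1/2}\CCov\D^{-1/2}$ the limiting compact operator (bounded by Theorem~\ref{prop:eigen-elements_diffusion}(ii), since it has the same spectrum as $\mathcal L^{-1}$, which is a genuine bounded operator on $H^1_0(\mu)$ under Assumption~\ref{ass:spectral_gap}). So it suffices to show $\|(B_\lambda - P)T\| \to 0$, i.e. that $B_\lambda \to P$ strongly and that this strong convergence upgrades to norm convergence after composing with the \emph{compact} operator $T$.

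The second step is the strong convergence $B_\lambda \to P$: for $u \in \h_0$ expanded in the eigenbasis $(e_j)$ of $\D$ with eigenvalues $\mu_j > 0$, we have $\|(B_\lambda - P)u\|^2 = \sum_j \big(1 - \sqrt{\mu_j/(\mu_j+\lambda)}\big)^2 |\langle u, e_j\rangle|^2$, and since each summand $\to 0$ and is dominated by $|\langle u,e_j\rangle|^2$, dominated convergence gives $\|(B_\lambda-P)u\| \to 0$; on $\ker\D$ both operators vanish. The third step is the standard fact that if $C_\lambda \to C$ strongly with $\|C_\lambda\|$ uniformly bounded and $T$ is compact, then $C_\lambda T \to CT$ in operator norm (approximate $T$ by finite-rank operators and use strong convergence on the finite-dimensional range). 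Applying this with $C_\lambda = B_\lambda - P \to 0$ kills the first three terms. For $R_\lambda$, one argues similarly: the "leakage" is $\|(I-P_\lambda)\D_\lambda^{-1/2}\CCov^{1/2}\|^2 = \sup_{\mu_j \le \lambda}\mu_j/(\mu_j+\lambda)^2 \cdot(\text{mass of }\CCov)$, but more carefully, since $\CCov$ is compact one shows the part of $\CCov$ supported on the small-eigenvalue subspace of $\D$ tends to $0$ — this is where one uses that $\CCov$ and $\D$ are "compatible" via $\D = S^*\mathcal L S$ and $\CCov = S^*S$, so that $\D^{-1/2}\CCov\D^{-1/2}$ being bounded forces $\CCov \preceq \|T\|\,\D$ as quadratic forms on $\h_0$, hence the $\CCov$-mass on $\{\mu_j \le \lambda\}$ is at most $\|T\|\lambda \to 0$.

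The main obstacle I anticipate is the handling of the $\ker\D$ / small-eigenvalue directions cleanly — that is, making rigorous that no spectral mass of $\CCovn$ (here $\CCov$) "escapes" through the regularized inverse $\D_\lambda^{-1/2}$ into directions where $\D$ is nearly degenerate. The quadratic-form comparison $\CCov \preceq \|T\|\D$ on $\h_0$, which follows directly from the definition of $T = \D^{-1/2}\CCov\D^{-1/2}$ being bounded with norm $\|\mathcal L^{-1}\|$, is the cleanest way around this and is really the heart of the argument; everything else (telescoping, strong-times-compact-is-norm) is soft functional analysis. One subtlety to double-check is whether $\ker\D$ could be nontrivial — by Assumption~\ref{ass:universality} ($\h$ dense in $H^1(\mu)$) and Assumption~\ref{ass:spectral_gap}, functions in $\ker\D$ have zero Dirichlet energy hence are constant, so $\ker\D$ matches the constants and restricting to $\h_0$ is exactly the right move.
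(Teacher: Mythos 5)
Your argument is correct and is essentially the paper's own proof: factor the regularized operator through $A_\lambda = \D_\lambda^{-1/2}\D^{1/2}$, show $A_\lambda \to I$ strongly on $\h_0$ by spectral calculus and dominated convergence, and upgrade to operator-norm convergence using compactness of $T = \D^{-1/2}\CCov\D^{-1/2}$ (your telescoping decomposition is exactly the finite-rank-approximation argument the paper isolates as Lemma~\ref{le:compact_convergence}). The only superfluous piece is $R_\lambda$: under the paper's convention of working on $\h_0$ the factorization $\D_\lambda^{-1/2}\CCov\D_\lambda^{-1/2} = B_\lambda T B_\lambda$ is exact, so $R_\lambda = 0$, and your proposed control of it via $\CCov \preceq \|T\|\,\D$ could not handle a genuine $\ker\D$ component anyway (there $\D$ vanishes while $\CCov$ need not), so restricting to $\h_0$, as you conclude at the end, is not merely convenient but necessary.
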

This results crucially relies on the fact that the operator $\D^{-1/2} \CCov \D^{-1/2}$ is compact (shown in Theorem~\ref{prop:eigen-elements_diffusion}), combined with some algebraic manipulations. 

\subsubsection*{Fast rates for smooth eigenfunctions.} 
Without more \textit{a priori knowledge} on the distribution $\mu$ (and the RKHS), it is hard to derive universal rates of convergence of the bias with respect to the regularization parameter $\lambda$. In fact, even deriving quantitative perturbation results solely on the \textit{first eigenvalue}, $ |\|\D_\lambda^{-1/2} \CCov \D_\lambda^{-1/2}\| - \|\D^{-1/2} \CCov \D^{-1/2}\||$, which corresponds to the Poincaré constant of the distribution, is known to be a difficult problem~\cite{Ane2000}. This is out of the scope of this paper. However, eigenfunctions of these elliptic operators are known to be smooth under standard assumptions on the distribution (typically, smoothness of the Gibbs potential and fast decay of $\mu$ tails ~\cite{bogachev2022fokker}). Hence, similarly to what is done in non-parametric regression, we can exploit this and quantify the difficulty of the problem~\cite{caponnetto2007optimal} by understanding how smooth (w.r.t. the RKHS) the target function is. This is what is often referred to as a \textit{source condition} in this literature~\cite{aymeric2017thesis}. Here, for most of the applications~\cite{COIFMAN20065}, we want to approximate the $p$-eigen-elements corresponding to the largest eigenvalues of $\mathcal{L}^{-1}$ (smallest eigenvalues of $\mathcal{L}$) for some $p\in \N^*$. Let us make here a natural assumption on their smoothness.
\begin{assumption}[Regularity of the problem]
\label{ass:regularity_problem}
The $p$ first eigenvectors of $\mathcal{L}$ belongs to $\h$.
\end{assumption}
An prototypical example of when it happens for any $p \in \N^*$ is if we consider a distribution with compact support $\Omega$, density $\mu = e^{-V} \in \mathcal{C}^{\infty}(\Omega)$, and the Gaussian kernel.

Let us denote $\Pi^p : \h \to \h$ the spectral projector onto the span of the $p$ largest eigenvectors of $\D^{-1/2} \CCov \D^{-1/2}$.  Technically speaking, Assumption~\ref{ass:regularity_problem} implies that for all $v \in \mathrm{span}\, \Pi^p$,
\begin{align*}
\|\D^{-1/2} v\|_{\h} < +\infty,
\end{align*}
or equivalently in terms of operators: $\|\Pi^p\D^{-1/2}\Pi^p \|<\infty$. Indeed, thanks to Theorem~\ref{prop:eigen-elements_diffusion}-(iii), for such a $v \in \h$, $S\D^{-1/2} v \in \h$, so that, by isometry this means that $\|(S S^*)^{-1/2} S \D^{-1/2} v \|_{L^2} < +\infty$, which is equivalent to the conditions above.
\begin{proposition}[Fast rates under source condition]
\label{prop:bias_fast_rates}
Under Assumptions~\ref{ass:spectral_gap},\ref{ass:universality},\ref{ass:smoothness},\ref{ass:boundedness},\ref{ass:regularity_problem}, we have the following bound in operator norm: 
\begin{align}
\left\|\Pi^p \left(\D_\lambda^{-1/2} \CCov \D_\lambda^{-1/2} - \D^{-1/2} \CCov \D^{-1/2}\right) \Pi^p\textbf{\textbf{\textbf{}}}\right\| \leq 2 \lambda \Poinca  \|\Pi^p\D^{-1/2}\Pi^p \|.
\end{align}
\end{proposition}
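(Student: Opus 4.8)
The plan is to reduce the bound to the quadratic form of $A_\lambda-A$ on the finite-dimensional space $\mathrm{Ran}\,\Pi^p$, where $A:=\D^{-1/2}\CCov\D^{-1/2}$ and $A_\lambda:=\D_\lambda^{-1/2}\CCov\D_\lambda^{-1/2}$ are the two operators of $\h_0=(\ker\D)^\perp$. Two inputs fix the two factors on the right-hand side. By Theorem~\ref{prop:eigen-elements_diffusion}(ii) together with Remark~\ref{prop:link_poinca}, $A$ is bounded with $\|A\|=\Poinca$ (its spectrum coincides with that of $\mathcal{L}^{-1}$, whose largest eigenvalue is the Poincaré constant), and $A=T^*T$ with $T:=\mathsf{S}\D^{-1/2}$ the \emph{bounded} operator of Theorem~\ref{prop:eigen-elements_diffusion}(i); this makes the formal manipulations of the unbounded $\D^{-1/2}$ legitimate once restricted to $\mathrm{Dom}(\D^{-1/2})$. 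By Assumption~\ref{ass:regularity_problem}, $\mathrm{Ran}\,\Pi^p\subseteq\mathrm{Dom}(\D^{-1/2})$, so that $\|\Pi^p\D^{-1/2}\Pi^p\|=\sup\{\,\|\D^{-1/2}v\|_\h^2: v\in\mathrm{Ran}\,\Pi^p,\ \|v\|_\h=1\,\}<\infty$ is exactly the quantity appearing in the statement.

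Next I would introduce, on $\h_0$, the operator $R_\lambda:=I-\big(\D(\D+\lambda I)^{-1}\big)^{1/2}$, a function of $\D$ with $0\preceq R_\lambda\preceq I$ and $\D_\lambda^{-1/2}=\D^{-1/2}(I-R_\lambda)$ (using $\D^{-1/2}\D^{1/2}=I$ on $\h_0$; note $(I-R_\lambda)v\in\mathrm{Dom}(\D^{-1/2})$ for every $v\in\h_0$). Since $\CCov=\mathsf{S}^*\mathsf{S}$ and $T=\mathsf{S}\D^{-1/2}$, for a unit $v\in\mathrm{Ran}\,\Pi^p$,
\[
\langle v,(A_\lambda-A)v\rangle_\h=\|T(I-R_\lambda)v\|_{L^2(\mu)}^2-\|Tv\|_{L^2(\mu)}^2=-2\,\langle v,AR_\lambda v\rangle_\h+\langle R_\lambda v,AR_\lambda v\rangle_\h ,
\]
the last equality by expanding and using $A=A^*$. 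I would then bound the two terms. For the quadratic term, $\langle R_\lambda v,AR_\lambda v\rangle\le\|A\|\,\|R_\lambda v\|^2=\Poinca\,\langle v,R_\lambda^2 v\rangle\le\Poinca\,\langle v,R_\lambda v\rangle$ since $R_\lambda^2\preceq R_\lambda$. For the cross term, the key point is that $\Pi^p$ commutes with $A$, hence $Av\in\mathrm{Ran}\,\Pi^p$ and $\langle v,AR_\lambda v\rangle=\langle Av,R_\lambda v\rangle=\langle v,(\Pi^pA\Pi^p)(\Pi^p R_\lambda\Pi^p)v\rangle$, so $|\langle v,AR_\lambda v\rangle|\le\|A\|\,\|\Pi^p R_\lambda\Pi^p\|=\Poinca\,\|\Pi^p R_\lambda\Pi^p\|$. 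It is essential here to keep $\Pi^p R_\lambda\Pi^p$ together rather than pass through $\|R_\lambda v\|$, since the latter only gives $\|R_\lambda v\|=O(\sqrt\lambda)$ and would cost a spurious $\sqrt\lambda$. As $A_\lambda-A$ is self-adjoint, its norm on $\mathrm{Ran}\,\Pi^p$ is $\sup_{\|v\|=1}|\langle v,(A_\lambda-A)v\rangle|$, so $\|\Pi^p(A_\lambda-A)\Pi^p\|\le 3\,\Poinca\,\|\Pi^p R_\lambda\Pi^p\|$.

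It remains to estimate $\|\Pi^p R_\lambda\Pi^p\|=\sup\{\langle v,R_\lambda v\rangle: v\in\mathrm{Ran}\,\Pi^p,\ \|v\|=1\}$. From the elementary inequality $1-\sqrt{x/(x+\lambda)}\le\lambda/(2x)$ for $x>0$, the functional calculus of $\D$ gives $R_\lambda\preceq\tfrac{\lambda}{2}\D^{-1}$ as quadratic forms on $\mathrm{Dom}(\D^{-1/2})$, hence $\langle v,R_\lambda v\rangle\le\tfrac{\lambda}{2}\langle v,\D^{-1}v\rangle=\tfrac{\lambda}{2}\|\D^{-1/2}v\|_\h^2\le\tfrac{\lambda}{2}\|\Pi^p\D^{-1/2}\Pi^p\|$. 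Combining, $\|\Pi^p(A_\lambda-A)\Pi^p\|\le\tfrac{3}{2}\lambda\,\Poinca\,\|\Pi^p\D^{-1/2}\Pi^p\|\le 2\lambda\,\Poinca\,\|\Pi^p\D^{-1/2}\Pi^p\|$, which is the claim.

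The only real difficulty is the bookkeeping around the unbounded operator $\D^{-1/2}$: one must justify $\D_\lambda^{-1/2}=\D^{-1/2}(I-R_\lambda)$ on $\h_0$, the identity $\|T\xi\|_{L^2(\mu)}^2=\langle\xi,A\xi\rangle_\h$ for $\xi\in\mathrm{Dom}(\D^{-1/2})$, and that $(I-R_\lambda)v$ and $Av$ lie in $\mathrm{Dom}(\D^{-1/2})$ whenever $v\in\mathrm{Ran}\,\Pi^p$ — all of which follow from diagonalising $\D$ on $\h_0$ (positive eigenvalues with $\sum_k\mu_k\le\mathcal{K}_d$) and from boundedness of $A$ and $T$. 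A minor point: if the $p$-th eigenvalue of $A$ is not isolated, $\Pi^p$ is taken as the spectral projector onto the whole corresponding cluster, which leaves the argument unchanged.
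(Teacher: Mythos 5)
Your argument is correct and it takes a genuinely different route from the paper's. The paper's proof telescopes $\Pi^p\bigl(\D_\lambda^{-1/2}\CCov\D_\lambda^{-1/2}-\D^{-1/2}\CCov\D^{-1/2}\bigr)\Pi^p$ into two terms, each containing one factor $\D_\lambda^{-1/2}-\D^{-1/2}$, and then bounds each term in operator norm using $\bigl\|\D^{-1/2}\CCov\D^{-1/2}\bigr\|=\Poinca$ and the fact that $\Pi^p$ commutes with $\D^{-1/2}\CCov\D^{-1/2}$; the displayed chain in the appendix is very terse (it in fact loses the factor $\lambda$ along the way and does not track domains of the unbounded $\D^{-1/2}$). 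You instead bound the quadratic form on $\mathrm{Ran}\,\Pi^p$, writing $\D_\lambda^{-1/2}=\D^{-1/2}(I-R_\lambda)$ with $R_\lambda=I-(\D\D_\lambda^{-1})^{1/2}$, factorizing $\D^{-1/2}\CCov\D^{-1/2}=T^*T$ with $T=\mathsf{S}\D^{-1/2}$ bounded by Theorem~\ref{prop:eigen-elements_diffusion}(i), expanding, and reducing everything to $\|\Pi^pR_\lambda\Pi^p\|$ together with $R_\lambda\preccurlyeq\frac{\lambda}{2}\D^{-1}$ on $\mathrm{Dom}(\D^{-1/2})$. What this buys is rigor: the self-adjointness reduction to the quadratic form, the explicit appearance of the factor $\lambda$, and the domain bookkeeping for $\D^{-1/2}$ are all handled cleanly, where the paper glosses over them; the price is the slightly worse intermediate constant ($3\Poinca\|\Pi^pR_\lambda\Pi^p\|$), which still lands you below the stated $2\lambda$.

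One point you should state more carefully: the identification $\|\Pi^p\D^{-1/2}\Pi^p\|=\sup\{\|\D^{-1/2}v\|_\h^2: v\in\mathrm{Ran}\,\Pi^p,\ \|v\|_\h=1\}$ is not literally true. The right-hand side equals $\|\Pi^p\D^{-1}\Pi^p\|$ (equivalently $\|\D^{-1/2}\Pi^p\|^2$), while the literal compression norm is $\sup_{\|v\|=1}\langle v,\D^{-1/2}v\rangle$, which can be much smaller when $\D$ has small eigenvalues on $\mathrm{Ran}\,\Pi^p$. Consequently your last step $\langle v,R_\lambda v\rangle\le\frac{\lambda}{2}\|\D^{-1/2}v\|_\h^2$ proves the proposition with $\|\Pi^p\D^{-1}\Pi^p\|$ in place of $\|\Pi^p\D^{-1/2}\Pi^p\|$, and would not go through for the literal compression norm. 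In your defense, this is the quantity that Assumption~\ref{ass:regularity_problem} actually controls, it is the only choice under which the bound is homogeneous (rescaling $\D\to c\D$, $\lambda\to c\lambda$ scales the left-hand side by $c^{-1}$ but $\lambda\Poinca\|\Pi^p\D^{-1/2}\Pi^p\|$ only by $c^{-1/2}$), and the paper itself conflates the two quantities in the paragraph preceding the proposition; but make the choice explicit rather than presenting it as an identity.
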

This proposition means that, under the source condition, the bias in the first $p$ eigenvectors depends linearly on $\lambda$. Furthermore, this is remarkable that the two crucial regularity assumptions appear in this bound: (i) the measure of complexity of the measure $\mu$, through its Poincaré constant $\Poinca$, (ii) the smooth \textit{a priori} on the target eigenvectors we want to approximate. Here we decided to showcase, for the sake of clarity, the case where the target belongs to the RKHS, but remark that refined bounds could be easily adapted from this results under more precise (and technical) source assumptions.

\subsection{Consistency and convergence rates under source assumption}
To summarize the results and the discussion of the two previous sections, let us state here the overall consistency of the estimator as well a final bound on the empirical estimator with respect to the data.
\begin{theorem}[Consistency and convergence rates]
Under Assumptions~\ref{ass:spectral_gap},\ref{ass:universality},\ref{ass:smoothness},\ref{ass:boundedness}, for any $\delta \in (0,1/2)$ and any integer $n \geqslant \mathsf{K} \log \frac{1}{\delta }$, with $\mathsf{K}$ depending on $\mathcal{K}, \mathcal{K}_d$ with probability at least $1-2\delta$, take a sequence of regularizers such that $\lambda_n \to 0$ and $\lambda_n\sqrt{n} \to +\infty$, then our estimator is consistent 
\begin{align}
\widehat{\D}_{\lambda_n}^{-1/2} \widehat{\CCov} \widehat{\D}_{\lambda_n}^{-1/2} \underset{n \to \infty}{\longrightarrow} \D^{-1/2} \CCov \D^{-1/2},
\end{align}
where the convergence holds in operator norm. Furthermore, assume \ref{ass:regularity_problem}, then, if $\lambda = \mathcal{K} n^{-1/4}$,
\begin{align}
\left\|\Pi^p \left(\widehat{\D}_{\lambda_n}^{-1/2} \widehat{\CCov} \widehat{\D}_{\lambda_n}^{-1/2} - \D^{-1/2} \CCov \D^{-1/2}\right) \Pi^p \right\| \leq \frac{8 + 2  \mathcal{K} \ \Poinca  \|\Pi^p\D^{-1/2}\Pi^p \|}{n^{1/4}} + \mathrm{o}\left(n^{-1/4}\right),
\end{align}
where, $\forall p \in \N^*$, $\Pi^p$ is the spectral projector over the largest $p$ eigenvectors of $\D^{-1/2} \CCov \D^{-1/2}$.
\end{theorem}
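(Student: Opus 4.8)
The plan is to combine, through the variance/bias triangle inequality displayed just above the statement, the three quantitative results already proved — Proposition~\ref{prop:hat_P_to_P_lambda_dimensionality_reduction} for the variance, Proposition~\ref{prop:consistency_proved} for the bias in the consistency part, Proposition~\ref{prop:bias_fast_rates} for the (projected) bias in the rate part — and then to substitute the prescribed schedule $\lambda=\lambda_n$. No new probabilistic argument is needed: the only randomness enters through Proposition~\ref{prop:hat_P_to_P_lambda_dimensionality_reduction}, which already holds on one event of probability at least $1-2\delta$, and every bias estimate is deterministic, so the whole statement holds with that same probability.

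\emph{Consistency.} On that event and for $n$ large, the sample-size hypothesis $n\geq 15\,\mathcal{K}_d\,\lambda_n^{-1}\log\bigl(4\,\mathrm{Tr}\,\D/(\lambda_n\delta)\bigr)$ of Proposition~\ref{prop:hat_P_to_P_lambda_dimensionality_reduction} holds: $\lambda_n\sqrt n\to\infty$ forces $\lambda_n\geq n^{-1/2}$ eventually, hence $\log(1/\lambda_n)\leq\tfrac12\log n$, while $n\lambda_n=\sqrt n\,(\lambda_n\sqrt n)$ dominates $\log n$; the constraint $\lambda_n\leq\|\D\|$ and the constant-factor slack are what the threshold $n\geq\mathsf{K}\log(1/\delta)$ absorbs. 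Proposition~\ref{prop:hat_P_to_P_lambda_dimensionality_reduction} then bounds the variance term by $\tfrac{8\mathcal{K}}{\lambda_n\sqrt n}\log(2/\delta)+\mathrm{o}\bigl(\tfrac{1}{\lambda_n\sqrt n}\bigr)\to0$, and Proposition~\ref{prop:consistency_proved} sends $\bigl\|\D_{\lambda_n}^{-1/2}\CCov\D_{\lambda_n}^{-1/2}-\D^{-1/2}\CCov\D^{-1/2}\bigr\|\to0$ since $\lambda_n\to0$; the sum tends to $0$, i.e.\ operator-norm consistency.

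\emph{Convergence rate.} Fix $p\in\N^*$, sandwich the same decomposition between the projectors $\Pi^p$, and use that $\Pi^p$ is an orthogonal projection, so $\|\Pi^p A\Pi^p\|\leq\|A\|$:
\[
\bigl\|\Pi^p\bigl(\widehat{\D}_\lambda^{-1/2}\widehat{\CCov}\widehat{\D}_\lambda^{-1/2}-\D^{-1/2}\CCov\D^{-1/2}\bigr)\Pi^p\bigr\| \leq \bigl\|\widehat{\D}_\lambda^{-1/2}\widehat{\CCov}\widehat{\D}_\lambda^{-1/2}-\D_\lambda^{-1/2}\CCov\D_\lambda^{-1/2}\bigr\| + \bigl\|\Pi^p\bigl(\D_\lambda^{-1/2}\CCov\D_\lambda^{-1/2}-\D^{-1/2}\CCov\D^{-1/2}\bigr)\Pi^p\bigr\|.
\]
Bound the first summand by Proposition~\ref{prop:hat_P_to_P_lambda_dimensionality_reduction}, and the second, using Assumption~\ref{ass:regularity_problem} (so that $\|\Pi^p\D^{-1/2}\Pi^p\|<\infty$), by $2\lambda\,\Poinca\,\|\Pi^p\D^{-1/2}\Pi^p\|$ via Proposition~\ref{prop:bias_fast_rates}. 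Plugging $\lambda=\mathcal{K}n^{-1/4}$ — which satisfies $\lambda\leq\|\D\|$ and the sample-size hypothesis once $n^{3/4}$ exceeds a fixed multiple of $\log(n/\delta)$, i.e.\ for $n\geq\mathsf{K}\log(1/\delta)$ — turns the first bound into $\tfrac{8}{n^{1/4}}\log(2/\delta)+\mathrm{o}(n^{-1/4})$ and the second into $\tfrac{2\mathcal{K}\,\Poinca\,\|\Pi^p\D^{-1/2}\Pi^p\|}{n^{1/4}}$; summing and folding the $\delta$-factor into the displayed numerator yields the claimed $n^{-1/4}$ bound.

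\emph{Main obstacle.} There is no analytic obstacle left — all the difficulty is inside Propositions~\ref{prop:hat_P_to_P_lambda_dimensionality_reduction}, \ref{prop:consistency_proved} and \ref{prop:bias_fast_rates}. The only genuine care is bookkeeping: (i) checking the schedules $\lambda_n\to0$, $\lambda_n\sqrt n\to\infty$, $\lambda=\mathcal{K}n^{-1/4}$ are compatible with the constraint $n\geq15\mathcal{K}_d\lambda^{-1}\log(\cdots)$ and with $\lambda\leq\|\D\|$, which is precisely what the threshold $n\geq\mathsf{K}\log(1/\delta)$ encodes; (ii) observing that the $\Pi^p$ are the spectral projectors of the limit operator $\D^{-1/2}\CCov\D^{-1/2}$, so they drop out of the variance term via $\|\Pi^p\|\leq1$ and match verbatim the hypothesis of Proposition~\ref{prop:bias_fast_rates} for the bias term; and (iii) noting the probability $1-2\delta$ is inherited unchanged from the single use of Proposition~\ref{prop:hat_P_to_P_lambda_dimensionality_reduction}, all other estimates being deterministic.
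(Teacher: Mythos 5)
Your proposal is correct and follows essentially the same route as the paper, which gives no separate proof of this theorem precisely because it is meant as a direct assembly of Proposition~\ref{prop:hat_P_to_P_lambda_dimensionality_reduction} (variance), Proposition~\ref{prop:consistency_proved} (bias, for consistency) and Proposition~\ref{prop:bias_fast_rates} (projected bias, for the rate) through the variance--bias decomposition, with exactly the schedule bookkeeping you carry out. The only wrinkle is the $\log(2/\delta)$ factor multiplying the $8\,n^{-1/4}$ variance contribution, which the theorem's displayed constant silently absorbs; your remark about folding the $\delta$-factor into the numerator matches the paper's own level of precision on this point.
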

The theorem quantifies the statistical performance of the built estimator: we emphasize that, under the smoothness Assumption~\ref{ass:regularity_problem}, the rate of convergence to any estimated eigenfunction of $\mathcal{L}$ \textit{does not depend on the dimension}. This contrasts with the $n^{-1/d}$ rates of graph Laplacian/diffusion maps. This difference echoes the more general and intrinsic difference between local approximation techniques and kernel methods, which adapt to the underlying regularity of the problem.
\section{Numerical construction of the estimator}

Beyond their statistical performance, kernel methods also enjoy good numerical strategies to reduce their computational cost while keeping their overall precision~\cite[Section 19]{martinsson2020randomized}. We discuss informally how to apply them in our context.

\subsubsection*{Computing the estimator: naive approach.} To compute the estimator $\widehat{\D}_{\lambda}^{-1/2} \widehat{\CCov} \widehat{\D}_{\lambda}^{-1/2}$, one needs to be able to represent the operators
\begin{align*}
 \Covn  = \frac{1}{n} \sum_{i=1}^n K_{x_i} \otimes K_{x_i}, \hspace*{1cm} \Dn  = \frac{1}{n} \sum_{i=1}^n \sum_{j=1}^d \partial_j K_{x_i} \otimes \partial_j K_{x_i},
\end{align*}
whose expressions are recalled here for the sake of clarity. In fact, it suffices to represent them on $\mathrm{Span}\{K_{x_i}\}_{i\leq n} + \mathrm{Span}\{\partial_j K_{x_i}\}_{i\leq n ,\,j \leq d}$. Once such matrices $(\Sigma, L) \in \R^{(n + nd) \times (n + nd)s}$ are built, an efficient way to compute the operator is by solving the {\bfseries generalized eigenvalue problem}: i.e., find all $(\psi_{k},\mu_{k}) \in \R^{n + nd} \times \R$, for $k \in \llbracket 1 , n + nd \rrbracket$ such that
\begin{align}
    \label{eq:generalized_eigenvalue_problem}
    \Sigma \psi_{k} = \mu_{k} L_{\lambda} \psi_{k},
\end{align}
then we can write thanks the eigenvalue decomposition $\widehat{\D}_{\lambda}^{-1/2} \widehat{\CCov} \widehat{\D}_{\lambda}^{-1/2} = \sum_{k = 1}^{n + nd} \mu_k  f_k \otimes f_k$, with
\begin{align*}
    \label{eq:generalized_eigenvalue_decomposition}
    f_{k} = \sum_{i = 1}^n \psi_k[i] K_{x_i} + \sum_{i = 1}^n \sum_{j = 1}^{d} \psi_k[n+ij] \partial_j K_{x_i}.
\end{align*}
Obviously, one of the bottleneck is to build the large matrices $\Sigma$ and $L$ that have approximately $n^2 d^2$ coefficients, and then solve their related generalized eigenvalue problem. Hence, this procedure becomes intractable if $n$ or $d$ is too large. Fortunately, there is a pass forward: to implement the kernel method we need only an approximation of the kernels matrices. We propose below two well-developed method used to reduce the computations. 

\subsubsection*{Nyström approximation/column subsampling.}
The idea of this method is to build low-rank approximations of $\Sigma, L$ by selecting only $p \in \N^*$ columns among them. Note that, in favorable cases, $p$ can be chosen as low as $\log(n)$ without hurting the statistical performances~\cite{rudi2015less}. Let us choose only the $p$ columns that refer to the elements $(K_{x_i})_{i \leq p}$. The algorithm below, presented for other purposes in~\cite{cabannes2021overcoming}, returns the eigenvectors we want to approximate:
\RestyleAlgo{ruled}
\begin{algorithm}
\caption{Compute the eigenvectors by Nyström method}
\label{alg:Nystrom}

\SetAlgoLined

\KwData{$(x_i)_{i\leq n}$, a kernel $k$, and a regularizer $\lambda$}

Compute $S_p = (k(x_i,x_l))_{i \leq n, l\leq p} \in \R^{n \times p}$ \;

Compute $D_p = (\partial_{1,j} k(x_i,x_l))_{(i \leq n, j \leq d),  l\leq p} \in  \R^{nd \times p}$ \;

Build $\mathsf{\Sigma}_p = S_p^\top S_p \in \R^{p \times p}$ and $\D_p = D_p^\top D_p \in \R^{p \times p}$   \;

Get $(\psi_k, \mu_k)_{k \leq p}$ the generalized eigen-elements of $(\mathsf{\Sigma}_p, \D_p + \lambda I_p)$ \;
\end{algorithm} 
\noindent At the end of Algorithm \ref{alg:Nystrom}, for $k \leq p$ the $k$-th approximated eigenvalue is $\mu_k$, and its associated eigenfunction writes $f_k(x) = \sum_{i = 1}^p {\Psi}_k[i] K(x_i,x)$, where $\Psi_k = (\D_p + \lambda I_p)^{-1/2} \psi_k$. In terms of numerical complexity, the main costs are due to building $\D_p$ in $\mathcal{O}(p^2nd)$ and finding the generalized eigen-elements in $\mathcal{O}(p^3)$. In the course of the algorithm we used the notation $\partial_{1,j} k(x_i,x_l)$ to stress that the derivative should apply to the first variable.

\subsubsection*{Random features.}
Random features~\cite{rahimi2008random} is another way to circumvent the problem by building explicitly features that approximate any translation invariant kernel $k(x,y) = k(x-y)$. More precisely, let $p \in \N^*$ be the number of random features, $(w_l)_{l\leq p}$ be random variables of $\R^d$ independently and identically distributed according to $\P(dw) = \int_{\R^d} \mathrm{e}^{-\mathrm{i} w^\top \delta} K(\delta) d \delta \, dw$ and $(b_l)_{ l\leq p}$ be independently and identically distributed according to the uniform law on $[0,2\pi]$, then the feature vector $\phi_p (x) = \sqrt{\frac{2}{p}} \left(\cos(w_1^\top x + b_1), \hdots, \cos(w_M^\top x + b_M)\right)^\top \in \R^M$ satisfies $K(x,x') \approx \langle \phi_p (x), \phi_p (x')\rangle_2 $. Therefore, random features allow to approximate $\C$ and $\D$ by $p \times p$ matrices.
\begin{algorithm}
\caption{Compute the eigenvectors with random features}
\label{alg:Fourier}

\SetAlgoLined

\KwData{$(x_i)_{i\leq n}$ and a regularizer $\lambda$}

Compute $S_p = (\cos(w_l^\top x_i + b_l))_{i \leq n, l\leq p} \in \R^{n \times p}$ \;

Compute $D^j_p = ( - w_l[j] \sin(w_l^\top x_i + b_l) )_{i \leq n, l \leq p} \in  \R^{n \times p}$, for all $j\leq d$ \;

Build $D_p = \sum_{j = 1}^d D^j_p \in \R^{n \times p}$   \;

Build $\mathsf{\Sigma}_p = S_p^\top S_p \in \R^{p \times p}$ and $\D_p = D_p^\top D_p \in \R^{p \times p}$   \;

Get $(\psi_k, \mu_k)_{k \leq p}$ the generalized eigen-elements of $(\mathsf{\Sigma}_p, \D_p + \lambda I_p)$ \;

\end{algorithm} 
\noindent At the end of Algorithm \ref{alg:Fourier}, for $k \leq p$ the $k$-th approximated eigenvalue is $\mu_k$, and its associated eigenfunction writes $f_k(x) = \langle \Psi_k, \phi_p (x)\rangle $, where $\Psi_k = (\D_p + \lambda I_p)^{-1/2} \psi_k$. Similarly as before, the main costs are due to building $\D_p$ in $\mathcal{O}(p^2nd)$ and finding the generalized eigen-elements in $\mathcal{O}(p^3)$.

\subsubsection*{Hermite polynomials.} To conclude this numerical section, and illustrate the results, we exhibit a prototypical example where the eigenfunctions of $\mathcal{L}$ are known; and we estimate them. Indeed, take $\mu(x) = e^{-x^2/2}$, the one dimensional Gaussian. Then $\mathcal{L}f = f'' - x f' $ is the \textit{Ornstein–Uhlenbeck} operator and it is known that its eigenfunctions are the Hermite polynomials~\cite{bakry2014}. We estimate the first five Hermite polynomials with our method, thanks to Algorithm \ref{alg:Nystrom}, with $n = p = 30$, regularization parameter $\lambda = 0.1$ and the Gaussian kernel. The result is displayed in Figure~1. The data points are displayed with dots and the built eigenfunctions are plotted with plain lines. Note that the approximation is only valid on $H^1(e^{-x^2/2})$, hence the estimated eigenfunctions behave poorly outside of the dataset: a striking example of this fact is the behavior of $\hat{h}_1$, that is a linear function on the dataset interval but diverges from it rapidly when there are no data.

\begin{figure}
\label{fig:hermite_polyomials}
    \centering
    \begin{minipage}{0.49\textwidth}
        \centering
        \includegraphics[width=\textwidth]{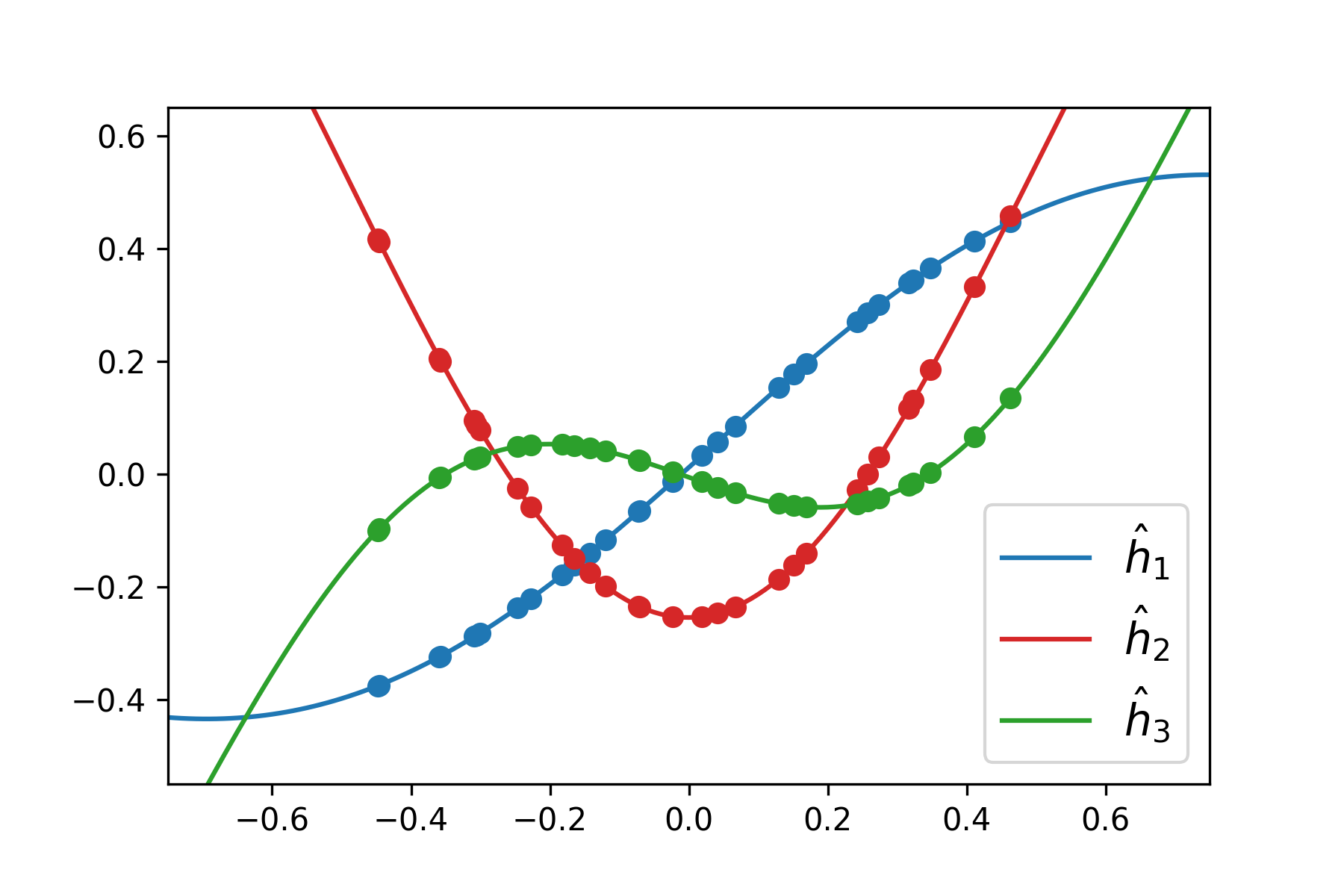} 
    \end{minipage}\hfill
    \begin{minipage}{0.49\textwidth}
        \centering
        \includegraphics[width=\textwidth]{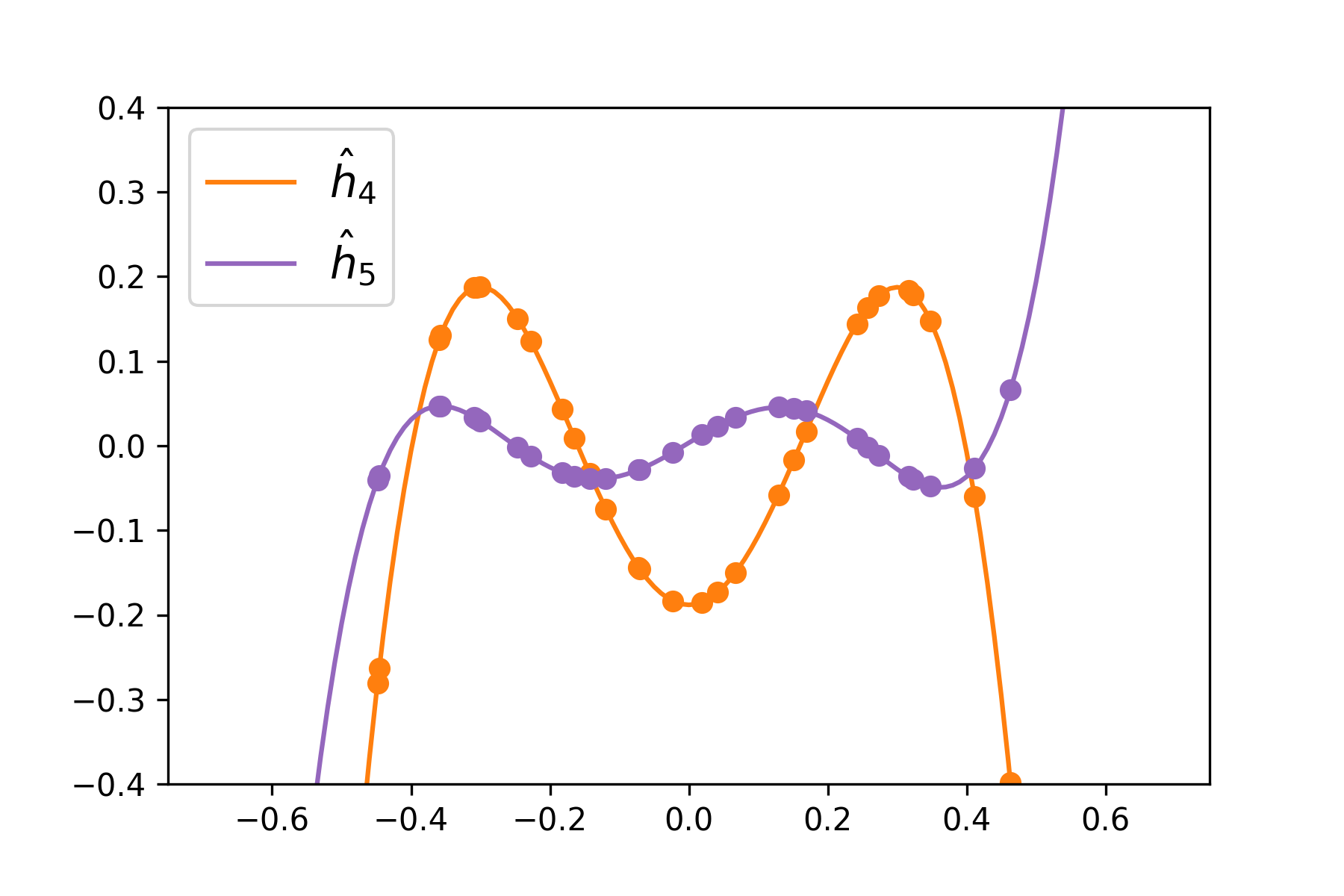} 
    \end{minipage}
    \caption{Estimation of the first five eigenfunctions of the Ornstein Uhlenbeck operator: the Hermite polynomials.}
\end{figure}

\section{Conclusion and further thoughts}

\subsubsection*{Comparison to graph Laplacians.} In this work, we proved that we could estimate the eigen-elements of the diffusion operator. This construction relies on PSD kernel methods, whereas previous rely on local averaging techniques. This leads to efficient estimations in high dimension in comparison to graph Laplacians~\cite{hein2007graph}. More precisely, under smoothness of the targeted eigenvectors, we showed that the statistical rates of the RKHS estimator does not depend on the dimension. We finally discuss computationally efficient ways to construct these eigen-elements resting on Nyström approximation~\cite{williams2000using} or Fourier Feautures~\cite{rahimi2008random}. Yet, this article focuses on the mathematical foundation of the estimator and its statistical performances: a precise computational and experimental comparison with graph Laplacians, diffusion maps and spectral clustering, as they are currently used, is left for a future work.  


\subsubsection*{The kernel choice.} Another discussion that we only sketched is the choice of the kernel. In fact, when it comes to specific applications, the art of kernel engineering is a central question. For this problem, as emphasized by the source condition, the RKHS should be chosen to approximate well the eigenfunctions of the diffusion operator in $H^1(\mu)$. Generally speaking, the interplay between $H^1(\mu)$ and $\h$ is a fundamental question at the core of RKHS approximation theory, and understanding this link should enable dimensionless approach in most of the cases. 

\subsubsection*{Markov chain.} A recent literature in applied probability aims at estimating the spectral gaps of Markov chain given the first $n$ iterates of it~\cite{hsu2015mixing}. The first eigenvalue of the estimator seems to do exactly the same, and understanding the difference between our algorithm and theirs is something worth of exploration. This will require to adapt a bit our algorithm and change our i.i.d.~assumption on the samples to a Markovian one.

\clearpage

\bibliographystyle{alpha}
\bibliography{Laplacian.bib}

\clearpage

\appendix

{\bfseries \Huge \noindent Appendix}

\vspace{1cm}

\noindent Note that is all the appendix, to avoid cumbersome notations we will make no distinctions between $\h_0$ and $\h$ and $H^1_0(\mu)$ and $H^1(\mu)$ unless it is strictly necessary.  

\section{Proof on the embedding of the diffusion operator and its inverse}

\subsection{Proof of Proposition~\ref{prop:representation_diffusion}}

We want to show that on $H^1_0(\mu)$, $\mathcal{L}^{-1} = \mathsf{S} \D^{-1} \mathsf{S}^*$. \\

\begin{proof}[Proof of Proposition~\ref{prop:representation_diffusion}]
Let $g \in \ran\ \mathsf{S}$, there exists $f \in \h$ such that $ g = \mathsf{S} f$. Let us calculate:
$$ \mathsf{S} \D^{-1} \mathsf{S}^* \mathcal{L} g = \mathsf{S} \D^{-1} \mathsf{S}^* \mathcal{L} \mathsf{S} f =  \mathsf{S} \D^{-1} \D f = \mathsf{S} f = g. $$
Moreover, as $\mathcal{L}$ is invertible on $\ran\ \mathsf{S} \cap H^1(\mu) \cap L_0^2(\mu)$, the left and right inverse are the same. Hence, $\mathcal{L}^{-1}$ and $\mathsf{S} \D^{-1} \mathsf{S}^*$ are equal on $\ran\ \mathsf{S}$.
Furthermore we can notice that $\mathcal{L}^{-1}$ and $\mathsf{S} \D^{-1} \mathsf{S}^*$ are bounded on $L^2(\mu)$. Indeed, 
\begin{align*}
\Poinca =\!\!\!\! \sup_{f \in (\ker \D)^\perp}\!\!\!\! \frac{\langle f, \mathsf{S}^*\mathsf{S} f\rangle_\h }{\langle f, \D f\rangle_\h} \geqslant \!\!\!\! \sup_{f \in (\ker \D)^\perp} \!\!\!\! \frac{\langle \D^{-1/2}f, \mathsf{S}^*\mathsf{S} \D^{-1/2} f\rangle_\h }{\langle \D^{-1/2}f, \D \D^{-1/2} f\rangle_\h} &=\!\!\!\! \sup_{f \in (\ker \D)^\perp}\!\!\!\! \frac{\langle f, \D^{-1/2} \mathsf{S}^*\mathsf{S} \D^{-1/2} f\rangle_\h }{\|f\|^2_\h} \\
&= \|\D^{-1/2} \mathsf{S}^*\mathsf{S} \D^{-1/2}\|_\h \\
&= \|\mathsf{S} \D^{-1}\mathsf{S}^*\|_{L^2(\mu)}.
\end{align*}
As $\mathcal{L}^{-1}$ and $\mathsf{S} \D^{-1} \mathsf{S}^*$ are equal and continuous on $\ran\ \mathsf{S}$, they are also equal on its closure.
\end{proof}

\subsection{Proof of Theorem~\ref{prop:eigen-elements_diffusion} through a technical result on operators between Hilbert spaces}

The lemma below gives the proof of  Theorem~\ref{prop:eigen-elements_diffusion} considering $A = S \D^{-1/2}$, $\mathcal{H}_1 = \h_0$ and $\mathcal{H}_2 = H^1_0(\mu)$.

\begin{lemma}[Link between $\mathsf{A}^*\mathsf{A}$ and  $\mathsf{A}\mathsf{A}^*$ in the compact case.]
\label{le:reversement_operator}
Let $\h_1$ and $\h_2$ two Hilbert spaces. Let $A$ be an operator from $\h_1$ to $\h_2$ such that $\mathsf{A}^*\mathsf{A}$ is a self-adjoint compact operator on $\h_1$. Then,
\begin{enumerate}[label=(\roman*), itemsep = 0pt]
\item $\mathsf{A}$ is a bounded operator from $\h_1$ to  $\h_2$.
\item $\mathsf{A}\mathsf{A}^*$ is a self-adjoint compact operator on $\h_2$ with the same spectrum as $\mathsf{A}\mathsf{A}^*$.
\item If $\lambda \neq 0$ is an eigenvalue of $\mathsf{A}^*\mathsf{A}$ with eigenvector $u \in \h_1$, then $\lambda$ is an eigenvalue of $\mathsf{A}\mathsf{A}^*$ with eigenvector $\mathsf{A}u \in \h_2$.
\end{enumerate}
\end{lemma}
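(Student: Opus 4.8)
The plan is to exploit the spectral decomposition of the compact self-adjoint operator $\mathsf{A}^*\mathsf{A}$ and transport it through $\mathsf{A}$. First I would establish (i): since $\mathsf{A}^*\mathsf{A}$ is bounded (being compact), for every $u \in \h_1$ we have $\|\mathsf{A}u\|_{\h_2}^2 = \langle \mathsf{A}^*\mathsf{A}u, u\rangle_{\h_1} \leq \|\mathsf{A}^*\mathsf{A}\|\,\|u\|_{\h_1}^2$, so $\mathsf{A}$ is bounded with $\|\mathsf{A}\| = \|\mathsf{A}^*\mathsf{A}\|^{1/2}$. This also legitimizes writing $\mathsf{A}^*$ and forming $\mathsf{A}\mathsf{A}^*$, which is automatically self-adjoint and positive.

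For (iii), let $\mathsf{A}^*\mathsf{A}u = \lambda u$ with $\lambda \neq 0$ and $u \neq 0$. Applying $\mathsf{A}$ gives $\mathsf{A}\mathsf{A}^*(\mathsf{A}u) = \mathsf{A}(\mathsf{A}^*\mathsf{A}u) = \lambda\,\mathsf{A}u$, and $\mathsf{A}u \neq 0$ because $\|\mathsf{A}u\|^2 = \langle \mathsf{A}^*\mathsf{A}u,u\rangle = \lambda\|u\|^2 \neq 0$; hence $\lambda$ is an eigenvalue of $\mathsf{A}\mathsf{A}^*$ with eigenvector $\mathsf{A}u$. The symmetric argument (swapping the roles of $\mathsf{A}$ and $\mathsf{A}^*$, using $\mathsf{A}^*v$) shows every nonzero eigenvalue of $\mathsf{A}\mathsf{A}^*$ is one of $\mathsf{A}^*\mathsf{A}$, so the two operators share the same nonzero spectrum, with matching multiplicities (the map $u \mapsto \mathsf{A}u$ is, up to the scalar $\sqrt{\lambda}$, an isometry between the corresponding eigenspaces).

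For (ii), compactness of $\mathsf{A}\mathsf{A}^*$ is the main point to argue carefully, since $\mathsf{A}$ alone need not be compact in general — here we must genuinely use that $\mathsf{A}^*\mathsf{A}$ is compact. I would diagonalize $\mathsf{A}^*\mathsf{A} = \sum_{k} \lambda_k\, e_k \otimes e_k$ on $(\ker \mathsf{A}^*\mathsf{A})^\perp = (\ker \mathsf{A})^\perp$ with $\lambda_k \downarrow 0$, set $f_k := \lambda_k^{-1/2}\mathsf{A}e_k$ (an orthonormal family in $\h_2$ by the computation above), and check that $\mathsf{A}\mathsf{A}^* = \sum_k \lambda_k\, f_k \otimes f_k$; since $\lambda_k \to 0$ this exhibits $\mathsf{A}\mathsf{A}^*$ as a norm-limit of finite-rank operators, hence compact. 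To verify the displayed identity it suffices to test against the $f_k$ together with $(\overline{\mathrm{span}}\{f_k\})^\perp$: on the latter, $\mathsf{A}^*$ maps into $\ker(\mathsf{A}^*\mathsf{A})$... — more directly, $\mathsf{A}\mathsf{A}^* f_k = \lambda_k^{-1/2}\mathsf{A}\mathsf{A}^*\mathsf{A}e_k = \lambda_k^{1/2}\mathsf{A}e_k = \lambda_k f_k$, and any $v \perp \overline{\mathrm{span}}\{f_k\}$ satisfies $\langle \mathsf{A}^* v, e_k\rangle = \langle v, \mathsf{A}e_k\rangle = \lambda_k^{1/2}\langle v, f_k\rangle = 0$ for all $k$, so $\mathsf{A}^* v \in \ker(\mathsf{A}^*\mathsf{A}) = \ker \mathsf{A}$ (the last equality since $\|\mathsf{A}w\|^2 = \langle \mathsf{A}^*\mathsf{A}w,w\rangle$), giving $\mathsf{A}\mathsf{A}^* v = 0$; thus the two operators agree on a total set.

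The main obstacle is precisely the compactness claim in (ii): one has to resist the temptation to say "$\mathsf{A}$ is compact'' (false in general) and instead route the compactness through the spectral series of $\mathsf{A}^*\mathsf{A}$, being slightly careful about kernels and closures. The rest is bookkeeping with the reproducing-property-free Hilbert space identities. A minor correction to the statement: in (ii) the phrase "with the same spectrum as $\mathsf{A}\mathsf{A}^*$'' should read "with the same spectrum as $\mathsf{A}^*\mathsf{A}$'', and "same spectrum'' is to be understood up to the point $0$ (both operators contain $0$ in the spectrum as soon as the underlying spaces are infinite-dimensional, which is the relevant regime here).
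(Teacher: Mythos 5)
Your proposal is correct and follows essentially the same route as the paper: diagonalize the compact self-adjoint operator $\mathsf{A}^*\mathsf{A}$, push the eigenvectors through $\mathsf{A}$ to obtain the orthonormal family $f_k = \lambda_k^{-1/2}\mathsf{A}e_k$, and deduce compactness of $\mathsf{A}\mathsf{A}^* = \sum_k \lambda_k\, f_k \otimes f_k$ from $\lambda_k \to 0$, with the same boundedness argument for (i) and the same intertwining computation for (iii). The only difference is minor but welcome: where the paper gets this identity by computing $(\mathsf{A}\mathsf{A}^*)^2$ and implicitly invoking uniqueness of the positive square root, you verify it directly on the $f_k$ and on the orthogonal complement of their closed span, which cleanly excludes the zero eigenvalues, treats $(f_k)$ correctly as an orthonormal family rather than a basis of $\h_2$, and your corrections to the wording of (ii) (same \emph{nonzero} spectrum as $\mathsf{A}^*\mathsf{A}$) are also right.
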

\begin{proof}
First let us notice that $\mathsf{A}$ is necessarily bounded. Indeed, let $u \in \h_1$, %
\begin{align*}
 \left\|\mathsf{A} u\right\|_{\h_2}^2 = \left\langle{\mathsf{A} u, \mathsf{A} u}\right\rangle_{\h_2} = \left\langle{\mathsf{A}^*\mathsf{A} u, u}\right\rangle_{\h_1}  \leqslant \left\|\mathsf{A}^*\mathsf{A} u\right\|_{\h_1} \left\|u\right\|_{\h_1} \leqslant \left\|\mathsf{A}^*\mathsf{A} \right\| \left\|u\right\|_{\h_1}^2.
\end{align*}%
Hence, $\|\mathsf{A}\| \leqslant \sqrt{ \left\|\mathsf{A}^*\mathsf{A} \right\|}$.
%
%

Second, as $\mathsf{A}^*\mathsf{A}$ is self-adjoint and compact on $\h_1$, there exists $(\psi_i)_{i \in \mathbb{N}}$ an orthonormal basis~$\h_1$ and  a sequence of reals $(\lambda_i)_{i \in \mathbb{N}}$ such that:
$$ \mathsf{A}^*\mathsf{A} = \sum_{i \geqslant 0} \lambda_i \psi_i \otimes \psi_i, $$
where the infinite sum stands for the strong convergence of operators. Now, by composing on the left side by $\mathsf{A}^*$ and on the right side by $\mathsf{A}^*$, we get:
$$ (\mathsf{A}\mathsf{A}^*)^2 = \mathsf{A}\mathsf{A}^*\mathsf{A}\mathsf{A}^* = \sum_{i \geqslant 0} \lambda_i (\mathsf{A} \psi_i) \otimes (\mathsf{A}\psi_i) = \sum_{i \geqslant 0} \lambda^2_i (\mathsf{A} \frac{\psi_i}{\sqrt{\lambda_i}}) \otimes (\mathsf{A}\frac{\psi_i}{\sqrt{\lambda_i}}). $$
Hence, $\mathsf{A}\mathsf{A}^* = \sum_{i \geqslant 0} \lambda_i (\mathsf{A} \frac{\psi_i}{\sqrt{\lambda_i}}) \otimes (\mathsf{A}\frac{\psi_i}{\sqrt{\lambda_i}})$ and is a compact operator. We can of course check that $\left(\mathsf{A} \frac{\psi_i}{\sqrt{\lambda_i}}\right)_{i \in \mathbb{N}}$ is an orthonormal basis of $\h_2$: $\left \langle \mathsf{A} \frac{\psi_i}{\sqrt{\lambda_i}}, \mathsf{A} \frac{\psi_j}{\sqrt{\lambda_j}}\right\rangle = (\lambda_i \lambda_j)^{-1/2} \left \langle \psi_i, \mathsf{A}^*\mathsf{A}\psi_j\right\rangle = \sqrt{\frac{\lambda_j}{\lambda_i}} \left \langle \psi_i,\psi_j\right\rangle = \delta_{ij}$.
\end{proof}

\section{Bound on the variance term}
\label{sec:technical_inequalities}

We begin first to recall usual concentration inequalities that will help handling the variance term.

\subsection{Concentration inequalities}

We first begin by recalling some concentration inequalities for sums of random vectors and operators.

\begin{proposition}[Bernstein’s inequality for sums of random vectors]
\label{prop:Bernstein_vector}
Let $z_1,\hdots, z_n$ be a sequence of independent identically and distributed random elements of a separable Hilbert space
$\h$. Assume that $\E \|z_1\|<+\infty$ and note $\mu = \E z_1$. Let $\sigma, L \geqslant 0$ such that, $$ \forall p \geqslant 2, \qquad\E \left\|z_1-\mu\right\| ^p_\h \leqslant \frac{1}{2} p!\sigma^2L^{p-2}.$$ 
Then, for any $\delta \in (0,1]$,
\begin{align}
\left\|\frac{1}{n}\sum_{i=1}^n z_i - \mu\right\|_\h\leqslant \frac{2L \log(2/\delta)}{n} + \sqrt{\frac{2 \sigma^2 \log(2/\delta)}{n}},
\end{align}
with probability at least $1-\delta$.
\end{proposition}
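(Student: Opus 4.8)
\emph{Approach.} The statement is the classical Bernstein inequality for i.i.d.\ sums valued in a separable Hilbert space (in the literature it is due to Yurinskii and to Pinelis--Sakhanenko). The plan is to prove it by a Chernoff bound on the moment generating function of the norm of the centered sum, the only genuinely Hilbert-space-specific ingredient being the $2$-uniform smoothness of $\|\cdot\|_\h$, and then to invert the resulting sub-exponential tail by an elementary quadratic computation that reproduces the stated constants. First I would reduce to the centered case: set $Z_i := z_i - \mu$, so that $Z_1,\dots,Z_n$ are i.i.d.\ with $\E[Z_i]=0$ and the hypothesis reads $\E\|Z_i\|_\h^p \leq \tfrac12 p!\,\sigma^2 L^{p-2}$ for every integer $p\geq 2$ (in particular $\E\|Z_i\|_\h^2\leq\sigma^2$); writing $S_n := \sum_{i=1}^n Z_i$, the claim is equivalent to the tail bound $\P(\|S_n\|_\h \geq t) \leq 2\exp\!\big(-\tfrac{t^2}{2(n\sigma^2+Lt)}\big)$ for all $t>0$.

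\emph{Main lemma (Laplace transform).} The technical heart is to show that, for every $0<s<1/L$,
\[ \E\!\big[\exp(s\|S_n\|_\h)\big] \;\leq\; 2\exp\!\Big(\frac{n\sigma^2 s^2/2}{1-sL}\Big). \]
I would start from the elementary inequality $e^{r}\leq e^{r}+e^{-r}=2\cosh(r)$, which gives $\E\exp(s\|S_n\|_\h)\leq 2\,\E\cosh(s\|S_n\|_\h)$; this is what removes the non-smoothness of the norm at the origin and is exactly where the factor $2$ enters. One then controls $\E\cosh(s\|S_n\|_\h)$ by conditioning successively on $\mathcal{F}_{i-1}:=\sigma(Z_1,\dots,Z_{i-1})$ and using the $2$-uniform smoothness of $\h$ (a Pinelis-type addition inequality for $\cosh$ of a norm along a Hilbert-space martingale), so that the cross term drops in conditional expectation and one is left with the single-step contribution $\E\cosh(s\|Z_i\|_\h)\leq 1+\sum_{p\geq2}\tfrac{s^p}{p!}\E\|Z_i\|_\h^p \leq \exp\!\big(\tfrac{\sigma^2 s^2/2}{1-sL}\big)$, where the last bound is the usual Bernstein series estimate $\sum_{p\geq2}\tfrac{s^p}{p!}\cdot\tfrac12 p!\,\sigma^2 L^{p-2}=\tfrac{\sigma^2 s^2/2}{1-sL}$. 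Iterating over $i=1,\dots,n$ yields the displayed bound; if a fully self-contained version of this martingale step becomes too long I would simply cite Pinelis--Sakhanenko.

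\emph{Chernoff and inversion.} Markov's inequality applied to $\exp(s\|S_n\|_\h)$ together with the main lemma gives $\P(\|S_n\|_\h\geq t)\leq 2\exp\!\big(-st+\tfrac{n\sigma^2 s^2/2}{1-sL}\big)$, and the choice $s=t/(n\sigma^2+Lt)\in(0,1/L)$ simplifies this to $\P(\|S_n\|_\h\geq t)\leq 2\exp\!\big(-\tfrac{t^2}{2(n\sigma^2+Lt)}\big)$. To invert, put $u:=\log(2/\delta)$ and $t_0:=2Lu+\sqrt{2un\sigma^2}$; since $\sqrt{L^2u^2+2un\sigma^2}\leq Lu+\sqrt{2un\sigma^2}$, the number $t_0$ dominates the positive root of $t^2-2Lut-2un\sigma^2=0$, and since $t\mapsto t^2/\big(2(n\sigma^2+Lt)\big)$ is increasing on $(0,\infty)$ we get $t_0^2/\big(2(n\sigma^2+Lt_0)\big)\geq u$, hence $\P(\|S_n\|_\h\geq t_0)\leq 2e^{-u}=\delta$. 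Dividing $t_0$ by $n$ gives exactly $\tfrac{2L\log(2/\delta)}{n}+\sqrt{\tfrac{2\sigma^2\log(2/\delta)}{n}}$, which is the asserted bound (and explains the $\log(2/\delta)$ rather than $\log(1/\delta)$).

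\emph{Main obstacle.} The reduction, the scalar Bernstein bookkeeping, and the inversion are routine. The one delicate point is the Hilbert-space Laplace-transform bound: unlike in $\R$, the norm is not differentiable at the origin and one cannot reduce to a single coordinate or a simple union bound, so one must smooth (here via $\cosh$) and genuinely exploit the $2$-uniform smoothness of $\h$ through a Pinelis-type exponential inequality for Hilbert-space martingales — this is the only place where separability and the Hilbert structure are really used, and it is the part I would either carry out with care or cite.
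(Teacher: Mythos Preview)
Your proposal is correct, but the paper itself does not give a proof: it simply writes ``This is a restatement of Theorem 3.3.4 of \cite{Yurinsky1995}'' and stops. So the comparison is between a citation and an actual argument. What you sketch is precisely the Yurinskii/Pinelis route that underlies the cited theorem: reduce to the centered sum, control the Laplace transform of $\|S_n\|_\h$ via $\cosh$ and the $2$-smoothness of the Hilbert norm, then Chernoff-and-invert. Your inversion arithmetic is clean and recovers the constants $2L\log(2/\delta)/n+\sqrt{2\sigma^2\log(2/\delta)/n}$ exactly. The only place your sketch is a little loose is the ``single-step contribution'' sentence: the Pinelis martingale inequality does not literally factor as $\prod_i \E\cosh(s\|Z_i\|_\h)$, and the addition inequality for $\cosh(\|\cdot\|)$ requires a bit more care than ``the cross term drops''; but you already flag this and offer to cite Pinelis--Sakhanenko, which is entirely in the spirit of the paper's own treatment. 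In short: the paper outsources the proof, you supply it, and what you supply is the standard one.
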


\begin{proof}
This is a restatement of Theorem 3.3.4 of \cite{Yurinsky1995}.
\end{proof}

\begin{proposition}[Bernstein’s inequality for sums of random operators]
\label{prop:Bernstein_operator}
Let $\h$ be a separable Hilbert space and let $X_1, \hdots, X_n$ be a sequence of independent and identically distributed
self-adjoint random operators on $\h$. Assume that  $\E(X_i) = 0$ and that there exist $T > 0$ and $S$ a
positive trace-class operator such that $\|X_i\| \leqslant T$ almost surely and $\E X_i^2 \preccurlyeq S$ for any $i \in \{1, \hdots, n\}$. Then, for any $\delta \in (0,1]$, the following inequality holds:
\begin{align}
\label{eq:concentration_operator}
\left\|\frac{1}{n}\sum_{i=1}^n X_i\right\| \leqslant \frac{2T\beta}{3n} + \sqrt{\frac{2 \|S\|\beta}{n}},
\end{align}
with probability at least $1-\delta$ and where $\beta = \log \frac{2 \mathrm{ Tr } S}{\|S\|\delta}$.
\end{proposition}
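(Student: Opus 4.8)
The plan is to recognize this as the Hilbert-space, effective-rank version of the noncommutative Bernstein inequality and to prove it via the matrix Laplace transform method (Ahlswede--Winter, Tropp) together with Minsker's effective-rank refinement, so that everything reduces to a one-line optimization. Concretely, I would first reformulate the goal as a tail bound: it suffices to show that for every $t>0$,
\begin{align*}
\P\!\left(\Big\|\sum_{i=1}^n X_i\Big\|\geqslant t\right)\;\leqslant\;\frac{2\,\mathrm{Tr}\,S}{\|S\|}\exp\!\left(-\frac{t^2/2}{n\|S\|+tT/3}\right),
\end{align*}
because equating the right-hand side to $\delta$ turns the exponent into $\beta=\log\frac{2\,\mathrm{Tr}\,S}{\|S\|\delta}$ and leaves the quadratic $t^2-\tfrac{2\beta T}{3}t-2\beta n\|S\|=0$, whose positive root is at most $\tfrac{2\beta T}{3}+\sqrt{2\beta n\|S\|}$ by $\sqrt{a+b}\leqslant\sqrt a+\sqrt b$; dividing by $n$ then yields \eqref{eq:concentration_operator}.

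For the tail bound itself I would run the standard Chernoff argument on the largest eigenvalue: for $\theta\in(0,3/T)$, $\P\big(\lambda_{\max}(\sum_i X_i)\geqslant t\big)\leqslant e^{-\theta t}\,\E\,\mathrm{Tr}\exp(\theta\sum_i X_i)$, with the symmetric bound for $-\sum_i X_i$, so a union bound over the two signs controls the operator norm and produces the prefactor $2$. The trace exponential is handled by Lieb's concavity theorem plus independence, giving $\E\,\mathrm{Tr}\exp(\theta\sum_i X_i)\leqslant\mathrm{Tr}\exp\!\big(\sum_i\log\E e^{\theta X_i}\big)$, followed by the operator Bernstein moment bound $\E e^{\theta X_i}\preccurlyeq\exp\!\big(g(\theta)\,\E X_i^2\big)$ with $g(\theta)=\tfrac{\theta^2/2}{1-\theta T/3}$ — valid since $\E X_i=0$ and $\|X_i\|\leqslant T$ — together with the hypothesis $\E X_i^2\preccurlyeq S$ and operator monotonicity, which yield $\E\,\mathrm{Tr}\exp(\theta\sum_i X_i)\leqslant\mathrm{Tr}\exp(g(\theta)nS)$.

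The last step is to bound $\mathrm{Tr}\exp(g(\theta)nS)$ without paying a factor equal to the dimension of $\h$, which is infinite here: I would invoke Minsker's lemma, which exploits that $S$ is positive trace-class with spectrum in $[0,\|S\|]$ and that $x\mapsto(e^x-1)/x$ is nondecreasing to bound this quantity by $\tfrac{\mathrm{Tr}\,S}{\|S\|}\,e^{g(\theta)n\|S\|}$ up to an absolute constant. Then optimizing $e^{-\theta t+g(\theta)n\|S\|}$ over $\theta\in(0,3/T)$ — the usual Bernstein balance, $\theta=t/(n\|S\|+tT/3)$ — recovers exactly the exponent $-\tfrac{t^2/2}{n\|S\|+tT/3}$. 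I expect this effective-rank step to be the main obstacle: replacing $\dim\h$ by $\mathrm{Tr}(S)/\|S\|$ and making the whole argument rigorous in a genuinely infinite-dimensional separable $\h$, where $\sum_i X_i$ need not be compact so that $\mathrm{Tr}\exp(\cdot)$ must be handled with care, is the only nontrivial point; it can be imported essentially verbatim from Minsker's extension of Bernstein's inequality for self-adjoint operators, which also underlies the operator concentration bounds used in the kernel-methods literature (see, e.g.,~\cite{rudi2015less}).
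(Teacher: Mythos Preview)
Your proposal is correct and is precisely the route the paper takes: the paper does not write out a proof but simply invokes Theorem~7.3.1 of Tropp's user-friendly matrix tail bounds together with Minsker's extension to separable Hilbert spaces, which is exactly the Lieb-based Laplace transform argument with the effective-rank $\mathrm{Tr}(S)/\|S\|$ replacement that you outline. In other words, you have unpacked the two citations the paper gives; there is nothing to add.
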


\begin{proof}
The theorem is a restatement of Theorem 7.3.1 of \cite{Tropp2012UserFriendlyTF} generalized to the separable
Hilbert space case by means of the technique in Section 4 of \cite{Minsker2011}.
\end{proof}

\subsection{Operator bounds}
\label{subsec:operators}

\begin{lemma} 
Under Assumptions \ref{ass:smoothness} and \ref{ass:boundedness}, $\Cov$, and $\D$ are trace-class operators.
\end{lemma}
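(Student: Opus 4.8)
The plan is to show that $\Cov$ and $\D$ are positive self-adjoint operators with finite trace; for positive operators this is equivalent to being trace-class, and the desired trace-norm bounds then come for free from Assumption~\ref{ass:boundedness}.

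First I would record the structure of the two operators. Both are Bochner averages of positive finite-rank operators: $K_x \otimes K_x$ is rank one and positive, while $\nabla K_x \otimes_d \nabla K_x = \sum_{i=1}^d \partial_i K_x \otimes \partial_i K_x$ is positive of rank at most $d$. Here Assumption~\ref{ass:smoothness} is precisely what guarantees $\partial_i K_x \in \h$ (via the partial-derivative reproducing property of \cite{Zhou2008}), so that these objects are well defined; consequently $\Cov$ and $\D$ are themselves positive and self-adjoint as $\E_\mu$-averages of such operators.

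Next I would compute the trace by fixing an orthonormal basis $(e_j)_j$ of $\h$ and summing the nonnegative diagonal terms, interchanging the sum with $\E_\mu$ by Tonelli's theorem:
\begin{align*}
\mathrm{Tr}(\Cov) &= \sum_j \E_\mu\big[\langle e_j, K_X\rangle_\h^2\big] = \E_\mu\Big[\sum_j \langle e_j, K_X\rangle_\h^2\Big] = \E_\mu\big[\|K_X\|_\h^2\big] = \E_\mu[K(X,X)], \\
\mathrm{Tr}(\D) &= \sum_j \E_\mu\Big[\sum_{i=1}^d \langle e_j, \partial_i K_X\rangle_\h^2\Big] = \E_\mu\Big[\sum_{i=1}^d \|\partial_i K_X\|_\h^2\Big] = \E_\mu\big[\|\nabla K_X\|^2\big].
\end{align*}
By Assumption~\ref{ass:boundedness}, $K(X,X) \leq \mathcal{K}$ and $\|\nabla K_X\|^2 = \sum_{i=1}^d \partial_{x^i}\partial_{y^i}K(X,X) \leq \mathcal{K}_d$, so $\mathrm{Tr}(\Cov) \leq \mathcal{K} < \infty$ and $\mathrm{Tr}(\D) \leq \mathcal{K}_d < \infty$. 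Being positive with finite trace, both operators are trace-class. Equivalently, one can note that $x \mapsto K_x \otimes K_x$ and $x \mapsto \nabla K_x \otimes_d \nabla K_x$ are continuous, hence measurable, maps into the Banach space of trace-class operators on $\h$ with trace-norms uniformly bounded by $\mathcal{K}$ and $\mathcal{K}_d$; they are therefore Bochner-integrable there, and the bounds on $\|\Cov\|_1$ and $\|\D\|_1$ follow from convexity of the trace norm.

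The computation itself is a one-liner, so the only points needing a bit of care — the "main obstacle" such as it is — are the two facts borrowed from the preceding discussion: that $\partial_i K_x \in \h$ with $\|\partial_i K_x\|_\h^2 = \partial_{x^i}\partial_{y^i} K(x,x)$ (Assumption~\ref{ass:smoothness} together with the identity displayed just after Assumption~\ref{ass:boundedness}), and the legitimacy of exchanging the sum and the expectation, which holds because every summand is nonnegative.
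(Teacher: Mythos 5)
Your proposal is correct and follows essentially the same route as the paper: compute the trace of the positive self-adjoint operator in an orthonormal basis, swap the sum with $\E_\mu$ (justified by nonnegativity), apply Parseval to get $\E_\mu[K(X,X)]$ and $\E_\mu[\|\nabla K_X\|^2]$, and bound these by $\mathcal{K}$ and $\mathcal{K}_d$ using Assumption~\ref{ass:boundedness}. The paper writes this out only for $\D$ and remarks the case of $\Cov$ is analogous, whereas you treat both explicitly and add an optional Bochner-integrability remark; these are only cosmetic differences.
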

\begin{proof}
We only prove the result for $\D$, the proof for $\Cov$ being similar. Consider an orthonormal basis $(\phi_i)_{i\in \mathbb{N}}$ of $\h$. Then, as $\D$ is a positive self adjoint operator,
\begin{align*}
\tr\ \D &= \sum_{i = 1}^\infty \langle \D \phi_i,\phi_i \rangle = \sum_{i = 1}^\infty \E_{\mu} \left[\sum_{j=1}^d\langle \partial_j K_x,\phi_i \rangle^2\right] = \E_{\mu} \left[\sum_{i = 1}^\infty \sum_{j=1}^d\langle \partial_j K_x,\phi_i \rangle^2\right] \\
&= \E_{\mu} \left[\sum_{j=1}^d\left\|\partial_j K_x\right\|^2\right] \leqslant \mathcal{K}_d.
\end{align*}
Hence, $\D$ is a trace-class operator.
\end{proof}

\subsection{Proof of Proposition~\texorpdfstring{\ref{prop:hat_P_to_P_lambda_dimensionality_reduction}}{d}: the variance bound}

We recall here the expression of the variance of our estimator we want to control:
\begin{align*}
    \text{Variance} &:= \left\|\widehat{\D}_\lambda^{-1/2} \CCovn \widehat{\D}_\lambda^{-1/2} - \D_\lambda^{-1/2} \CCov \D_\lambda^{-1/2}\right\| \\
    & = \left\|\widehat{\D}_\lambda^{-1/2} \widehat{\CCov} \widehat{\D}_\lambda^{-1/2} - \widehat{\D}_\lambda^{-1/2} \CCov \widehat{\D}_\lambda^{-1/2}\right\| + \left \|\widehat{\D}_\lambda^{-1/2} \CCov \widehat{\D}_\lambda^{-1/2} - \D_\lambda^{-1/2} \CCov \D_\lambda^{-1/2}\right\| \\
    &\leqslant \left\|\widehat{\D}_\lambda^{-1/2} (\widehat{\CCov}-\CCov) \widehat{\D}_\lambda^{-1/2}\right\| + \left \|\widehat{\D}_\lambda^{-1/2} \CCov \widehat{\D}_\lambda^{-1/2} - \D_\lambda^{-1/2} \CCov \widehat{\D}_\lambda^{-1/2}\right\| + \left \|\D_\lambda^{-1/2} \CCov \widehat{\D}_\lambda^{-1/2} - \D_\lambda^{-1/2} \CCov \D_\lambda^{-1/2}\right\| \\
    &\leqslant \underbrace{\left\|\widehat{\D}_\lambda^{-1/2} (\widehat{\CCov}-\CCov) \widehat{\D}_\lambda^{-1/2}\right\|}_{\text{Lemma~\ref{lemma:concentration_C}}} + \underbrace{\left \|(\widehat{\D}_\lambda^{-1/2} - \D_\lambda^{-1/2}) \CCov \widehat{\D}_\lambda^{-1/2} \right\|}_{\text{Lemma~\ref{lemma:concentration_left}}} +  \underbrace{\left \|\D_\lambda^{-1/2} \CCov (\widehat{\D}_\lambda^{-1/2} -  \D_\lambda^{-1/2}) \right\|}_{\text{Lemma~\ref{lemma:concentration_left} as well}}. 
\end{align*}
The following quantities are useful for the estimates in this section:
\begin{align*}
\Nla = \sup_{x \in \mathrm{supp} (\mu) } \left\| \D_{\lambda}^{-1/2} K_x \right\|^2_\h, \ \textrm{and} \quad \Fla = \sup_{x \in \mathrm{supp} (\mu) } \left\| \D_{\lambda}^{-1/2} \nabla K_x \right\|^2_\h.
\end{align*}
Note that under Assumption~\ref{ass:boundedness}, $\Nla \leqslant \frac{\mathcal{K}}{\lambda}$ and $\Fla \leqslant \frac{\mathcal{K}_d}{\lambda}$. Note also that under refined assumptions on the spectrum of $\D$, we could have a better dependence of the latter bounds with respect to $\lambda$. We first state the overall result before showing all the auxiliary lemmas below:
\begin{lemma}
\label{lemma:explicit_bound_variance}
For any $ 0< \lambda < \|\D\|$, $n \geqslant 15 \Fla \log \frac{4\,\mathrm{Tr } \D }{\lambda \delta }$ and any $\delta \in (0,1/2]$,
\begin{align*}
\left\|\widehat{\D}_\lambda^{-1/2} \CCovn \widehat{\D}_\lambda^{-1/2} - \D_\lambda^{-1/2} \CCov \D_\lambda^{-1/2}\right\| &\leqslant \frac{4 \Nla \log \frac{2\,\Poinca\mathrm{Tr } \CCov }{ \lambda \delta } }{3n} + \left[\frac{ 2\ \Poinca\ \Nla \log \frac{4\,\Poinca\mathrm{Tr } \CCov }{ \lambda \delta }}{n}\right]^{1/2} \\ 
&\hspace*{0.5cm}+ 4 \left(\Poinca \left\|\CCov\right\|\right)^{1/2}  \left(\frac{4 \Fla \log \frac{4\,\mathrm{Tr } \D }{\lambda \delta }}{3n} + \sqrt{\frac{ 2\ \Fla \log \frac{4\,\mathrm{Tr } \D }{\lambda \delta }}{n}} \right),
\end{align*}
with probability at least $1-2\delta$.
\end{lemma}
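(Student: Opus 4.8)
The plan is to bound the three summands in the display immediately preceding the statement, controlling all of them through the single scalar random variable $\theta:=\|\D_\lambda^{-1/2}(\D-\widehat{\D})\D_\lambda^{-1/2}\|$, and invoking Proposition~\ref{prop:Bernstein_operator} twice. First I apply it to the centred i.i.d.\ self-adjoint operators $X_i=\D_\lambda^{-1/2}\big(\nabla K_{x_i}\otimes_d\nabla K_{x_i}\big)\D_\lambda^{-1/2}-\D_\lambda^{-1/2}\D\,\D_\lambda^{-1/2}$, for which $\|\D_\lambda^{-1/2}\nabla K_x\otimes_d\nabla K_x\D_\lambda^{-1/2}\|\le\|\D_\lambda^{-1/2}\nabla K_x\|^2\le\Fla$ and $\|\D_\lambda^{-1/2}\D\,\D_\lambda^{-1/2}\|\le 1$, so $\|X_i\|\le 2\Fla$, while $\E X_i^2\preccurlyeq\Fla\,\D_\lambda^{-1/2}\D\,\D_\lambda^{-1/2}=:S$ with $\|S\|\le\Fla$ and $\mathrm{Tr}\,S/\|S\|\le\mathrm{Tr}(\D_\lambda^{-1}\D)/\|\D_\lambda^{-1}\D\|\le 2\,\mathrm{Tr}\,\D/\lambda$ (using $\lambda\le\|\D\|$). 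Proposition~\ref{prop:Bernstein_operator} then yields, with probability $\ge 1-\delta$, that $\theta\le\tfrac{4\Fla}{3n}\log\tfrac{4\,\mathrm{Tr}\,\D}{\lambda\delta}+\sqrt{\tfrac{2\Fla}{n}\log\tfrac{4\,\mathrm{Tr}\,\D}{\lambda\delta}}$ --- which is exactly the bracket multiplying $4(\Poinca\|\CCov\|)^{1/2}$ in the last summand of the statement --- and the hypothesis $n\ge 15\,\Fla\log\tfrac{4\,\mathrm{Tr}\,\D}{\lambda\delta}$ is calibrated precisely so that this forces $\theta\le\tfrac12$.

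On the event $\{\theta\le\tfrac12\}$ the usual comparison estimates come for free: since $\D_\lambda^{-1/2}\widehat{\D}_\lambda\D_\lambda^{-1/2}=I+\D_\lambda^{-1/2}(\widehat{\D}-\D)\D_\lambda^{-1/2}$ has spectrum in $[\tfrac12,\tfrac32]$, one gets $\|\D_\lambda^{1/2}\widehat{\D}_\lambda^{-1/2}\|=\|\widehat{\D}_\lambda^{-1/2}\D_\lambda^{1/2}\|\le(1-\theta)^{-1/2}\le\sqrt2$. For the first summand I then write $\|\widehat{\D}_\lambda^{-1/2}(\widehat{\CCov}-\CCov)\widehat{\D}_\lambda^{-1/2}\|\le\|\widehat{\D}_\lambda^{-1/2}\D_\lambda^{1/2}\|^2\,\|\D_\lambda^{-1/2}(\widehat{\CCov}-\CCov)\D_\lambda^{-1/2}\|\le 2\,\|\D_\lambda^{-1/2}(\widehat{\CCov}-\CCov)\D_\lambda^{-1/2}\|$ and apply Proposition~\ref{prop:Bernstein_operator} once more, to $Y_i=\D_\lambda^{-1/2}(K_{x_i}\otimes K_{x_i})\D_\lambda^{-1/2}-\D_\lambda^{-1/2}\CCov\,\D_\lambda^{-1/2}$, using $\|\D_\lambda^{-1/2}K_x\|^2\le\Nla$ (whence $\|Y_i\|\le 2\Nla$) and $\E Y_i^2\preccurlyeq\Nla\,\D_\lambda^{-1/2}\CCov\,\D_\lambda^{-1/2}$. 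The essential deterministic input is $\|\D_\lambda^{-1/2}\CCov\,\D_\lambda^{-1/2}\|=\|\CCov^{1/2}\D_\lambda^{-1}\CCov^{1/2}\|\le\|\CCov^{1/2}\D^{-1}\CCov^{1/2}\|=\Poinca$, obtained exactly as in the proof of Proposition~\ref{prop:representation_diffusion}; it lets one take $\|S\|\le\Poinca\Nla$ and $\mathrm{Tr}\,S\le\Nla\,\mathrm{Tr}\,\CCov/\lambda$, and plugging into Proposition~\ref{prop:Bernstein_operator} (with the logarithmic factor bounded by $\log\tfrac{2\Poinca\,\mathrm{Tr}\,\CCov}{\lambda\delta}$) gives the first two summands of the statement.

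It remains to bound the last two summands, $\|(\widehat{\D}_\lambda^{-1/2}-\D_\lambda^{-1/2})\CCov\widehat{\D}_\lambda^{-1/2}\|$ and its transpose $\|\D_\lambda^{-1/2}\CCov(\widehat{\D}_\lambda^{-1/2}-\D_\lambda^{-1/2})\|$. I factor $\CCov=\CCov^{1/2}\CCov^{1/2}$, use $\|\CCov^{1/2}\widehat{\D}_\lambda^{-1/2}\|=\|\widehat{\D}_\lambda^{-1/2}\CCov\widehat{\D}_\lambda^{-1/2}\|^{1/2}\le\|\widehat{\D}_\lambda^{-1/2}\D_\lambda^{1/2}\|\,\|\D_\lambda^{-1/2}\CCov^{1/2}\|\le\sqrt{2\Poinca}$ and $\|\D_\lambda^{-1/2}\CCov^{1/2}\|\le\sqrt{\Poinca}$, and control $\|(\widehat{\D}_\lambda^{-1/2}-\D_\lambda^{-1/2})\CCov^{1/2}\|$ through the integral representation $T^{-1/2}=\tfrac1\pi\int_0^\infty(T+s)^{-1}s^{-1/2}\,ds$ together with the second resolvent identity, i.e.\ $\widehat{\D}_\lambda^{-1/2}-\D_\lambda^{-1/2}=\tfrac1\pi\int_0^\infty(\widehat{\D}_\lambda+s)^{-1}(\D-\widehat{\D})(\D_\lambda+s)^{-1}s^{-1/2}\,ds$: inserting $\D_\lambda^{1/2}\D_\lambda^{-1/2}$ on either side of $(\D-\widehat{\D})$ pulls out the factor $\theta$, and the surviving resolvent pieces are estimated by operator monotonicity ($\|(\widehat{\D}_\lambda+s)^{-1}\D_\lambda^{1/2}\|^2\le 2/(\lambda+s)$ on $\{\theta\le\tfrac12\}$, and $\|\D_\lambda^{1/2}(\D_\lambda+s)^{-1}\CCov^{1/2}\|^2\le\|\CCov\|/(\lambda+s)$). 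Integrating over $s$ and collecting constants gives, for each of these two terms, a bound of the shape $c\,\sqrt{\Poinca\,\|\CCov\|}\,\theta$; substituting the Bernstein bound for $\theta$ from the first step produces the last summand, with the prefactor $4(\Poinca\|\CCov\|)^{1/2}$ accounting for the two terms together. A union bound over the two Bernstein events --- with $\delta$ replaced by $\delta/2$, which is what turns the ``$2$'' into a ``$4$'' inside the logarithms --- then assembles the three pieces.

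The step I expect to be the main obstacle is this last one: transferring the \emph{$\D_\lambda$-weighted} operator concentration $\theta$ of $\D$ into a perturbation estimate for $\D_\lambda^{-1/2}$ \emph{paired with the factor $\CCov$}, with $\Poinca$ and $\|\CCov\|$ coming out at exactly the claimed exponents and the $\lambda$-dependence kept sharp. The integral-representation and operator-monotonicity bookkeeping is delicate precisely because $\D$ and $\CCov$ need not commute, so one cannot merely diagonalise and compare eigenvalues; every resolvent estimate has to be carried out at the operator level, and it is the competition between the $\D$-smallness of $\CCov$ (through $\Poinca$) and the boundedness $\|\CCov\|\le\mathcal K$ that dictates how to cut the $s$-integral. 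Everything else is a fairly mechanical double use of Proposition~\ref{prop:Bernstein_operator} together with the already-established inequality $\|\D_\lambda^{-1/2}\CCov\,\D_\lambda^{-1/2}\|\le\Poinca$.
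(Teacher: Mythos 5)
Your overall architecture is the paper's: the same three-term decomposition, the same two applications of Proposition~\ref{prop:Bernstein_operator} (these are exactly Lemmas~\ref{lemma:concentration_C} and~\ref{lemma:concentration_delta}), the same event $\{\theta\le\tfrac12\}$ enforced by $n\ge 15\,\Fla\log\frac{4\,\mathrm{Tr}\,\D}{\lambda\delta}$ giving $\|\widehat{\D}_\lambda^{-1/2}\D_\lambda^{1/2}\|^2\le2$ (Lemma~\ref{lemma:magic_2}), and the same deterministic input $\|\D_\lambda^{-1/2}\CCov\D_\lambda^{-1/2}\|\le\Poinca$. The genuine divergence --- and the gap --- is in the cross terms $\left\|(\widehat{\D}_\lambda^{-1/2}-\D_\lambda^{-1/2})\CCov\widehat{\D}_\lambda^{-1/2}\right\|$ and its adjoint. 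The paper (Lemma~\ref{lemma:concentration_left}) extracts the factor $(\Poinca\|\CCov\|)^{1/2}\,\|\widehat{\D}_\lambda^{-1/2}\D_\lambda^{1/2}\|^2\,\theta$ by a direct algebraic factorization of the difference of inverse square roots, which is what produces the prefactor $2(\Poinca\|\CCov\|)^{1/2}$ per term with no residual $\lambda$-dependence. Your substitute --- the representation $T^{-1/2}=\frac1\pi\int_0^\infty(T+s)^{-1}s^{-1/2}ds$ plus the resolvent identity --- is rigorous, but with the per-$s$ estimates you state, namely $\|(\widehat{\D}_\lambda+s)^{-1}\D_\lambda^{1/2}\|\le\sqrt{2/(\lambda+s)}$ and $\|\D_\lambda^{1/2}(\D_\lambda+s)^{-1}\CCov^{1/2}\|\le\sqrt{\|\CCov\|/(\lambda+s)}$, the integral evaluates to $\frac{\sqrt{2\|\CCov\|}}{\pi}\,\theta\int_0^\infty\frac{s^{-1/2}}{\lambda+s}ds=\sqrt{2\|\CCov\|/\lambda}\;\theta$, so after multiplying by $\|\CCov^{1/2}\widehat{\D}_\lambda^{-1/2}\|\le\sqrt{2\Poinca}$ you get $2(\Poinca\|\CCov\|/\lambda)^{1/2}\theta$: the claimed last summand inflated by $\lambda^{-1/2}$. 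Cutting the $s$-integral with the alternative bound $\|\D_\lambda^{1/2}(\D_\lambda+s)^{-1}\CCov^{1/2}\|\le\sqrt{\Poinca}$ (the ``competition'' you allude to) removes the $\lambda^{-1/2}$ but leaves a term of order $\Poinca\,\theta\,\log\frac{\|\CCov\|}{\Poinca\lambda}$, which is still not of the form $c\,(\Poinca\|\CCov\|)^{1/2}\theta$. Since $\theta$ is of order $(\Fla/n)^{1/2}$, this is not a cosmetic loss: it degrades the variance bound as $\lambda\to0$ and would alter the admissible $\lambda_n$ and the $n^{-1/4}$ rate downstream. The sentence ``integrating over $s$ and collecting constants gives a bound of the shape $c\sqrt{\Poinca\|\CCov\|}\,\theta$'' is precisely the step you have not carried out, and with the listed estimates it cannot come out that way. (Your instinct that the non-commutativity of $\D$ and $\widehat{\D}$ is the delicate point is fair --- the paper's first identity in Lemma~\ref{lemma:concentration_left} treats $\widehat{\D}_\lambda^{-1/2}-\D_\lambda^{-1/2}$ as if it factorized like $\widehat{\D}_\lambda^{-1}-\D_\lambda^{-1}$ --- but the $s$-integral route you propose pays for rigor with exactly the factor the stated lemma does not allow, so it does not prove the statement as written.)

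Two smaller bookkeeping mismatches: passing from $\widehat{\D}_\lambda^{-1/2}(\widehat{\CCov}-\CCov)\widehat{\D}_\lambda^{-1/2}$ to the population-whitened version via $\|\widehat{\D}_\lambda^{-1/2}\D_\lambda^{1/2}\|^2\le2$ doubles the first two summands, so even that part of your bound carries an extra factor $2$ relative to the statement (the paper instead applies Lemma~\ref{lemma:concentration_C} with $\D_\lambda$-whitening directly); and the final union-bound remark is off: the stated probability $1-2\delta$ comes from intersecting the two probability-$1-\delta$ Bernstein events without halving $\delta$, while the $4$'s inside the logarithms come from $\|\D\D_\lambda^{-1}\|\ge\tfrac12$ (which you had already used), not from replacing $\delta$ by $\delta/2$; halving $\delta$ would further enlarge the logarithmic constants beyond those claimed.
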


\subsubsection{Bound on the first term}

\begin{lemma}
\label{lemma:concentration_C}
For any $ \lambda > 0$, and any $\delta \in (0,1]$,
\begin{align*}
\left\|\D_\lambda^{-1/2}  (\widehat{\CCov}-\CCov) \D_\lambda^{-1/2}\right\|&\leqslant \frac{4 \Nla \log \frac{2\,\Poinca\mathrm{Tr } \Cov }{ \lambda \delta } }{3n} + \left[\frac{ 2\ \Poinca\ \Nla \log \frac{4\,\Poinca\mathrm{Tr } \Cov }{ \lambda \delta }}{n}\right]^{1/2},
\end{align*}
with probability at least $1-\delta$.
\end{lemma}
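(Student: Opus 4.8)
The plan is to apply the operator Bernstein inequality of Proposition~\ref{prop:Bernstein_operator} to the centered, self-adjoint random operators
\[
X_i := \D_\lambda^{-1/2}\bigl(K_{x_i}\otimes K_{x_i} - \Cov\bigr)\D_\lambda^{-1/2},
\]
so that $\frac1n\sum_i X_i = \D_\lambda^{-1/2}(\CCovn-\Cov)\D_\lambda^{-1/2}$ and $\E X_i = 0$. First I would verify the two moment hypotheses. For the almost-sure bound, write $X_i = Z_i - \E Z_i$ with $Z_i = (\D_\lambda^{-1/2}K_{x_i})\otimes(\D_\lambda^{-1/2}K_{x_i})$, a rank-one positive operator of norm $\|\D_\lambda^{-1/2}K_{x_i}\|^2 \le \Nla$; hence $\|X_i\| \le \Nla$, giving $T = \Nla$ (I would not try to improve the constant here, since the displayed bound only uses $T=\Nla$). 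For the variance proxy, I would bound $\E X_i^2 \preccurlyeq \E Z_i^2 \preccurlyeq \Nla\,\E Z_i = \Nla\,\D_\lambda^{-1/2}\Cov\D_\lambda^{-1/2} =: S$, using $Z_i^2 = \|\D_\lambda^{-1/2}K_{x_i}\|^2 Z_i \preccurlyeq \Nla Z_i$ and that centering only decreases the second moment in the PSD order ($\E X_i^2 = \E Z_i^2 - (\E Z_i)^2 \preccurlyeq \E Z_i^2$).

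Next I would control the two spectral quantities of $S$ that enter Proposition~\ref{prop:Bernstein_operator}: $\|S\|$ and $\operatorname{Tr} S$. For the operator norm, $\|S\| = \Nla\,\|\D_\lambda^{-1/2}\Cov\D_\lambda^{-1/2}\| \le \Nla\,\|\D^{-1/2}\Cov\D^{-1/2}\| = \Nla\,\Poinca$, where the last equality is exactly the variational identity for the Poincaré constant established in the proof of Proposition~\ref{prop:representation_diffusion} (namely $\|\D^{-1/2}\mathsf S^*\mathsf S\D^{-1/2}\|_\h = \Poinca$), together with the fact that replacing $\D$ by $\D_\lambda = \D+\lambda I$ only shrinks the operator. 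For the trace, $\operatorname{Tr} S = \Nla\operatorname{Tr}(\D_\lambda^{-1/2}\Cov\D_\lambda^{-1/2}) = \Nla\operatorname{Tr}(\D_\lambda^{-1}\Cov) \le \Nla\,\lambda^{-1}\operatorname{Tr}\Cov$ by cyclicity of the trace and $\D_\lambda^{-1}\preccurlyeq\lambda^{-1}I$. Plugging $T=\Nla$, $\|S\|\le\Nla\Poinca$, and the resulting $\beta = \log\frac{2\operatorname{Tr} S}{\|S\|\delta} \le \log\frac{2\Poinca\operatorname{Tr}\Cov}{\lambda\delta}$ into Eq.~\eqref{eq:concentration_operator} yields
\[
\left\|\D_\lambda^{-1/2}(\CCovn-\Cov)\D_\lambda^{-1/2}\right\| \le \frac{2\Nla\beta}{3n} + \sqrt{\frac{2\Nla\Poinca\,\beta}{n}},
\]
and absorbing the mild discrepancy between the $\beta$ in the two summands (one can freely enlarge $\beta$, e.g.\ to $\log\frac{4\Poinca\operatorname{Tr}\Cov}{\lambda\delta}$ inside the square root) into the slightly larger constants $4/3$ and the stated logarithm gives precisely the claimed inequality.

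I expect the only genuine subtlety to be the identification $\|\D_\lambda^{-1/2}\Cov\D_\lambda^{-1/2}\| \le \Poinca$: this is where the problem's structure (the $\mathsf S^*\mathcal L\mathsf S = \D$ embedding and the Poincaré variational principle) enters, and one must be careful that the inequality $\D_\lambda^{-1/2}\Cov\D_\lambda^{-1/2} \preccurlyeq \D^{-1/2}\Cov\D^{-1/2}$ holds in the right sense on $\h_0 = (\ker\D)^\perp$ (the appendix convention of conflating $\h_0$ with $\h$ makes this clean). Everything else — the rank-one norm and square computations, the PSD monotonicity of the second moment under centering, the trace bound via cyclicity — is routine. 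One small bookkeeping point: Proposition~\ref{prop:Bernstein_operator} requires $S$ trace-class, which holds since $\Cov$ is trace-class by the lemma in Section~\ref{subsec:operators} and $\D_\lambda^{-1/2}$ is bounded.
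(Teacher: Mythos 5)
Your route is the paper's route: the same centered operators $X_i=\D_\lambda^{-1/2}\bigl(K_{x_i}\otimes K_{x_i}-\Cov\bigr)\D_\lambda^{-1/2}$, the same operator Bernstein inequality (Proposition~\ref{prop:Bernstein_operator}), the same variance proxy $S=\Nla\,\D_\lambda^{-1/2}\Cov\D_\lambda^{-1/2}$ obtained from $Z_i^2\preccurlyeq\Nla Z_i$, and the same trace estimate $\mathrm{Tr}\,S\leqslant\Nla\lambda^{-1}\mathrm{Tr}\,\Cov$; your almost-sure bound $T=\Nla$ is even slightly sharper than the paper's $2\Nla$, and the identification $\|\D_\lambda^{-1/2}\Cov\D_\lambda^{-1/2}\|\leqslant\|\D^{-1/2}\Cov\D^{-1/2}\|=\Poinca$ is exactly what is needed for the prefactor of the square-root term.

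There is, however, a genuine gap in your control of the logarithmic factor. You deduce $\beta=\log\frac{2\,\mathrm{Tr}\,S}{\|S\|\,\delta}\leqslant\log\frac{2\,\Poinca\,\mathrm{Tr}\,\Cov}{\lambda\delta}$ from $\mathrm{Tr}\,S\leqslant\Nla\lambda^{-1}\mathrm{Tr}\,\Cov$ together with $\|S\|\leqslant\Nla\Poinca$. But $\|S\|$ sits in the denominator of $\beta$, so an upper bound on it points the wrong way: an upper bound on the numerator divided by an upper bound on the denominator does not bound the ratio. What is needed is a \emph{lower} bound on $\|\D_\lambda^{-1/2}\Cov\D_\lambda^{-1/2}\|$; without one, the linear Bernstein term $\frac{2T\beta}{3n}$ is not controlled by $\frac{4\Nla}{3n}\log\frac{2\,\Poinca\,\mathrm{Tr}\,\Cov}{\lambda\delta}$ (the square-root term is harmless since $\|S\|\beta$ remains bounded as $\|S\|\to0$, but the term linear in $\beta$ is not). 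The paper closes exactly this point by invoking Proposition~16 of \cite{pillaud2020learning}, which gives $\|\D_\lambda^{-1/2}\Cov\D_\lambda^{-1/2}\|=(\Poinca^\lambda_\mu)^{-1}\geqslant\Poinca^{-1}$, i.e.\ the regularized quantity stays above $1/\Poinca$, after which $\beta\leqslant\log\frac{2\,\Poinca\,\mathrm{Tr}\,\Cov}{\lambda\delta}$ does follow. To repair your argument you must add such a lower bound: either quote the same result, or use a cruder substitute such as $\|\D_\lambda^{-1/2}\Cov\D_\lambda^{-1/2}\|=\sup_{f\neq0}\langle f,\Cov f\rangle_\h/\langle f,\D_\lambda f\rangle_\h\geqslant\|\Cov\|/(\|\D\|+\lambda)$, at the price of a different constant inside the logarithm. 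Keep your upper bound $\|S\|\leqslant\Nla\Poinca$ as well, but only where it belongs, namely in the coefficient $\sqrt{2\|S\|\beta/n}$; the remaining steps (rank-one computations, PSD monotonicity under centering, the trace-class check) coincide with the paper's proof.
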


\begin{proof}[Proof of Lemma \ref{lemma:concentration_C}]
We apply some concentration inequality to the operator $\D_\lambda^{-1/2} \widehat{\CCov}\D_\lambda^{-1/2}$ whose mean is exactly $\D_\lambda^{-1/2} \CCov\D_\lambda^{-1/2}$. The calculation is the following:
\begin{align*}
\left\|\D_\lambda^{-1/2}  (\widehat{\CCov}-\CCov) \D_\lambda^{-1/2}\right\| &= \left\|\D_\lambda^{-1/2} \widehat{\CCov} \D_\lambda^{-1/2} - \D_\lambda^{-1/2} \CCov \D_\lambda^{-1/2} \right\| \\  
&= \left\| \frac{1}{n}\sum_{i=1}^n \left[ (\D_\lambda^{-1/2} K_{x_i})\otimes (\D_\lambda^{-1/2} K_{x_i}) - \D_\lambda^{-1/2} \Cov \D_\lambda^{-1/2} \right] \right\|.
\end{align*}
We use Proposition \ref{prop:Bernstein_operator}. To do this, we bound for $i \in \llbracket 1,n\rrbracket$ :
\begin{align*}
\left\|(\D_\lambda^{-1/2} K_{x_i})\otimes (\D_\lambda^{-1/2} K_{x_i}) - \D_\lambda^{-1/2} \Cov \D_\lambda^{-1/2}\right\| &\leqslant \left\| \D_{\lambda}^{-1/2} K_{x_i} \right\|^2_\h + \left\|\D_\lambda^{-1/2} \C \D_\lambda^{-1/2}\right\|\\
&\leqslant 2\Nla,
\end{align*}
and, for the second order moment,
\begin{align*}
\E &\left((\D_\lambda^{-1/2} K_{x_i})\otimes (\D_\lambda^{-1/2} K_{x_i}) - \D_\lambda^{-1/2} \Cov \D_\lambda^{-1/2}\right)^2 \\
&\hspace*{1cm}= \E \left[\left\| \D_{\lambda}^{-1/2} K_{x_i} \right\|^2_\h  (\D_\lambda^{-1/2} K_{x_i})\otimes (\D_\lambda^{-1/2} K_{x_i})\right] - \D_\lambda^{-1/2} \Cov \D_\lambda^{-1} \Cov \D_\lambda^{-1/2} \\
&\hspace*{1cm}\preccurlyeq \Nla \D_\lambda^{-1/2} \Cov \D_\lambda^{-1/2}.
\end{align*}
We conclude the proof by some estimation of the constant $\beta = \log \frac{2\,\mathrm{Tr } (\Cov \D_\lambda^{-1})}{\left\|\D_\lambda^{-1/2} \Cov \D_\lambda^{-1/2}\right\| \delta }$. To do this, we remark that, thanks to Proposition 16 of \cite{pillaud2020learning},
%
\begin{align*}
\label{eq:defaut_poinca}
  \left\|\D_\lambda^{-1/2} \Cov \D_\lambda^{-1/2}\right\| = (\Poinca^\lambda_\mu)^{-1} \geqslant \Poinca^{-1},
\end{align*}
and using $\mathrm{Tr } \C \D_\lambda^{-1} \leqslant \lambda^{-1} \mathrm{Tr } \Cov $, it holds $\beta \leqslant \log \frac{2\,\Poinca\mathrm{Tr } \Cov }{\lambda \delta }$. Therefore,
\begin{align*}
&\left\| \frac{1}{n}\sum_{i=1}^n \left[ (\D_\lambda^{-1/2} K_{x_i})\otimes (\D_\lambda^{-1/2} K_{x_i}) - \D_\lambda^{-1/2} \C \D_\lambda^{-1/2} \right] \right\| \\
&\hspace*{3cm}\leqslant \frac{4 \Nla \log \frac{2\,\Poinca\mathrm{Tr } \C }{\lambda \delta } }{3n} + \left[\frac{ 2\ {\Poinca^\lambda_\mu}\ \Nla \log \frac{2\, \Poinca\mathrm{Tr } \C }{ \lambda \delta }}{n}\right]^{1/2}.
\end{align*}
This concludes the proof of Lemma~\ref{lemma:concentration_C}.
\end{proof}

\subsubsection{Bound on the second term}

Here, we want to bound the term $\left \|(\widehat{\D}_\lambda^{-1/2} - \D_\lambda^{-1/2}) \CCov \widehat{\D}_\lambda^{-1/2} \right\|$. Let us work on it a little bit more.
\begin{align*}
    \left \|(\widehat{\D}_\lambda^{-1/2} - \D_\lambda^{-1/2}) \CCov \widehat{\D}_\lambda^{-1/2} \right\| &= \left \|\widehat{\D}_\lambda^{1/2}(\widehat{\D}_\lambda^{-1} - \D_\lambda^{-1}) \D_\lambda^{1/2} \CCov \widehat{\D}_\lambda^{-1/2} \right\| \\
    &= \left \|\widehat{\D}_\lambda^{-1/2}(\widehat{\D}_\lambda - \D_\lambda) \D_\lambda^{-1/2} \CCov \widehat{\D}_\lambda^{-1/2} \right\|  \\
    &= \left \|\widehat{\D}_\lambda^{-1/2}\D_\lambda^{1/2}\D_\lambda^{-1/2}(\widehat{\D}_\lambda - \D_\lambda) \D_\lambda^{-1/2} \CCov^{1/2} \CCov^{1/2} \D_\lambda^{-1/2}\D_\lambda^{1/2}\widehat{\D}_\lambda^{-1/2} \right\|  \\
    &\leq  \left \|\widehat{\D}_\lambda^{-1/2}\D_\lambda^{1/2} \right\| \left\|\D_\lambda^{-1/2}(\widehat{\D}_\lambda - \D_\lambda) \D_\lambda^{-1/2} \right\| \left\|\CCov^{1/2}\right\| \left\|\CCov^{1/2} \D_\lambda^{-1/2} \right\|  \left \|\widehat{\D}_\lambda^{-1/2}\D_\lambda^{1/2} \right\| \\
    &\leq  \left(\Poinca \left\|\CCov\right\|\right)^{1/2} \underbrace{\left\|\widehat{\D}_\lambda^{-1/2}\D_\lambda^{1/2} \right\|^2}_{\text{Lemma~\ref{lemma:magic_2}}} \underbrace{\left\|\D_\lambda^{-1/2}(\widehat{\D}_\lambda - \D_\lambda) \D_\lambda^{-1/2} \right\|}_{\text{Lemma~\ref{lemma:concentration_delta}}} . 
\end{align*}
Hence, we can formulate the principal result of this subsection:
\begin{lemma}
\label{lemma:concentration_left}
For any $0<\lambda < \|\D\| $, $\delta \in (0,1)$, and $n \geqslant 15 \Fla \log \frac{4\,\mathrm{Tr } \D }{\lambda \delta }$, it holds with probability at least $1-\delta$:
\begin{align*}
    \left \|(\widehat{\D}_\lambda^{-1/2} - \D_\lambda^{-1/2}) \CCov \widehat{\D}_\lambda^{-1/2} \right\| \leq 2 \left(\Poinca \left\|\CCov\right\|\right)^{1/2}  \left(\frac{4 \Fla \log \frac{4\,\mathrm{Tr } \D }{\lambda \delta }}{3n} + \sqrt{\frac{ 2\ \Fla \log \frac{4\,\mathrm{Tr } \D }{\lambda \delta }}{n}} \right),
\end{align*}
with probability at least $1-\delta$.
\end{lemma}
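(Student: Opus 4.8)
The plan is to build directly on the operator-algebra decomposition displayed just above the statement, which already reduces $\|(\widehat{\D}_\lambda^{-1/2} - \D_\lambda^{-1/2}) \CCov \widehat{\D}_\lambda^{-1/2} \|$ to the product of three factors: a purely deterministic factor $\|\CCov^{1/2}\|\,\|\CCov^{1/2}\D_\lambda^{-1/2}\|$, the ``relative spectral distortion'' factor $\|\widehat{\D}_\lambda^{-1/2}\D_\lambda^{1/2}\|^2$, and the concentration factor $\|\D_\lambda^{-1/2}(\widehat{\D}_\lambda - \D_\lambda)\D_\lambda^{-1/2}\|$. I would bound each separately; the deterministic one is essentially free, and the remaining two are exactly what Lemma~\ref{lemma:magic_2} and Lemma~\ref{lemma:concentration_delta} provide, so the proof is their assembly.

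The core estimate is the concentration factor (Lemma~\ref{lemma:concentration_delta}). Since the $\lambda I$ cancels, $\widehat{\D}_\lambda - \D_\lambda = \widehat{\D} - \D = \frac1n\sum_{i=1}^n(\nabla K_{x_i}\otimes_d\nabla K_{x_i} - \D)$, so I would apply the operator Bernstein inequality (Proposition~\ref{prop:Bernstein_operator}) to the i.i.d.\ centered self-adjoint operators $X_i := \D_\lambda^{-1/2}(\nabla K_{x_i}\otimes_d\nabla K_{x_i})\D_\lambda^{-1/2} - \D_\lambda^{-1/2}\D\D_\lambda^{-1/2}$. Writing $\nabla K_x\otimes_d\nabla K_x = W_x W_x^*$, where $W_x\colon\R^d\to\h$ is the operator with ``columns'' $\partial_i K_x$, the single elementary estimate that drives everything is $\|\D_\lambda^{-1/2}W_x\|_{\mathrm{op}}^2 \leq \|\D_\lambda^{-1/2}W_x\|_{\mathrm{HS}}^2 = \|\D_\lambda^{-1/2}\nabla K_x\|^2_\h \leq \Fla$. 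From it I get $\|X_i\|\leq 2\Fla$ (bounding each of the two self-adjoint pieces by $\Fla$), and, using $W_x(W_x^*\D_\lambda^{-1}W_x)W_x^* \preccurlyeq \Fla\,W_xW_x^*$ together with $\E(Y-\E Y)^2\preccurlyeq \E Y^2$, the second-moment domination $\E X_i^2 \preccurlyeq \Fla\,\D_\lambda^{-1/2}\D\D_\lambda^{-1/2}=:S$. Then $\|S\|=\Fla\,\|\D\|/(\|\D\|+\lambda)\geq \tfrac12\Fla$ precisely because $\lambda<\|\D\|$, and $\mathrm{Tr}\,S\leq \Fla\,\mathrm{Tr}\,\D/\lambda$, so the Bernstein logarithm is $\beta=\log\frac{2\,\mathrm{Tr}\,S}{\|S\|\delta}\leq \log\frac{4\,\mathrm{Tr}\,\D}{\lambda\delta}$, yielding $\|\D_\lambda^{-1/2}(\widehat{\D}_\lambda-\D_\lambda)\D_\lambda^{-1/2}\|\leq \frac{4\Fla\beta}{3n}+\sqrt{\frac{2\Fla\beta}{n}}$ with probability at least $1-\delta$.

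For the distortion factor (Lemma~\ref{lemma:magic_2}) I would use the identity $\|\widehat{\D}_\lambda^{-1/2}\D_\lambda^{1/2}\|^2 = \|\D_\lambda^{1/2}\widehat{\D}_\lambda^{-1}\D_\lambda^{1/2}\| = \|(I - \D_\lambda^{-1/2}(\D_\lambda-\widehat{\D}_\lambda)\D_\lambda^{-1/2})^{-1}\|$ followed by the Neumann bound $\|(I-B)^{-1}\|\leq (1-\|B\|)^{-1}$ for self-adjoint $B$ with $\|B\|<1$. The sample-size hypothesis $n\geq 15\,\Fla\log\frac{4\,\mathrm{Tr}\,\D}{\lambda\delta}$ is tuned exactly so that the previous paragraph forces $\|B\|=\|\D_\lambda^{-1/2}(\D_\lambda-\widehat{\D}_\lambda)\D_\lambda^{-1/2}\|\leq \frac{4}{45}+\sqrt{2/15}<\tfrac12$ on the same event, hence $\|\widehat{\D}_\lambda^{-1/2}\D_\lambda^{1/2}\|^2\leq 2$. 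For the deterministic factor, $\|\CCov^{1/2}\|=\|\CCov\|^{1/2}$ and $\|\CCov^{1/2}\D_\lambda^{-1/2}\|^2=\|\D_\lambda^{-1/2}\CCov\D_\lambda^{-1/2}\|\leq \|\D^{-1/2}\CCov\D^{-1/2}\|=\Poinca$, the inequality since $\D_\lambda\succcurlyeq\D$ (variational characterization of the sandwiched norm) and the last equality from Theorem~\ref{prop:eigen-elements_diffusion} and Remark~\ref{prop:link_poinca}. Multiplying the three bounds gives $2(\Poinca\|\CCov\|)^{1/2}\big(\frac{4\Fla\beta}{3n}+\sqrt{\frac{2\Fla\beta}{n}}\big)$, which is the claim.

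The step I expect to be the main obstacle is the linear-algebra bookkeeping around $\otimes_d$: getting the factorization $\nabla K_x\otimes_d\nabla K_x = W_x W_x^*$ right and checking that the Hilbert--Schmidt norm of $\D_\lambda^{-1/2}W_x$ is exactly $\|\D_\lambda^{-1/2}\nabla K_x\|_\h^2$, since that one inequality simultaneously controls the Bernstein parameters $T$ and $S$ and, through the threshold $n\geq 15\,\Fla\log(\cdot)$, the invertibility used in the Neumann step. The remainder is a routine application of Proposition~\ref{prop:Bernstein_operator} and a Neumann series, with only bookkeeping of constants (the factor $4$ inside the logarithm coming entirely from the bound $\|S\|\geq\tfrac12\Fla$, valid since $\lambda<\|\D\|$).
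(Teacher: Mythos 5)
Your proposal is correct and takes essentially the same route as the paper: the identical three-factor decomposition, the same operator-Bernstein bound on $\left\|\D_\lambda^{-1/2}(\widehat{\D}-\D)\D_\lambda^{-1/2}\right\|$ with the same control of $\beta$ via $\mathrm{Tr}\,\D/\lambda$ and $\lambda<\|\D\|$, and the same bound $\left\|\widehat{\D}_\lambda^{-1/2}\D_\lambda^{1/2}\right\|^2\leqslant 2$ under the stated sample-size condition, multiplied against $(\Poinca\|\CCov\|)^{1/2}$. The only cosmetic differences are that you re-derive the distortion bound by a direct spectral/Neumann argument instead of citing Lemma~\ref{lemma:auxi_lemma}, and you treat the rank-$d$ structure of $\nabla K_x\otimes_d\nabla K_x$ slightly more explicitly in the second-moment estimate.
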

And now we prove the auxiliary lemmas.
\begin{lemma}
\label{lemma:concentration_delta}
For any $0<\lambda < \|\D\| $ and any $\delta \in (0,1]$,
\begin{align*}
\left\| \D_\lambda^{-1/2}  (\widehat{\D}-\D) \D_\lambda^{-1/2} \right\| \leqslant \frac{4 \Fla \log \frac{4\,\mathrm{Tr } \D }{\lambda \delta }}{3n} + \sqrt{\frac{ 2\ \Fla \log \frac{4\,\mathrm{Tr } \D }{\lambda \delta }}{n}},
\end{align*}
with probability at least $1-\delta$.
\end{lemma}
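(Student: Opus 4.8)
The plan is to apply the operator Bernstein inequality, Proposition~\ref{prop:Bernstein_operator}, to the independent, identically distributed, self-adjoint, mean-zero operators
\[
X_i := \D_\lambda^{-1/2}\big(\nabla K_{x_i}\otimes_d\nabla K_{x_i}\big)\D_\lambda^{-1/2} - \D_\lambda^{-1/2}\D\D_\lambda^{-1/2},\qquad i\in\llbracket 1,n\rrbracket,
\]
whose empirical average is exactly $\D_\lambda^{-1/2}(\widehat{\D}-\D)\D_\lambda^{-1/2}$, the quantity to be controlled. The elementary fact I would record first is the conjugation identity $\D_\lambda^{-1/2}\big(\nabla K_x\otimes_d\nabla K_x\big)\D_\lambda^{-1/2}=(\D_\lambda^{-1/2}\nabla K_x)\otimes_d(\D_\lambda^{-1/2}\nabla K_x)$, obtained by pushing the self-adjoint factor $\D_\lambda^{-1/2}$ through the $\otimes_d$ product componentwise. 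Writing $Y_i:=(\D_\lambda^{-1/2}\nabla K_{x_i})\otimes_d(\D_\lambda^{-1/2}\nabla K_{x_i})$, this operator is positive with $\|Y_i\|\leqslant\|\D_\lambda^{-1/2}\nabla K_{x_i}\|_\h^2\leqslant\Fla$ by the very definition of $\Fla$, and $\E Y_i=\D_\lambda^{-1/2}\D\D_\lambda^{-1/2}$.

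From this the two Bernstein parameters follow. Since $Y_i\preccurlyeq\Fla\,I$ we also get $\E Y_i\preccurlyeq\Fla\,I$, hence the uniform bound $\|X_i\|\leqslant\|Y_i\|+\|\E Y_i\|\leqslant 2\Fla=:T$ almost surely. For the variance proxy, positivity and $\|Y_i\|\leqslant\Fla$ give $Y_i^2\preccurlyeq\Fla\,Y_i$ pointwise, so that, using $\E X_i^2=\E Y_i^2-(\E Y_i)^2\preccurlyeq\E Y_i^2$,
\[
\E X_i^2\preccurlyeq\Fla\,\E Y_i=\Fla\,\D_\lambda^{-1/2}\D\D_\lambda^{-1/2}=:S,
\]
which is positive and trace-class because $\D$ is. It then remains to estimate the logarithmic factor $\beta=\log\frac{2\,\mathrm{Tr}\,S}{\|S\|\delta}$: from $\D_\lambda^{-1}\preccurlyeq\lambda^{-1}I$ I get $\mathrm{Tr}\,S=\Fla\,\mathrm{Tr}(\D_\lambda^{-1}\D)\leqslant\Fla\,\mathrm{Tr}(\D)/\lambda$, while $\|S\|=\Fla\,\|\D_\lambda^{-1}\D\|=\Fla\,\|\D\|/(\|\D\|+\lambda)\geqslant\Fla/2$, the last inequality using $\lambda<\|\D\|$; combining these gives $\beta\leqslant\log\frac{4\,\mathrm{Tr}\,\D}{\lambda\delta}$. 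Plugging $T\leqslant 2\Fla$, $\|S\|\leqslant\Fla$ and this bound on $\beta$ into Eq.~\eqref{eq:concentration_operator} yields precisely the claimed inequality with probability at least $1-\delta$.

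The argument is otherwise routine. The one place that genuinely needs care is the $\otimes_d$ bookkeeping behind the conjugation identity and the norm bound $\|Y_i\|\leqslant\Fla$; and then the slightly fiddly verification that the trace and the norm of $S$ recombine into exactly the constant $4\,\mathrm{Tr}\,\D/(\lambda\delta)$ inside the logarithm, which is where the hypothesis $\lambda<\|\D\|$ enters (via $\|S\|\geqslant\Fla/2$).
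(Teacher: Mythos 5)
Your proof is correct and follows essentially the same route as the paper: center the conjugated rank-$\leqslant d$ operators $(\D_\lambda^{-1/2}\nabla K_{x_i})\otimes_d(\D_\lambda^{-1/2}\nabla K_{x_i})$, apply the operator Bernstein inequality with $T=2\Fla$ and $S=\Fla\,\D_\lambda^{-1/2}\D\D_\lambda^{-1/2}$, and bound $\beta$ via $\mathrm{Tr}(\D\D_\lambda^{-1})\leqslant\lambda^{-1}\mathrm{Tr}\,\D$ together with $\|\D\D_\lambda^{-1}\|\geqslant 1/2$ for $\lambda\leqslant\|\D\|$. The only (harmless) difference is cosmetic: you obtain the variance proxy from $Y_i^2\preccurlyeq\|Y_i\|\,Y_i\preccurlyeq\Fla Y_i$, which is in fact a slightly cleaner justification than the paper's direct second-moment computation.
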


\begin{proof}[Proof of Lemma \ref{lemma:concentration_delta}]
 
As in the proof of Lemma \ref{lemma:concentration_C}, we want to apply some concentration inequality to the operator $\D_\lambda^{-1/2} \widehat{\Delta}\D_\lambda^{-1/2}$, whose mean is exactly $\D_\lambda^{-1/2} \Delta\D_\lambda^{-1/2}$. The proof is almost the same as Lemma~\ref{lemma:concentration_C}. We start by writing
\begin{align*}
\left\|\D_\lambda^{-1/2}  (\widehat{\D}-\D) \D_\lambda^{-1/2}\right\| &= \left\|\D_\lambda^{-1/2} \widehat{\D} \D_\lambda^{-1/2} - \D_\lambda^{-1/2} \D \D_\lambda^{-1/2} \right\| \\ 
&= \left\| \frac{1}{n}\sum_{i=1}^n \left[ (\D_\lambda^{-1/2} \nabla K_{x_i})\otimes (\D_\lambda^{-1/2}\nabla K_{x_i}) - \D_\lambda^{-1/2} \Delta \D_\lambda^{-1/2} \right] \right\|. 
\end{align*}
In order to use Proposition \ref{prop:Bernstein_operator}, we bound for $i \in \llbracket 1,n\rrbracket$,
\begin{align*}
\left\|(\D_\lambda^{-1/2} \nabla K_{x_i})\otimes (\D_\lambda^{-1/2} \nabla K_{x_i}) - \D_\lambda^{-1/2} \D \D_\lambda^{-1/2}\right\| &\leqslant \left\| \D_{\lambda}^{-1/2}\nabla K_{x_i} \right\|^2_\h + \left\|\D_\lambda^{-1/2} \D \D_\lambda^{-1/2}\right\| \\
 &\leqslant 2\Fla,
\end{align*}
and, for the second order moment,
\begin{align*}
&\E \left[\left((\D_\lambda^{-1/2} \nabla K_{x_i})\otimes (\D_\lambda^{-1/2} \nabla K_{x_i}) - \D_\lambda^{-1/2} \D \D_\lambda^{-1/2}\right)^2\right] \\
&= \E \left[\left\| \D_{\lambda}^{-1/2} \nabla K_{x_i} \right\|^2_\h  (\D_\lambda^{-1/2} \nabla K_{x_i})\otimes (\D_\lambda^{-1/2} \nabla K_{x_i})\right] - \D_\lambda^{-1/2} \D \D_\lambda^{-1} \D \D_\lambda^{-1/2} \\
& \preccurlyeq \Fla \D_\lambda^{-1/2} \D \D_\lambda^{-1/2}.
\end{align*}
We conclude by some estimation of $\beta = \log \frac{2\,\mathrm{Tr } (\D \D_\lambda^{-1})}{\left\|\D_\lambda^{-1} \D \right\| \delta }$. Since $\mathrm{Tr } (\D \D_\lambda^{-1}) \leqslant \lambda^{-1} \mathrm{Tr } \D $ and for $\lambda \leqslant \|\D\|$, $\left\|\D_\lambda^{-1} \D \right\| \geqslant 1/2$, it follows that $\beta \leqslant \log \frac{4\,\mathrm{Tr } \D }{\lambda \delta }$. The conclusion then follows from \eqref{eq:concentration_operator}.
\end{proof}

\begin{lemma}[Bounding operators]
\label{lemma:magic_2}
For any $\lambda > 0$, $\delta \in (0,1)$, and $n \geqslant 15 \Fla \log \frac{4\,\mathrm{Tr } \D }{\lambda \delta }$, it holds with probability at least $1-\delta$:
\begin{align*}
\left\|\widehat{\D}_\lambda^{-1/2} \D_\lambda^{1/2} \right\|^2 \leqslant 2,
\end{align*}
\end{lemma}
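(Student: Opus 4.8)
The statement $\|\widehat{\D}_\lambda^{-1/2}\D_\lambda^{1/2}\|^2 \le 2$ is a standard "empirical-to-population operator comparison" bound, of the type ubiquitous in the kernel ridge regression literature. The plan is to rewrite the operator norm in a more tractable form and then reduce everything to controlling $\|\D_\lambda^{-1/2}(\widehat\D-\D)\D_\lambda^{-1/2}\|$, which is exactly Lemma~\ref{lemma:concentration_delta}. First I would use the identity $\|\widehat\D_\lambda^{-1/2}\D_\lambda^{1/2}\|^2 = \|\D_\lambda^{1/2}\widehat\D_\lambda^{-1}\D_\lambda^{1/2}\|$ (since $B^*B$ and $BB^*$ have the same norm, with $B=\widehat\D_\lambda^{-1/2}\D_\lambda^{1/2}$), and then note that $\D_\lambda^{1/2}\widehat\D_\lambda^{-1}\D_\lambda^{1/2} = \bigl(\D_\lambda^{-1/2}\widehat\D_\lambda\D_\lambda^{-1/2}\bigr)^{-1}$, so the quantity we must bound is $\bigl\|\bigl(\D_\lambda^{-1/2}\widehat\D_\lambda\D_\lambda^{-1/2}\bigr)^{-1}\bigr\|$, i.e., the reciprocal of the smallest eigenvalue of $\D_\lambda^{-1/2}\widehat\D_\lambda\D_\lambda^{-1/2}$.

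Next I would write $\D_\lambda^{-1/2}\widehat\D_\lambda\D_\lambda^{-1/2} = I - \D_\lambda^{-1/2}(\D-\widehat\D)\D_\lambda^{-1/2}$, using $\widehat\D_\lambda - \D_\lambda = \widehat\D - \D$. If we can show that the symmetric operator $t := \D_\lambda^{-1/2}(\D-\widehat\D)\D_\lambda^{-1/2}$ satisfies $\|t\| \le 1/2$, then $\D_\lambda^{-1/2}\widehat\D_\lambda\D_\lambda^{-1/2} \succcurlyeq \tfrac12 I$, hence its inverse has norm at most $2$, which is exactly the claim. So the whole thing reduces to: under $n \ge 15\,\Fla\log\frac{4\,\mathrm{Tr}\,\D}{\lambda\delta}$, with probability at least $1-\delta$, $\|\D_\lambda^{-1/2}(\widehat\D-\D)\D_\lambda^{-1/2}\| \le 1/2$.

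To close this, I would invoke Lemma~\ref{lemma:concentration_delta}, which gives the bound $\tfrac{4\Fla}{3n}\log\frac{4\,\mathrm{Tr}\,\D}{\lambda\delta} + \sqrt{\tfrac{2\Fla}{n}\log\frac{4\,\mathrm{Tr}\,\D}{\lambda\delta}}$. Writing $\beta := \log\frac{4\,\mathrm{Tr}\,\D}{\lambda\delta}$ and using the hypothesis $n \ge 15\,\Fla\,\beta$, the first term is at most $\tfrac{4}{45} \le \tfrac{1}{6}$ and the second is at most $\sqrt{2/15} \le \tfrac{1}{3}$, so the sum is at most $\tfrac{1}{6}+\tfrac{1}{3} = \tfrac12$, as required. (One should double-check the numerical constant $15$ is chosen precisely so these two pieces sum to $1/2$; this is the only genuinely delicate bookkeeping.)

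**Main obstacle.** There is no deep obstacle here — the content is entirely in Lemma~\ref{lemma:concentration_delta}, which is already established. The only points requiring care are (i) the operator-algebra identity $\|\widehat\D_\lambda^{-1/2}\D_\lambda^{1/2}\|^2 = \bigl\|\bigl(\D_\lambda^{-1/2}\widehat\D_\lambda\D_\lambda^{-1/2}\bigr)^{-1}\bigr\|$, which needs the fact that $\D_\lambda$ and $\widehat\D_\lambda$ are strictly positive (true since $\lambda>0$) so all the square roots and inverses are well-defined bounded operators, and (ii) verifying that the sample-size threshold $15\,\Fla\,\beta$ indeed forces the concentration bound below $1/2$. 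Both are routine; the argument is short.
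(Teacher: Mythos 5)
Your proof is correct and follows essentially the same route as the paper: both arguments reduce the claim to showing $\left\|\D_\lambda^{-1/2}(\widehat{\D}-\D)\D_\lambda^{-1/2}\right\|\leqslant \tfrac12$ via Lemma~\ref{lemma:concentration_delta} under the sample-size condition, the only difference being that you re-derive the operator-comparison inequality inline (through $\|B\|^2=\|B^*B\|$ and the positivity argument $\D_\lambda^{-1/2}\widehat{\D}_\lambda\D_\lambda^{-1/2}\succcurlyeq\tfrac12 I$) where the paper instead invokes Lemma~\ref{lemma:auxi_lemma} cited from \cite{rudi2017generalization}. One bookkeeping remark: your intermediate split is slightly off, since $\sqrt{2/15}\approx 0.365>\tfrac13$, but the conclusion stands because $\tfrac{4}{45}+\sqrt{2/15}\approx 0.454\leqslant\tfrac12$.
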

The proof of this result relies on the following lemma (see proof by \cite[Proposition 8]{rudi2017generalization}).
\begin{lemma}
\label{lemma:auxi_lemma}
Let $\h$ be a separable Hilbert space, $A$ and $B$ two bounded self-adjoint positive linear operators on $\h$ and $\lambda > 0$. Then
\begin{align*}
\left\|(A+\lambda I )^{-1/2} (B+\lambda I)^{1/2}\right\|\leqslant (1-\beta)^{-1/2},
\end{align*}
with $\beta = \lambda_{\rm{max}}\left((B+\lambda I)^{-1/2} (B-A) (B+\lambda I)^{-1/2}\right) < 1$, where $\lambda_{\rm{max}}(O)$ is the largest eigenvalue of the self-adjoint operator $O$.
\end{lemma}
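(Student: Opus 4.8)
The plan is to reduce this operator-norm bound to a Rayleigh-quotient comparison between the quadratic forms of $A+\lambda I$ and $B+\lambda I$. Write $A_\lambda := A+\lambda I$ and $B_\lambda := B+\lambda I$; since $\lambda>0$ and $A,B$ are positive and bounded, both operators are bounded, self-adjoint and bounded below by $\lambda I$ in the operator order, hence are invertible with bounded inverses and admit bounded, invertible square roots. First I would use the identity $\|M\|^2=\|MM^{*}\|$ with $M = A_\lambda^{-1/2}B_\lambda^{1/2}$ (for which $M^{*}=B_\lambda^{1/2}A_\lambda^{-1/2}$) to write
\begin{align*}
\left\|A_\lambda^{-1/2}B_\lambda^{1/2}\right\|^2 = \left\|A_\lambda^{-1/2}B_\lambda A_\lambda^{-1/2}\right\|.
\end{align*}

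Next, since $A_\lambda^{-1/2}B_\lambda A_\lambda^{-1/2}$ is positive and self-adjoint, I would express its norm as the supremum of its Rayleigh quotient and perform the change of variable $y = A_\lambda^{-1/2}x$ — a bijection of $\h$, because $A_\lambda^{1/2}$ is invertible — to obtain
\begin{align*}
\left\|A_\lambda^{-1/2}B_\lambda A_\lambda^{-1/2}\right\| = \sup_{y\neq 0}\frac{\langle B_\lambda y,y\rangle}{\langle A_\lambda y,y\rangle},
\end{align*}
the denominator being $\geqslant\lambda\|y\|^2>0$ for $y\neq 0$. It then remains only to show $\langle A_\lambda y,y\rangle\geqslant(1-\beta)\langle B_\lambda y,y\rangle$ for all $y$.

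For that last step I would write $A_\lambda = B_\lambda-(B-A)$, so that $\langle A_\lambda y,y\rangle = \langle B_\lambda y,y\rangle - \langle (B-A)y,y\rangle$, and then substitute $z = B_\lambda^{1/2}y$ to recognize $\langle (B-A)y,y\rangle = \langle C z,z\rangle$ with $C := B_\lambda^{-1/2}(B-A)B_\lambda^{-1/2}$ self-adjoint and $\lambda_{\rm{max}}(C)=\beta$; by the definition of $\lambda_{\rm{max}}$ this is $\leqslant\beta\|z\|^2=\beta\langle B_\lambda y,y\rangle$. Combining the two displays gives $\langle A_\lambda y,y\rangle\geqslant(1-\beta)\langle B_\lambda y,y\rangle$, and since $\beta<1$ the supremum above is at most $(1-\beta)^{-1}$; taking a square root finishes the argument.

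I do not expect a genuine obstacle — the proof is a chain of standard operator manipulations. The only points needing care are: using $\lambda>0$ to ensure every inverse and square root is a legitimate bounded operator (and that $M$ is bounded, so the identity $\|M\|^2=\|MM^{*}\|$ applies); checking that $y=A_\lambda^{-1/2}x$ sweeps out all of $\h\setminus\{0\}$ so the two suprema coincide; and noting that $B-A$ need not be positive, so only the top eigenvalue $\lambda_{\rm{max}}(C)$ — read, in infinite dimensions, as the supremum of the spectrum, i.e. $\sup_{\|z\|=1}\langle Cz,z\rangle$ — and not $\|C\|$, enters the final bound.
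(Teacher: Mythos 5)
Your proof is correct. Note that the paper never proves Lemma~\ref{lemma:auxi_lemma} itself; it invokes it by citation (Proposition 8 of \cite{rudi2017generalization}), and your argument is essentially the standard proof of that cited result: the $C^*$-identity $\|M\|^2=\|MM^*\|$, the Rayleigh-quotient substitution $x=A_\lambda^{1/2}y$, and the estimate $\langle (B-A)y,y\rangle \leqslant \beta\,\langle B_\lambda y,y\rangle$ are the quadratic-form phrasing of the operator inequalities $B-A\preccurlyeq \beta B_\lambda$, hence $(1-\beta)B_\lambda\preccurlyeq A_\lambda$, hence $B_\lambda^{1/2}A_\lambda^{-1}B_\lambda^{1/2}\preccurlyeq(1-\beta)^{-1}I$. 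The only point you take as given is $\beta<1$, which the statement presents as part of the conclusion rather than as a hypothesis; it is automatic here and worth one line: since $0\preccurlyeq A$ and $B=B_\lambda-\lambda I$, one has $B-A\preccurlyeq B_\lambda-\lambda I$, so $B_\lambda^{-1/2}(B-A)B_\lambda^{-1/2}\preccurlyeq I-\lambda B_\lambda^{-1}\preccurlyeq\bigl(1-\tfrac{\lambda}{\lambda+\|B\|}\bigr)I$, giving $\beta\leqslant 1-\lambda/(\lambda+\|B\|)<1$ because $B$ is bounded. With that line added, your reading of $\lambda_{\mathrm{max}}$ as the supremum of the spectrum (so that $\langle Cz,z\rangle\leqslant\beta\|z\|^2$ even though $B-A$ need not be positive and the supremum need not be attained) is exactly right, and the argument is complete.
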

We can now write the proof of Lemma~\ref{lemma:magic_2}.
\begin{proof}[Proof of Lemma \ref{lemma:magic_2}]
Thanks to Lemma \ref{lemma:auxi_lemma}, we see that $$\left\|\widehat{\D}_\lambda^{-1/2} \D_\lambda^{1/2} \right\|^2 \leqslant \left(1-\lambda_{\mathrm{max}}\left(\D_\lambda^{-1/2} (\widehat{\D} - \D) \D_\lambda^{-1/2}\right)\right)^{-1},$$ and as $ \left\|\D_\lambda^{-1/2} (\widehat{\D} - \D) \D_\lambda^{-1/2}\right\| < 1 $, we have: $$\left\|\widehat{\D}_\lambda^{-1/2} \D_\lambda^{1/2} \right\|^2 \leqslant \left(1-\left\|\D_\lambda^{-1/2} (\widehat{\D} - \D) \D_\lambda^{-1/2}\right\|\right)^{-1}.$$ We can then apply the bound of Lemma \ref{lemma:concentration_delta} to obtain that, if $\lambda$ is such that $\frac{4 \Fla \log \frac{4\,\mathrm{Tr } \D }{\lambda \delta }}{3n} + \sqrt{\frac{ 2\ \Fla \log \frac{4\,\mathrm{Tr } \D }{\lambda \delta }}{n}} \leqslant \frac{1}{2}$, then $\left\|\widehat{\D}_\lambda^{-1/2} \D_\lambda^{1/2} \right\|^2 \leqslant 2$ with probability $1-\delta$. The condition on $\lambda$ is satisfied when $n \geqslant 15 \Fla \log \frac{4\,\mathrm{Tr } \D }{\lambda \delta }$.
\end{proof}

\subsubsection{Bound on the third and last term}

Here, we want to bound the term $\left \|\D_\lambda^{-1/2} \CCov (\widehat{\D}_\lambda^{-1/2} -  \D_\lambda^{-1/2}) \right\|$. Let us apply the same tricks as previously.
\begin{align*}
    \left \|\D_\lambda^{-1/2} \CCov (\widehat{\D}_\lambda^{-1/2} -  \D_\lambda^{-1/2}) \right\| &= \left \|\D_\lambda^{-1/2} \CCov \D_\lambda^{-1/2} (\D - \widehat{\D}) \widehat{\D}_\lambda^{-1/2} \right\| \\
    &\leq  \left(\Poinca \left\|\CCov\right\|\right)^{1/2}  \left\|\widehat{\D}_\lambda^{-1/2}\D_\lambda^{1/2} \right\|^2\left\|\D_\lambda^{-1/2}(\widehat{\D}_\lambda - \D_\lambda) \D_\lambda^{-1/2} \right\| . 
\end{align*}
Hence, the same bound as Lemma \ref{lemma:concentration_left} applies!

\section{The bias term}

\subsection{Proof of the consistency: Proposition~\texorpdfstring{\ref{prop:consistency_proved}}{f}}

To prove Proposition~\ref{prop:consistency_proved}, we first need a general result on operator norm convergence.
\begin{lemma}
\label{le:compact_convergence}
Let $\h$ be a Hilbert space and suppose that $(A_n)_{n \geqslant 0}$ is a family of bounded operators such that $\forall n \in \N$, $\|A_n\| \leqslant 1$ and $\forall f \in \h$, $A_n f \xrightarrow{n\to\infty} A f$. Suppose also that $B$ is a compact operator. Then, in operator norm, $$ A_n B A_n^* \xrightarrow{n\to\infty} A B A^*.$$
\end{lemma}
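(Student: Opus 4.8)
The plan is to reduce the whole statement to one elementary fact: \emph{if $C$ is a compact operator on $\mathcal{H}$ and $(T_n)$ is a uniformly bounded sequence of bounded operators with $T_n f \to Tf$ for every $f$, then $\|T_n C - TC\| \to 0$ in operator norm.} I would prove this by contradiction. If the norms did not go to $0$, there would be $\varepsilon>0$, a subsequence, and unit vectors $f_k$ with $\|(T_{n_k}-T)Cf_k\|>\varepsilon$; since $C$ is compact and $(f_k)$ is bounded, a further subsequence makes $Cf_k \to g$ in norm, and then
\[
\|(T_{n_k}-T)Cf_k\| \leq \big(\sup_n\|T_n\|+\|T\|\big)\,\|Cf_k-g\| + \|(T_{n_k}-T)g\|,
\]
where the first term vanishes by the norm convergence $Cf_k\to g$ and the second by strong convergence evaluated at the \emph{fixed} vector $g$ — contradicting $>\varepsilon$. (Equivalently one could approximate $C$ in norm by finite-rank operators, for which the claim is trivial, but the subsequence argument is cleanest.) Note that here $\|A\|\le 1$ automatically, since $\|Af\| = \lim_n\|A_nf\| \le \|f\|$, so every operator appearing below is bounded by $1$.

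Next I would use the algebraic identity
\[
A_n B A_n^* - A B A^* \;=\; (A_n - A)\,B A^* \;+\; A_n\,B\,(A_n^* - A^*).
\]
For the first summand, $BA^*$ is compact, being the product of the compact operator $B$ with the bounded operator $A^*$; applying the elementary fact with $C = BA^*$, $T_n = A_n$, $T = A$ gives $\|(A_n-A)BA^*\| \to 0$.

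For the second summand the subtlety — and the only real obstacle — is that strong convergence of $A_n$ does \emph{not} imply strong convergence of the adjoints $A_n^*$, so the elementary fact cannot be applied to $A_n^*$ directly. The remedy is to take adjoints and exploit adjoint-invariance of the operator norm: $B^*$ is compact (the adjoint of a compact operator is compact), and
\[
\|A_n B (A_n^* - A^*)\| \;=\; \|(A_n - A)\,B^*\,A_n^*\| \;\leq\; \|A_n^*\|\,\|(A_n - A)B^*\| \;\leq\; \|(A_n - A)B^*\|,
\]
which tends to $0$ by the elementary fact with $C = B^*$. Combining the two estimates with the triangle inequality yields $\|A_n B A_n^* - A B A^*\| \to 0$, which is exactly the claim. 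The whole argument is thus "strong convergence upgrades to norm convergence against a compact operator," used twice, with the adjoint trick of parking $A_n^*$ on the far right (where its boundedness by $1$ is all that is needed) making the non-self-adjoint case go through.
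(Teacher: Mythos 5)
Your proof is correct, but it takes a different route from the paper's. The paper approximates $B$ in operator norm by a finite-rank operator $B_{n_\varepsilon}=\sum_{i\le n_\varepsilon} b_i \langle f_i,\cdot\rangle g_i$ and observes that $A_n B_{n_\varepsilon} A_n^*=\sum_i b_i \langle A_n f_i,\cdot\rangle A_n g_i$ converges in norm to $A B_{n_\varepsilon} A^*$ because only finitely many fixed vectors $f_i, g_i$ are involved; note that in this representation the adjoint $A_n^*$ only ever acts through $\langle f_i, A_n^* h\rangle=\langle A_n f_i,h\rangle$, so the fact that $A_n^*\to A^*$ merely weakly (not strongly) never has to be confronted. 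You instead isolate a reusable lemma -- strong convergence of a uniformly bounded sequence upgrades to norm convergence when composed with a compact operator, proved by a subsequence/compactness argument rather than finite-rank approximation -- and then handle the two summands of the algebraic splitting $A_nBA_n^*-ABA^*=(A_n-A)BA^*+A_nB(A_n^*-A^*)$ separately; your decomposition does force you to face the adjoint issue, which you resolve correctly with the identity $\|A_nB(A_n^*-A^*)\|=\|(A_n-A)B^*A_n^*\|\le\|(A_n-A)B^*\|$ together with Schauder's theorem that $B^*$ is compact. Both arguments are sound: yours is more modular and yields a statement of independent use, at the price of invoking compactness of $B^*$ and a contradiction/subsequence extraction; the paper's is more self-contained, working directly from the singular value decomposition of $B$ and never needing Schauder or any statement about $A_n^*$. (Incidentally, your observation that $\|A\|\le 1$ follows from $\|Af\|=\lim_n\|A_nf\|$ is a small point the paper uses implicitly as well.)
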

\begin{proof}
Let $\varepsilon > 0$. As $B$ is compact, it can be approximated by a finite rank operator $B_{_{n_\varepsilon}} = \sum_{i=1}^{n_\varepsilon} b_i \langle f_i, \cdot \rangle g_i $, where  $(f_{i})_i$  and $(g_{i})_i$  are orthonormal bases, and  $(b_{i})_i$  is a sequence of nonnegative numbers with limit zero (singular values of the operator). More precisely, $n_\varepsilon$ is chosen so that 
$$ \| B - B_{_{n_\varepsilon}} \| \leqslant \frac{\varepsilon}{2}.$$
Moreover, $\varepsilon$ being fixed, $A_n B_{_{n_\varepsilon}} A_n^* = \sum_{i=1}^{n_\varepsilon} b_i \langle A_n f_i, \cdot \rangle A_n g_i \underset{n\infty}{\longrightarrow} \sum_{i=1}^{n_\varepsilon} b_i \langle A f_i, \cdot \rangle A g_i = A B_{_{n_\varepsilon}} A^* $ in operator norm, so that, for $n \geqslant N_\varepsilon$, with $N_\varepsilon \geqslant n_\varepsilon$ sufficiently large, $\|A_n B_{_{n_\varepsilon}} A_n^* - A B_{_{n_\varepsilon}} A^*\| \leqslant \frac{\varepsilon}{2}$. Finally, as $\|A\| \leqslant 1$, it holds, for $n \geqslant N_\varepsilon$
\begin{align*}
\|A_n B_{_{n_\varepsilon}} A_n^* - A B A^*\| &\leqslant \| A_n B_{_{n_\varepsilon}} A_n^* - A B_{_{n_\varepsilon}} A^* \| + \| A ( B_{_{n_\varepsilon}} - B) A^* \| \\
& \leqslant   \| A_n B_{_{n_\varepsilon}} A_n^* - A B_{_{n_\varepsilon}} A^* \| + \| B_{_{n_\varepsilon}} - B \| \leqslant \varepsilon.
 \end{align*} 
This proves the convergence in operator norm of $A_n B A_n^*$ to $A B A^*$ when $n$ goes to infinity.
\end{proof}
We can now prove Proposition~\ref{prop:consistency_proved}.
\begin{proof}[Proof of Proposition~\ref{prop:consistency_proved}]
Denoting by $B = \D^{-1/2}\CCov\D^{-1/2}$ and by $A_\lambda = \D_\lambda^{-1/2}\D^{1/2}$ both defined on $\h_0$, we have $\D_\lambda^{-1/2}\CCov\D_\lambda^{-1/2} = A_\lambda B A_\lambda^*$ with $B$ compact and $\|A_\lambda\| \leqslant 1$. Furthermore, let $(\phi_i)_{i \in \N}$ be an orthonormal family of eigenvectors of the compact operator $\D$ associated to eigenvalues $(\nu_i)_{i \in \N}$. Then we can write, for any $ f \in \h_0$, $$A_\lambda f = \D_\lambda^{-1/2}\D^{1/2} f = \sum_{i=0}^\infty \sqrt{\frac{ \nu_i}{\lambda + \nu_i}} \langle f, \phi_i\rangle_\h \, \phi_i \underset{\lambda \rightarrow 0}{\longrightarrow} f. $$ Hence by applying Lemma~\ref{le:compact_convergence}, we have the convergence in operator norm of $\D_\lambda^{-1/2}\CCov\D_\lambda^{-1/2}$ to $\D^{-1/2}\CCov\D^{-1/2}$.
\end{proof}

\subsection{Fast rates under source condition: \texorpdfstring{Proposition~\ref{prop:bias_fast_rates}}{f}}

\begin{proof}[Proof of Proposition~\ref{prop:bias_fast_rates}] To show Proposition~\ref{prop:bias_fast_rates}, we simply bound the bias term according to the following inequalities.
\begin{align*}
    \left\|\Pi^p \left(\D_\lambda^{-1/2} \CCov \D_\lambda^{-1/2} - \D^{-1/2} \CCov \D^{-1/2}\right) \Pi^p\right\| &\leq \left\|\Pi^p \left(\D_\lambda^{-1/2} - \D^{-1/2}\right) \CCov \D_\lambda^{-1/2} \Pi^p\right\| \\
    &\hspace{3.79cm}+ \left\|\Pi^p  \D^{-1/2} \CCov \left(\D_\lambda^{-1/2} - \D^{-1/2}\right) \Pi^p\right\| \\
     &\leq \left\|\Pi^p \D_\lambda^{-1/2} \D^{-1/2} \CCov \D^{-1/2} \Pi^p\right\| + \left\|\Pi^p  \D^{-1/2} \CCov \D^{-1/2}\D_\lambda^{-1/2} \Pi^p\right\| \\
      &\leq \left\|\Pi^p \D^{-1/2} \D^{-1/2} \CCov \D^{-1/2} \Pi^p\right\| + \left\|\Pi^p  \D^{-1/2} \CCov \D^{-1/2}\D^{-1/2} \Pi^p\right\| \\
      &\leq 2 \Poinca  \left\|\Pi^p \D^{-1/2} \Pi^p\right\|,
\end{align*}
which finally proves Proposition~\ref{prop:bias_fast_rates}.
\end{proof}
Of course, under refined a priori on how smooth are the eigenvectors of $\mathcal{L}$, i.e., on control like $\left\|\Pi^p \D^{-\theta} \Pi^p\right\|$, for $\theta \in [0,1]$ that generalize the \textit{source conditions} we used, we could get finer-grained rates~\cite{pillaud2018statistical, berthier2020tight,varre2021last}.

\end{document}